\documentclass{article}

\usepackage{microtype}
\usepackage{makecell}
\usepackage{graphicx}
\usepackage{subfigure}
\usepackage{adjustbox}
\usepackage{multirow}
\usepackage{multicol}
\usepackage{booktabs} 
\usepackage{enumitem}

\usepackage{hyperref}



\usepackage[accepted]{icml2025}

\usepackage{ragged2e}
\usepackage{setspace}

\usepackage{amsmath}
\usepackage{amssymb}
\usepackage{mathtools}
\usepackage{amsthm}
\usepackage{bbm}
\usepackage{dsfont}
\usepackage{bm}
\usepackage{caption}
\captionsetup[algorithm]{font=small}
\captionsetup[table]{font=small} 
\captionsetup[figure]{font=small} 

\setlength{\abovecaptionskip}{0pt} 
\setlength{\belowcaptionskip}{0pt}
\captionsetup[table]{skip=1pt}

\usepackage[capitalize,noabbrev]{cleveref}

\usepackage{mdframed}
\definecolor{magenta1}{cmyk}{0, 0.3, 0, 0}
\definecolor{cyan1}{cmyk}{0.3, 0, 0, 0}
\newcommand{\magentabg}[1]{\colorbox{magenta1}{#1}}
\newcommand{\cyanbg}[1]{\colorbox{cyan1}{#1}}

\definecolor{FPTcolor}{cmyk}{0, 0.85, 0.85, 0.50}
\newcommand{\FPT}{{\normalfont\texttt{\color{FPTcolor}FPT}}}
\newcommand{\FPTone}{{\normalfont\texttt{\color{FPTcolor}FPT1}}}
\newcommand{\FPTtwo}{{\normalfont\texttt{\color{FPTcolor}FPT2}}}
\newcommand{\grad}{{\normalfont\texttt{grad}}}
\newcommand{\err}{\textnormal{err}}

\usepackage{pgf, tikz, tikz-cd, pgfplots}
\newdimen\nodeDist
\nodeDist=25mm


\newcommand{\bbE}{\mathbb E}

\newcommand{\bbI}{\mathbb I}

\newcommand{\bbP}{\mathbb P}	

\newcommand{\bbR}{\mathbb R}


\newcommand\cB{\mathcal B}

\newcommand\cF{\mathcal F}

\newcommand\cI{\mathcal I}
\newcommand\cJ{\mathcal J}

\newcommand\cL{\mathcal L}

\newcommand\cN{\mathcal N}
\newcommand\cO{\mathcal O}

\newcommand\cQ{\mathcal Q}
\newcommand\cR{\mathcal R}
\newcommand\cS{\mathcal S}
\newcommand\cT{\mathcal T}
\newcommand\cU{\mathcal U}

\newcommand\cX{\mathcal X}



\newcommand{\norm}[1]{\left\lVert#1\right\rVert}

\DeclareMathOperator*{\argmax}{arg\,max}
\DeclareMathOperator*{\argmin}{arg\,min}


\newcommand{\var}{\text{Var}}

\newcommand{\cov}{\text{Cov}}



\DeclareFontEncoding{FMS}{}{}
\DeclareFontSubstitution{FMS}{futm}{m}{n}
\DeclareFontEncoding{FMX}{}{}
\DeclareFontSubstitution{FMX}{futm}{m}{n}
\DeclareSymbolFont{fouriersymbols}{FMS}{futm}{m}{n}
\DeclareSymbolFont{fourierlargesymbols}{FMX}{futm}{m}{n}
\DeclareMathDelimiter{\VERT}{\mathord}{fouriersymbols}{152}{fourierlargesymbols}{147}


\theoremstyle{plain}
\newtheorem{theorem}{Theorem}[section]
\newtheorem{proposition}[theorem]{Proposition}
\newtheorem{lemma}[theorem]{Lemma}

\theoremstyle{definition}

\theoremstyle{remark}

\usepackage[textsize=tiny]{todonotes}

\icmltitlerunning{Generalized Random Forests using Fixed-Point Trees}

\begin{document}

\twocolumn[
\icmltitle{Generalized Random Forests using Fixed-Point Trees}




\icmlsetsymbol{equal}{*}
\begin{icmlauthorlist}
\icmlauthor{David Fleischer}{equal,math}
\icmlauthor{David A. Stephens}{equal,math}
\icmlauthor{Archer Y. Yang}{equal,math,mila}
\end{icmlauthorlist}

\icmlaffiliation{math}{Department of Mathematics and Statistics, McGill University, Montreal, Canada}
\icmlaffiliation{mila}{Mila - Quebec AI Institute, Montreal, Quebec, Canada}

\icmlcorrespondingauthor{Archer Y. Yang}{archer.yang@mcgill.ca}

\icmlkeywords{Generalized Random Forests, Fixed-Point Methods, Ensemble Methods, Causal Inference, Computational Efficiency}

\vskip 0.3in
]



\printAffiliationsAndNotice{}  

\begin{abstract}
We propose a computationally efficient alternative to generalized random forests (GRFs) for estimating heterogeneous effects in large dimensions. While GRFs rely on a gradient-based splitting criterion, which in large dimensions is computationally expensive and unstable, our method introduces a fixed-point approximation that eliminates the need for Jacobian estimation. This gradient-free approach preserves GRF’s theoretical guarantees of consistency and asymptotic normality while significantly improving computational efficiency. We demonstrate that our method achieves a speedup of multiple times over standard GRFs without compromising statistical accuracy. Experiments on both simulated and real-world data validate our approach. Our findings suggest that the proposed method is a scalable alternative for localized effect estimation in machine learning and causal inference applications.

\end{abstract}

\section{Introduction}
In many real-world machine learning (ML) applications, practitioners seek to estimate how quantities of interest vary across different feature subgroups rather than assuming uniform effects. For example, medical interventions and policy treatments often have heterogeneous impacts across subpopulations, making localized estimation crucial for improving outcomes \citep{imai2013estimating, knaus2021machine, murdoch2019definitions, lee2020deep}. Similarly, individualized recommendation systems adapt to user-specific features to enhance performance \citep{kohavi2013online}.

A key example of localized estimation arises in causal inference, where modern applications prioritize individualized treatment effects over average treatment effects \citep{neyman1923applications, rubin1974estimating}. The double machine learning framework \citep{chernozhukov2018double} unifies various ML-based causal estimation methods, including lasso \citep{belloni2017program}, random forests \citep{athey2019generalized, cevid2022distributional}, boosting \citep{powers2018some}, deep learning \citep{johansson2016learning, shalit2017estimating}, and general-purpose meta-algorithms \citep{nie2021quasi, kunzel2019metalearners}, all of which focus on capturing variation over feature space.

Generalized random forests (GRFs) \citep{athey2019generalized, wager2018estimation} have emerged as a powerful tool for such tasks, leveraging adaptive partitioning with problem-specific moment conditions instead of standard loss-based splits. GRFs apply broadly to a wide range of important statistical models -- local linear regression \citep{friedberg2020local}, survival analysis and missing data problems \citep{cui2023estimating}, nonparametric quantile regression, heterogeneous treatment effect estimation, and nonlinear instrumental variables regression \citep{athey2016recursive, athey2019generalized}. Unlike local linear models \citep{fan1995local, fan1996local, friedberg2020local} or kernel-based models \citep{staniswalis1989kernel, severini1994quasi, lewbel2007local, speckman1988kernel, robinson1988root} which suffer from the curse of dimensionality \citep{robins1997toward}, the tree-based approach of GRF offers a more scalable solution.

However, GRFs' gradient-based approach \citep{athey2019generalized} becomes computationally expensive and unstable in large dimensions due to the reliance on Jacobian estimators for tree splitting. To address this, we propose a gradient-free approach based on fixed-point iteration, eliminating the need for Jacobian estimation while retaining GRF's theoretical guarantees of consistency and asymptotic normality. Our method significantly improves computational efficiency while maintaining statistical accuracy, achieving significant speedups in experiments on simulated and real-world datasets.

\section{Background and Related Work}\label{sec:review-of-grf}
Given data $(X_i,O_i) \in \cX \times \cO$, GRF estimates a target function $\theta^*(x)$, defined as the solution to an estimating equation of the form
\begin{equation}\label{eqn:est-eqn} 
    0 = \bbE_{O|X}\left[ \psi_{\theta^*(x),\nu^*(x)}(O) \mid X =x\right], 
\end{equation}
for all $x \in \cX$, where $\psi$ is a score function that identifies the true $(\theta^*(x),\nu^*(x))$ as the root of \eqref{eqn:est-eqn}, and $\nu^*(x)$ is an optional nuisance function. GRF can be understood from a nearest-neighbor perspective as approximating $\theta^*(x)$  through a locally parametric $\theta^*$ within small neighborhoods of test point $x$. Suppose $L(x) \subset \{X_i\}_{i=1}^n$ is a subset of training observations of the covariates found in a region around $x \in \cX$ over which $\theta^*(x)$ can be well-approximated by a local parameter. Observations $X_i \in L(x)$ serve as local representatives for $x$ in estimating $\theta^*(x)$ such that, given sufficiently many training samples in a small enough neighborhood of $x$, an empirical version of \eqref{eqn:est-eqn} over $X_i \in L(x)$ defines an estimator $\hat\theta_{L(x)}$ that approaches $\theta^*(x)$,
\begin{align}
\label{eqn:emp-local-model-L}
(\hat\theta_{L(x)},\hat\nu_{L(x)}) \in& \argmin_{\theta,\nu}  \norm{ \sum^n_{i=1}\frac{\mathds 1 (X_i \in L(x))}{|L(x)|} \cdot \psi_{\theta,\nu}(O_i) }.
\end{align}
In GRF, the set of local representatives $L(x)$ is determined by tree-based partitions which divide the input space into disjoint regions, or leaves. The training samples $X_i$ that fall in the same leaf as $x$ form the subset $L(x)$. However, single trees are known to have high variance with respect to small changes in the training data \citep{amit1997shape, breiman1996bagging, breiman2001random, dietterich2000experimental}, leading to estimates \eqref{eqn:emp-local-model-L} that do not generalize well to values of $x$ that are not part of the training set. GRF improves its estimates by leveraging an estimating function that averages many estimating functions of the form \eqref{eqn:emp-local-model-L}. Specifically, let $L_b(x)$ denote the set of training covariates that fall in the same leaf as $x$, identified by a tree trained on an independent subsample of the data, indexed by $b = 1,\ldots, B$. The GRF estimator is obtained by aggregating the individual estimating functions \eqref{eqn:emp-local-model-L} across a forest of $B$ independently trained trees, i.e. the solution to the following forest-averaged estimating equation:
\begin{equation}\label{eqn:forest-avg-est-eqn}
    (\hat\theta(x),\hat\nu(x)) \in \argmin_{\theta,\nu} \norm{ \frac{1}{B}\sum^B_{b=1}\left(\sum^n_{i=1} \alpha_{bi}(x)  \psi_{\theta,\nu}(O_i) \right) }.
\end{equation}
where $\alpha_{bi}(x) \coloneqq \frac{\mathds 1(X_i \in L_b(x))}{|L_b(x)|}$. Define observational weights $\alpha_i(x)$ that measure the relative frequency with which training sample $X_i$ falls in the same leaf as $x$, averaged over $B$ trees:
\begin{equation}\label{eqn:grf-weights}
    \alpha_i(x) \coloneqq \frac{1}{B} \sum^B_{b = 1}\alpha_{bi}(x),  
\end{equation}
for $i = 1,\ldots,n$.  Then, the solution $(\hat\theta(x),\hat\nu(x))$ to the forest-averaged model \eqref{eqn:forest-avg-est-eqn} is equivalent to solving the following locally weighted estimating equation
\begin{equation}\label{eqn:emp-weighted-est-eqn}
(\hat\theta(x),\hat\nu(x)) \in \argmin_{\theta,\nu} \norm{\sum^n_{i = 1}\alpha_i(x) \psi_{\theta,\nu}(O_i)}.
\end{equation}
\citet{athey2019generalized} present \eqref{eqn:emp-weighted-est-eqn} as the definition of the GRF estimator, motivated in part by the mature analyses of local kernel methods \citep{newey1994kernel} alongside more recent work on tree-based partitioning and estimating equations \citep{athey2016recursive,zeileis2007generalized, zeileis2008model}. The GRF algorithm for estimating $\theta^*(x)$ can be summarized as a two-stage procedure. {\bf Stage I:} Use trees to calculate weight functions $\alpha_i(x)$ for any test observation $x \in \cX$, measuring the relative importance of the $i$-th training sample to estimating $\theta^*(\cdot)$ near $x$. {\bf Stage II:} Given a test observation $x \in \cX$, compute estimate $\hat\theta(x)$ of $\theta^*(x)$ by solving the locally weighted empirical estimating equation \eqref{eqn:emp-weighted-est-eqn}.

Our contribution improves the computational cost of Stage I by introducing a more efficient procedure to train the trees. Training the forest is the most resource-intensive step of GRF, and the cost of each split in the existing approach scales quadratically with the dimension of $\theta^*(x)$. We adopt a gradient-free splitting mechanism and significantly reduce both the time and memory demands of Stage I. Crucially, solving Stage II with weights $\alpha_i(x)$ following our streamlined Stage I produces an estimator $\hat\theta(x)$ that preserves the finite-sample performance and asymptotic guarantees of GRF.

\section{Our Method}
In this section we describe the details of our accelerated algorithm for GRF. We closely follow the approach of \citet{athey2019generalized}, and define $\hat\theta(x)$ as the solution to a locally weighted problem \eqref{eqn:emp-weighted-est-eqn} with weighting functions $\alpha_i(x)$ of the form \eqref{eqn:grf-weights}. The weight functions are induced by a collection of local subsets $\{L_b(x)\}_{b=1}^B$, such that each subset $L_b(x)$ is determined by the partition rules of a tree trained on a subsample. The construction of each tree, in turn, is determined by recursive splits of the subsample based on a splitting criterion designed to identify regions of $\cX$ that are homogeneous with respect to $\theta^*(x)$. Therefore, to fully specify the weight functions $\alpha_i(x)$, we must describe a feasible criterion for producing a split of $\cX$.

\subsection{The target tree-splitting criterion for Stage I}\label{sec:grf-target-criterion}

In GRF, the goal of Stage I is to use recursive tree-based splits of the training data to induce a partition over the input space. Each split starts with a parent node $P\subset\cX$ and results in child nodes $C_1, C_2\subset\cX$, defined by a binary, axis-aligned splitting rule of the form $C_1 = \{X_i\,:\,X_{i,\ell}\leq t\}$ and $C_2 = \{X_i\,:\,X_{i,\ell} > t\}$, where $\ell$ denotes a candidate splitting feature/axis and $t \in \bbR$ the splitting threshold. For a parent $P$ and any child nodes $C_1,C_2$ of $P$, let $(\hat\theta_P,\hat\nu_P)$ and $(\hat\theta_{C_j},\hat\nu_{C_j})$ denote local solutions analogous to \eqref{eqn:emp-local-model-L} defined over the samples in $P$ and $C_j$, respectively:
\begin{equation}\label{eqn:local-est-P}
(\hat\theta_P,\hat\nu_P) \in \argmin_{\theta,\nu}\norm{\sum_{\{i:X_i\in P\}} \psi_{\theta,\nu}(O_i) },
\end{equation}
\begin{equation}    (\hat\theta_{C_j},\hat\nu_{C_j}) \in \argmin_{\theta,\nu}\norm{\sum_{\{i:X_i\in C_j\}} \psi_{\theta,\nu}(O_i) },  \label{eqn:local-est-Cj}
\end{equation}
for $j = 1,2$. A strategy to split $P$ into two subsets of greater homogeneity with respect to $\theta^*(\cdot)$ is as follows: Find child nodes $C_1$ and $C_2$ such that the total deviation between the local solutions $\hat\theta_{C_j}$ and the target $\theta^*(X)$ is minimized, conditional on $X \in C_j$, $j = 1,2$. A natural measure of deviation is the squared-error loss,
\begin{align*}\label{eqn:crit-square-error}
    \err(C_1,C_2) \coloneqq & \sum_{j=1,2} \bbP\left(X \in C_j \mid X \in P\right) \\
    & \quad\times~\bbE\left[\norm{ \theta^*(X) - \hat\theta_{C_j}}^2 \;\middle|\; X\in C_j\right],
\end{align*}
such that the resulting split $(C_1,C_2)$ corresponds to least-squares optimal solutions $\hat\theta_{C_1}$ and $\hat\theta_{C_2}$. However, $\err(C_1,C_2)$ is intractable since $\theta^*(\cdot)$ is unknown. GRF considers a criterion that measures heterogeneity across a pair of local solutions over a candidate split
\begin{equation}\label{eqn:target-het-criterion}
    \Delta(C_1, C_2) \coloneqq \frac{n_{C_1}n_{C_2}}{n_P^2} \norm{\hat\theta_{C_1} - \hat\theta_{C_2}}^2,
\end{equation}
where $n_{C_1}$, $n_{C_2}$, and $n_P$ denote the number of observations in $C_1$, $C_2$, and $P$, respectively. In particular, rather than minimizing $\err(C_1,C_2)$, one can seek a split of $P$ such that the cross-split heterogeneity between $\hat\theta_{C_1}$ and $\hat\theta_{C_2}$ is maximized. \citet{athey2019generalized} observe that $\err(C_1,C_2)$ and $\Delta(C_1,C_2)$ are coupled according to $\err(C_1,C_2) = K(P) - \bbE\left[\Delta(C_1,C_2)\right] + o(r^2)$, where $r > 0$ is a small radius term tied to the sampling variance, and $K(P)$ does not depend on the split of $P$. That is, splits that maximize $\Delta(C_1,C_2)$ -- which emphasize the heterogeneity of $\hat\theta_{C_j}$ across a split -- will asymptotically minimize $\err(C_1,C_2)$, which aims to improve the homogeneity of $\hat\theta_{C_j}$ within a split. 

Although the criterion $\Delta(C_1,C_2)$ is computable, evaluating it is very computationally expensive since it requires solving \eqref{eqn:local-est-Cj} to obtain $\hat\theta_{C_1},\hat\theta_{C_2}$ for all possible splits of $P$, and closed-form solutions for $\hat\theta_{C_j}$ are generally not available except in special cases of $\psi$. Instead, GRF approximates the target $\Delta$-criterion based on a criterion of the form
\begin{equation}\label{eqn:approx-surrogate}
\widetilde\Delta^\grad(C_1,C_2) \coloneqq \frac{n_{C_1}n_{C_2}}{n_P^2} \norm{\tilde\theta^\grad_{C_1} - \tilde\theta^\grad_{C_2} }^2,
\end{equation}
where $\tilde\theta^\grad_{C_j}$ denotes a \emph{gradient-based} approximation of $\hat\theta_{C_j}$. Specifically, $\tilde\theta^\grad_{C_j}$ is a first-order approximation interpreted as the result of taking a gradient step away from the parent estimate in the direction towards the true child solution $\hat\theta_{C_j}$:
\begin{equation}\label{eqn:gradient-estimator}
\tilde\theta^\grad_{C_j} \coloneqq \hat\theta_P - \frac{1}{n_{C_j}}\sum_{\{i:X_i\in C_j\}} \xi^\top A_P^{-1} \psi_{\hat\theta_P,\hat\nu_P}(O_i),
\end{equation}
where $(\hat\theta_P,\hat\nu_P)$ is the local solution over the parent, $A_P$ is any consistent estimator of the local Jacobian matrix $\nabla_{(\theta,\nu)}\bbE[\psi_{\hat\theta_P,\hat\nu_P}(O_i)\mid X_i\in P]$, and $\xi^\top$ can be thought of as a term that selects a $\theta$-subvector from a $(\theta,\nu)$-vector, e.g. if $\theta \in \bbR^K$ and $\nu \in \bbR$, then $\xi^\top$ such that $\theta = \xi^\top (\theta,\nu)^\top$ is the rectangular diagonal matrix $\xi^\top = \renewcommand\arraystretch{0.75}\left[\begin{matrix} \bbI_K & {\bf 0} \end{matrix}\right]$. When the scoring function $\psi$ is continuously differentiable in $(\theta,\nu)$, the Jacobian estimator $A_P$ can be computed as
\begin{align}\label{eqn:AP-matrix}
    A_P &= \nabla_{(\theta,\nu)} \frac{1}{n_{P}}\sum_{\{i:X_i\in P\}} \psi_{\hat\theta_P,\hat\nu_P}(O_i) \nonumber\\
    &= \frac{1}{n_{P}}\sum_{\{i:X_i\in P\}} \nabla_{(\theta,\nu)} \psi_{\hat\theta_P,\hat\nu_P}(O_i).
\end{align}

\subsection{Limitations of gradient-based approximation}\label{sec:limitations}

The use of the Jacobian estimator $A_P$ in \eqref{eqn:gradient-estimator} introduces considerable computational challenges. First, each parent node $P$ in every tree of the forest requires a distinct $A_P$ matrix, which imposes a significant computational burden when explicitly calculating $A_P^{-1}\psi_{\hat\theta_P,\hat\nu_P}(O_i)$ to determine $\tilde\theta^\grad_{C_j}$. Second, if the local Jacobian $\nabla_{(\theta,\nu)}\bbE[\psi_{\hat\theta_P,\hat\nu_P}(O_i)\mid X_i\in P]$ is ill-conditioned, then the resulting $A_P$ estimator may be nearly singular. This instability can lead to highly variable gradient-based approximations $\tilde\theta^\grad_{C_j}$ and highly variable splits of $P$. For example, consider the following varying-coefficient model for an outcome $Y_i$ given regressors $W_i = (W_{i,1},\ldots,W_{i,K})^\top$ in the presence of mediating auxiliary covariates $X_i$:
\begin{equation}\label{eqn:example-model-limitations}
    \bbE[Y_i\mid X_i=x] = \nu^*(x) + W_i^\top \theta^*(x),
\end{equation}
where $\nu^*(\cdot)$ is a nuisance intercept function and $\theta^*(x) = (\theta^*_1(x),\ldots,\theta^*_K(x))^\top$ are the target coefficients. Models of the form \eqref{eqn:example-model-limitations} encompass time- or spatially-varying coefficient frameworks, where $(X_i,Y_i,W_i)$ represent the $i$-th sample associated with spatiotemporal values $X_i$. Such models are particularly relevant in applications like heterogeneous treatment effects; see Section~\ref{sec:applications} for a more in-depth discussion. The local estimating function $\psi_{\theta,\nu}(Y_i,W_i)$, identifying $(\theta^*(x),\nu^*(x))$ through moment conditions as in \eqref{eqn:est-eqn}, is given by:
\begin{equation*}
    \psi_{\theta,\nu}(Y_i,W_i) \coloneqq \begin{bmatrix}
        (Y_i - W_i^\top \theta - \nu) \cdot W_i \\
        Y_i - W_i^\top \theta - \nu
    \end{bmatrix}.
\end{equation*}
Consequently, the corresponding local Jacobian estimator is
\begin{align}\nonumber
     A_P =& \frac{1}{n_P}\sum_{\{i:X_i\in P\}} \nabla_{(\theta,\nu)} \psi_{\theta,\nu}(Y_i,W_i) \\
      = &  -\frac{1}{n_P}\sum_{\{i:X_i\in P\}} \begin{bmatrix}
         W_iW_i^\top & W_i^\top \\
         W_i & 1\label{eqn:AP-matrix-hte}
     \end{bmatrix}.
\end{align}
When the regressors are highly correlated, the summation over the $W_iW_i^\top$ block of the $A_P$ matrix leads to nearly singular values of $A_P$, resulting in an unstable matrix inverse $A_P^{-1}$, and therefore unstable values of $\tilde\theta^\grad_{C_j}$ and unstable splits. This issue becomes more pronounced as the number of parent samples $n_P$ decreases, as is the case at deeper levels of the tree. These challenges highlight the limitations of relying on $A_P$ as part of an approximation for the child solutions $\hat\theta_{C_j}$.

As an illustration, consider a simple varying coefficient model with primary regressors $W_{i,1}, W_{i,2} \sim \cN(0,1)$, auxiliary covariates $X_i \sim \text{Unif}(0,1)$, and outcomes $Y_i$ generated as
\begin{equation}\label{eqn:varying-coef-example}
    Y_i = \mathds 1(X_{i} > 0.5) W_{i,1} + W_{i,2} + \epsilon_i,
\end{equation}
where $\epsilon_i \sim \cN(0,1)$. Figure~\ref{fig:varying-coef} illustrates the distribution of 2000 $\widetilde\Delta^\grad$-optimal binary splits (gradient-based tree stumps) fit over 1000 samples of the varying coefficient model \eqref{eqn:varying-coef-example}, repeated over different regressor correlation levels $\textnormal{Corr}(W_{i,1},W_{i,2}) \in \{0.80, 0.81, \ldots, 0.98, 0.99\}$. It is clear that splits based on the $\widetilde\Delta^\grad$-criterion exhibit high variability when the correlation between the regressors is large. In contrast, our proposed method, discussed in the next section, does not suffer from the same problem.

\begin{figure}[t]
    \centering    
    \includegraphics[width=0.99\linewidth]{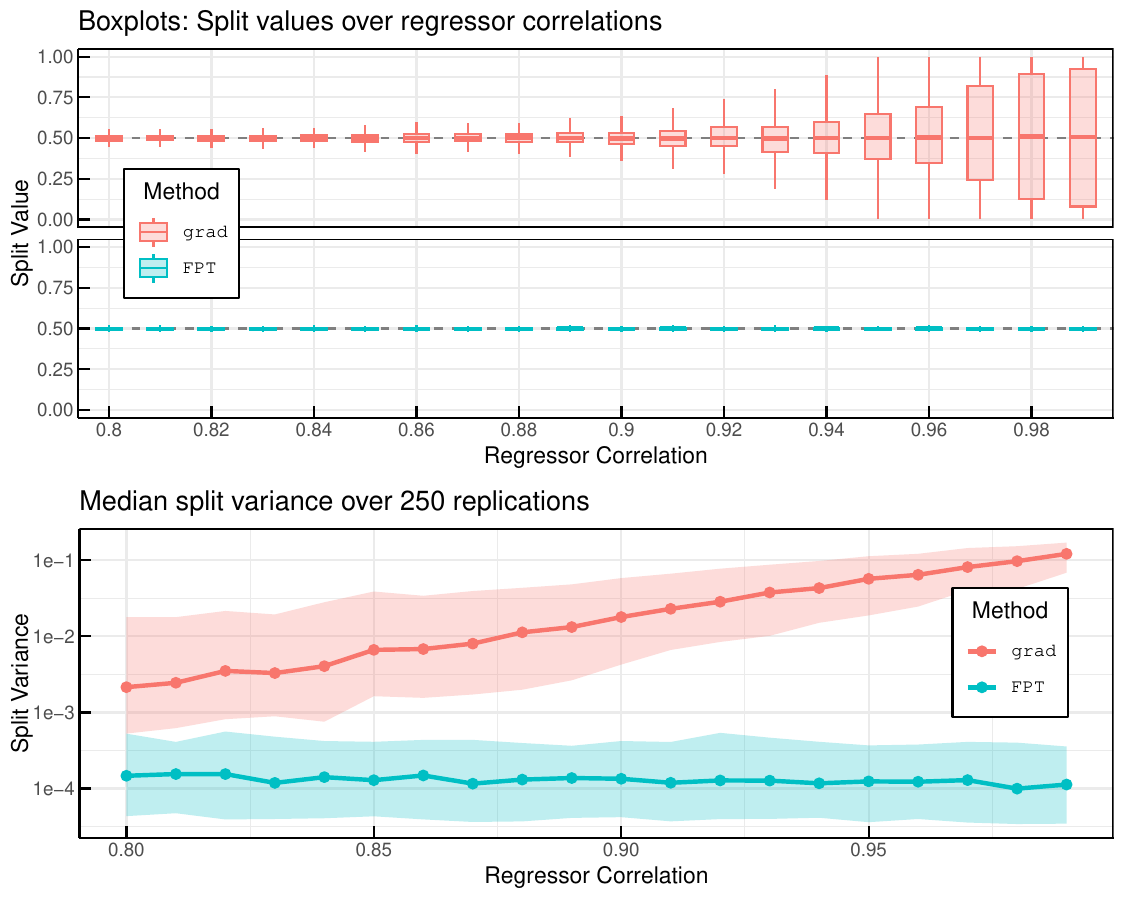}
    \caption{Splits values (top) and split variance (bottom), with 10th and 90th percentile bands, across correlations of $W_{i,1}$ and $W_{i,2}$.}
    \label{fig:varying-coef}
\end{figure}

\subsection{Fixed-point approximation}\label{sec:fp-tree-alg}

To address the limitations of gradient-based approximations, we propose a gradient-free approach based on the form of a single fixed-point iteration. Let $\Psi_{C_j}(\theta,\nu) \coloneqq \frac{1}{n_{C_j}}\sum_{\{i:X_i \in C_j\}} \psi_{\theta,\nu}(O_i)$ denote the empirical estimating function for the child solution $(\hat\theta_{C_j},\hat\nu_{C_j})$ such that \eqref{eqn:local-est-Cj} is equivalently written as:
\begin{equation}\label{eqn:Psi-Cj}
(\hat\theta_{C_j},\hat\nu_{C_j}) \in \argmin_{\theta,\nu}\norm{\Psi_{C_j}(\theta,\nu) }, \quad j = 1,2.
\end{equation}
Under mild regularity conditions,   $(\hat\theta_{C_j},\hat\nu_{C_j})$ is a $Z$-estimator that solves the estimating equation $  \Psi_{C_j}(\theta,\nu) = {\bf 0}$. Reformulating this equation as a fixed-point problem, we write:
\begin{equation}\label{eqn:fixed-point-general}
    (\theta,\nu) = \underbrace{(\theta,\nu) - \eta \Psi_{C_j}(\theta,\nu)}_{\eqqcolon f(\theta,\nu)}, \quad \eta > 0.
\end{equation}
A necessary and sufficient condition for $(\hat\theta_{C_j},\hat\nu_{C_j})$ to be a solution of \eqref{eqn:Psi-Cj} is characterized by the fixed-point problem $(\hat\theta_{C_j},\hat\nu_{C_j}) = f(\hat\theta_{C_j},\hat\nu_{C_j})$,
where $f$ is as defined in \eqref{eqn:fixed-point-general}. Iterative fixed-point methods \citep{Picard1890,Lindelof1894,Banach1922,RyuBoyd2016,yang2021flexible} solve such problems by considering an update rule of the form
\begin{equation}\label{eqn:fp-iteration-general}
    (\theta^{+},\nu^{+}) \leftarrow f(\theta,\nu).
\end{equation}
The form of \eqref{eqn:fp-iteration-general} inspires us to approximate the true child solution $\hat\theta_{C_j}$ using a single fixed-point update taken from the parent solution $\hat\theta_P$:
\begin{align}\label{eqn:fp-estimator-eta}
\tilde\theta^\FPT_{C_j} \coloneqq & \hat\theta_P - \eta\xi^\top\Psi_{C_j}(\hat\theta_P,\hat\nu_P) \nonumber\\
= & \hat\theta_P - \frac{\eta}{n_{C_j}}  \xi^\top  \sum_{\{i:X_i \in C_j\}} \psi_{\hat\theta_P,\hat\nu_P}(O_i),
\end{align}
where the product with $\xi^\top$ is interpreted similarly to its role in the gradient-based approximation \eqref{eqn:gradient-estimator} and to express the update \eqref{eqn:fp-iteration-general} solely in terms of the target $\theta$-quantity. We interpret $\tilde\theta^\FPT_{C_j}$ as an approximation of $\hat\theta_{C_j}$ obtained by taking a step from $\hat\theta_P$ in a direction that reduces the magnitude of the local estimating function $\Psi_{C_j}$. Notably, the approximation $\tilde\theta_{C_j}^\FPT$ does not involve the $A_P$ matrix, relying only on the scores $\psi_{\hat\theta_P,\hat\nu_P}(O_i)$ evaluated at the parent solutions. In general, removing the inverse $A_P^{-1}$ provides computational cost savings of $\cO(K^3)$. The corresponding splitting criterion, which uses the fixed-point approximations $\tilde\theta_{C_j}^\FPT$ as substitutes for $\hat\theta_{C_j}$ is given by
\begin{equation}\label{eqn:fp-approx-criterion}
    \widetilde\Delta^\FPT(C_1,C_2) \coloneqq \frac{n_{C_1}n_{C_2}}{n_P^2} \norm{ \tilde\theta^\FPT_{C_1} - \tilde\theta^\FPT_{C_2} }^2.
\end{equation} 

Revisiting the varying coefficient example from Section~\ref{sec:limitations}, we see that splits based on fixed-point approximations $\tilde\theta_{C_j}^\FPT$ are significantly more stable than those based on $\tilde\theta_{C_j}^\grad$. Specifically, Figure~\ref{fig:varying-coef} illustrates that splits that maximize $\widetilde\Delta^\FPT(C_1,C_2)$ are more robust to ill-conditioning in the underlying local Jacobian $\nabla_{(\theta,\nu)}\bbE[\psi_{\hat\theta_P,\hat\nu_P}(O_i)\mid X_i\in P]$, as is the case for highly correlated regressors in the varying coefficient model \eqref{eqn:varying-coef-example}, and leading to highly stable splits.

\subsection{Pseudo-outcomes}\label{sec:pseudo-outcomes}

Approximations $\tilde\theta_{C_j}$ of the form \eqref{eqn:gradient-estimator} and \eqref{eqn:fp-estimator-eta} offer an additional benefit: they enable the $\widetilde\Delta$-criteria of the form \eqref{eqn:approx-surrogate} and \eqref{eqn:fp-approx-criterion} to be efficiently optimized through a single multivariate CART split. A CART split performed with respect to vector-valued responses $\rho_i \in \bbR^K$ over a parent node $P$ produces a split $(C_1,C_2)$ that minimizes the following least-squares criterion:
\begin{equation}\label{eqn:cart-min-criterion-pseudo-outcomes}
    \sum_{\{i\,:\,X_i \in C_1\}} \norm{\rho_i - \bar \rho_{C_1}}^2 + \sum_{\{i\,:\,X_i \in C_2\}} \norm{\rho_i - \bar \rho_{C_2}}^2,
\end{equation}
where $\bar\rho_{C_j} \coloneqq \frac{1}{n_{C_j}}\sum_{\{i:X_i \in C_j\}} \rho_i$.\footnote{The multivariate CART criterion uses a sum of squares impurity measure, as in \citet{de2002multivariate, segal1992tree}.}~Equivalently, a CART split that minimizes \eqref{eqn:cart-min-criterion-pseudo-outcomes} will maximize:
\begin{equation}\label{eqn:cart-max-criterion-pseudo-outcomes}
n_{C_1}\norm{\bar\rho_{C_1}}^2 + n_{C_2} \norm{\bar\rho_{C_2}}^2.
\end{equation}
The equivalence between the split that minimizes the least-squares CART criterion \eqref{eqn:cart-min-criterion-pseudo-outcomes} and the split that maximizes \eqref{eqn:cart-max-criterion-pseudo-outcomes} is shown in Appendix~\ref{app:multivariate-cart-criteria}. GRF performs its splits by adopting gradient-based {\em pseudo-outcomes}, defined as
\begin{equation}\label{eqn:grad-pseudo-outcome}
    \rho_i^\grad\coloneqq-\xi^\top A_P^{-1}\psi_{\hat\theta_P,\hat\nu_P}(O_i)
\end{equation}
such that the gradient-based approximation $\tilde\theta_{C_j}^\grad$ in \eqref{eqn:gradient-estimator} is equivalently written:
\begin{equation*}
\tilde\theta^\grad_{C_j} = \hat\theta_P + \frac{1}{n_{C_j}}\sum_{\{i:X_i \in C_j\}} \rho_i^\grad = \hat\theta_P + \bar\rho_{C_j}^{~\grad}.
\end{equation*}
In the case of fixed-point approximation, we define fixed-point pseudo-outcomes:
\begin{equation}\label{eqn:fp-pseudo-outcome-eta}
    \rho_i^\FPT\coloneqq-\eta\xi^\top\psi_{\hat\theta_P,\hat\nu_P}(O_i), \quad \eta \neq 0,
\end{equation}
such that the fixed-point approximation $\tilde\theta_{C_j}^\FPT$ in \eqref{eqn:fp-estimator-eta} is equivalently written as
\begin{equation}\label{eqn:fp-estimator-eta-pseudo-outcome}
\tilde\theta^\FPT_{C_j} = \hat\theta_P + \frac{1}{n_{C_j}}\sum_{\{i:X_i \in C_j\}} \rho_i^\FPT = \hat\theta_P + \bar\rho_{C_j}^{~\FPT}.
\end{equation}
Substitute the above form of $\tilde\theta^\FPT_{C_j}$ into the $\widetilde\Delta^\FPT$-criterion \eqref{eqn:fp-approx-criterion} to equivalently express the criterion in terms of the $\FPT$ pseudo-outcomes:
\begin{equation}\label{eqn:fp-criterion-pseudo-outcome}
\widetilde\Delta^\FPT(C_1,C_2) = \frac{n_{C_1} n_{C_2}}{n_P^2} \norm{\bar\rho^{~\FPT}_{C_1} - \bar\rho^{~\FPT}_{C_2}}^2,
\end{equation}
where an analogous equivalence holds for $\widetilde\Delta^\grad$ in terms of the gradient-based pseudo-outcomes. We demonstrate in Lemma~\ref{lem:affine-equivalence-pseudo-outcome} (in Appendix~\ref{app:splits-via-CART-pseudo-outcomes}) that maximizing the fixed-point criterion $\widetilde\Delta^\FPT(C_1,C_2)$ is equivalent to maximizing the CART criterion \eqref{eqn:cart-max-criterion-pseudo-outcomes}, and extend this property to any $\widetilde\Delta$-style criterion induced by pseudo-outcomes that can be expressed as a split-independent linear transformation of the parent scores $\psi_{\hat\theta_P,\hat\nu_P}(O_i)$.

Note that our method does not rely on iterative fixed-point procedures at all. Instead, it uses only a single step of fixed-point approximation to simplify the pseudo-outcomes. These simplified pseudo-outcomes are then passed directly to a standard CART algorithm for splitting. The numerical convergence of our method therefore relies solely on CART's established and well-known stability, not on fixed-point iteration. CART splits on pseudo-outcomes are computationally efficient. Given a parent node $P$, the value $\rho_i = -B\psi_{\hat\theta_P,\hat\nu_P}(O_i)$ does not depend on a candidate split $(C_1,C_2)$ for any matrix $B$ that is fixed with respect to the parent. This allows much of the computation required to maximize $\widetilde\Delta^\FPT(C_1,C_2)$ to be done at the parent level, and in particular avoids re-calculating the approximations $\tilde\theta_{C_1}^\FPT$ and $\tilde\theta_{C_2}^\FPT$ across the sequence of candidate splits. Once $P$ is fixed and $\rho_i^\FPT$ are computed, the value of $\widetilde\Delta^\FPT(C_1,C_2)$ for the first candidate split requires $\cO(n_P)$ time, and the value for all other candidate splits of $P$ are queried in $\cO(1)$ time. While gradient-based pseudo-outcomes share this property, the use of fixed-point pseudo-outcomes eliminates the computational overhead and instability associated with estimating $A_P$, as discussed in Section~\ref{sec:limitations}.

We show in Lemma~\ref{lem:scale-invariance-cart} (Appendix~\ref{app:scale-invariance-eta}) that choosing different values of $\eta$ does not change the outcome of the fixed-point splitting mechanism. Specifically,
the optimal split identified by CART on pseudo-outcomes $\rho_i^\FPT$ of the form \eqref{eqn:fp-pseudo-outcome-eta} does not depend on $\eta$. This can be heuristically understood by studying how the criterion changes as a function of the candidate splits. To illustrate, we consider a VCM model of the form \eqref{eqn:example-model-limitations} for bivariate regressors $W_i$, univariate $X_i \in [0,1]$, and scalar outcomes $Y_i$. A detailed summary of the settings is found in Appendix~\ref{app:experiments-fp-tree-split-invariance}. The sequence of valid candidate child nodes obtained by a split over univariate $X_i$ can be parameterized through scalar $t$ as $C_1(t) \coloneqq \{X_i \,:\, X_i \leq t \}$ and $C_2(t) \coloneqq \{X_i \,:\, X_i > t\}$. Let $\Delta(t) \coloneqq \Delta(C_1(t), C_2(t))$ denote the parameterized target criterion \eqref{eqn:target-het-criterion}, and consider the behavior of $\Delta(t)$, $\widetilde\Delta^\grad(t)$, and two fixed-point criteria $\widetilde\Delta_1^\FPT(t)$ and $\widetilde\Delta_2^\FPT(t)$ of the form \eqref{eqn:fp-criterion-pseudo-outcome} based on pseudo-outcomes with scale factors $\eta = 1$ and $\eta = 1/\sqrt{2}$, respectively. Figure~\ref{fig:example-criteria-comparison} illustrates the different splitting criteria values plotted against the sequence of candidate splits. The visualization clearly shows that the criteria curves for $\Delta(t)$, $\widetilde\Delta^\grad(t)$, and $\widetilde\Delta_1^\FPT(t)$ with $\eta = 1$ are all very close to one another. Critically, the fixed-point criterion with $\eta = 1/\sqrt{2}$, i.e. $\widetilde\Delta_2^\FPT(t)$, although scaled differently, still identifies the same maximizing split as $\widetilde\Delta_1^\FPT(t)$. This is because CART chooses a split based on a rank ordering of the criterion over all candidate splits. The absolute scale of the CART criterion does not matter, and it is only criterion rankings over the candidates that determines the optimal split. Therefore, choosing a different scalar $\eta$ does not change the outcome of the splitting process.

Based on the scale-invariance of our splitting criterion, we now detail the recursive procedure for growing our fixed-point trees pseudo-outcomes with  $\eta = 1$.
 
\paragraph{The fixed-point tree algorithm.}

The entire fixed-point tree-growing procedure recursively applies the following two steps on a given parent node $P$:
\begin{enumerate}[label=(\roman*)]
    \item {\bf Labeling:} Solve \eqref{eqn:local-est-P} over $P$ to obtain the parent estimate $(\hat\theta_P,\hat\nu_P)$. Compute the pseudo-outcomes:
    \begin{equation}\label{eqn:fp-pseudo-outcome}
        \rho^\FPT_i \coloneqq -\xi^\top \psi_{\hat\theta_P,\hat\nu_P}(O_i),
    \end{equation}
    for all $i$ such that $X_i \in P$.
    \item {\bf Regression:} Maximize $\widetilde\Delta^\FPT(C_1,C_2)$ by performing a CART split on the pseudo-outcomes $\rho^\FPT_i$ over $P$.
\end{enumerate}

\begin{figure}[t]
    \centering
    \includegraphics[width=1\linewidth]{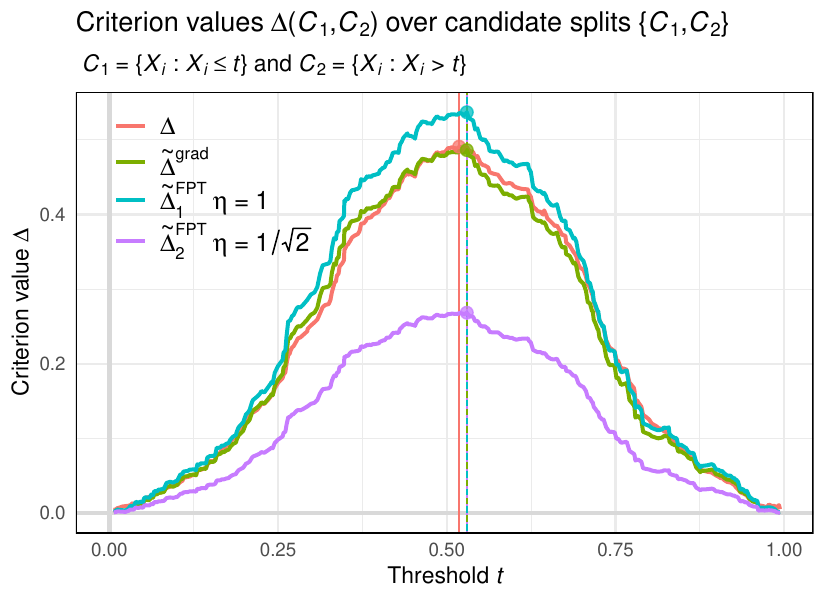}
    \caption{Criterion values across candidate splits $(C_1(t),C_2(t))$ over threshold $t \in [0,1]$. The location of the optimal split under each criterion is given by the corresponding vertical line.}
    \label{fig:example-criteria-comparison}
\end{figure}

\subsection{Estimates of $\hat\theta(x)$ for Stage II}\label{sec:stage2}

The fixed-point tree algorithm generates a single tree-based partition of $\cX$. Repeating this process over subsamples of the training data yields a forest of trees, each specifying local leaf functions $L_b(x)$. These leaf functions define the local weight functions $\alpha_i(x)$ via \eqref{eqn:grf-weights}, completing Stage I of GRF. 
The full fixed-point tree training algorithm is described in Algorithm~\ref{alg:grffpt-pseudocode-stage1-tree}, while Algorithm~\ref{alg:grffpt-pseudocode-stage1-forest} provides the pseudocode for the forest-wide Stage I procedure.

To compute the final GRF estimates $\hat{\theta}(x)$ for the target $\theta^*(x)$, we follow the standard GRF mechanism for Stage II. After the fixed-point trees are trained in Stage I, a test observation $x_0 \in \mathcal{X}$ is assigned to local leaves $L_b(x_0)$, indexed by trees $b \in \{1,\ldots,B\}$. Each leaf $L_b(x_0)$ contains the training observations that fall into the same leaf as $x_0$ in tree $b$. Using these local leaves, the forest computes training weights $\alpha_i(x_0)$ as in \eqref{eqn:grf-weights}. The final estimate $\hat{\theta}(x_0)$ is obtained by solving the locally weighted estimating equation \eqref{eqn:emp-weighted-est-eqn}.

Importantly, as discussed in Section~\ref{sec:review-of-grf}, solving for $\hat{\theta}(x_0)$ in Stage II is independent of the specific mechanism used in Stage I. The only requirement is that Stage I produces valid weights. This ensures that Stage II remains a standard weighted estimating equation, enabling the fixed-point tree algorithm to integrate seamlessly into GRF's two-stage framework. We refer to the complete algorithm for estimating $\theta^*(x)$ using fixed-point trees as GRF-$\FPT$. By preserving Stage II of GRF, the GRF-$\FPT$ estimator $\hat{\theta}(x)$ retains GRF's theoretical guarantees of consistency and asymptotic normality while offering a computationally efficient tree-building method. Pseudocode for Stage II of the GRF-$\FPT$ algorithm is provided in Algorithm~\ref{alg:grffpt-pseudocode-stage2}, located in Appendix~\ref{app:pseudocode}.

\section{Theoretical Analysis}\label{sec:theoretical-analysis}

In this section, we provide a theoretical foundation for the GRF-$\FPT$ estimator $\hat\theta(x)$. For Stage I, Proposition~\ref{prop:Delta-V-crit-asymptotic} establishes an asymptotic equivalence between the $\FPT$ criterion and a weighted oracle criterion $\Delta_V(C_1,C_2)$ in \eqref{eqn:Delta-V-defn}, while Lemma~\ref{lem:Delta-implies-Delta-V-spec} demonstrates that the Specifications~\ref{spec:all-specifications} are met by a forest based on the $\Delta_V$-criterion whenever they are met by a forest based on the $\Delta$-criterion. Assumptions~\ref{asm:all-assumptions} and Specifications~\ref{spec:all-specifications} are the sufficient conditions for the consistency and asymptotic normality of $\hat\theta(x)$ in \eqref{eqn:emp-weighted-est-eqn}, and thus are used to formally justify the $\FPT$ algorithm as a mechanism for specifying an estimator of $\theta^*(x)$.

\begin{proposition}\label{prop:Delta-V-crit-asymptotic}  Suppose Assumptions~\ref{asm:all-assumptions} hold, and assume moreover Neyman orthogonal moment conditions (defined in Appendix~\ref{app:neyman-example-vcm-hte}). Denote by $r \coloneqq \sup_{\{i:X_i\in P\}}\norm{X_i - x_P}$ the radius of the parent $P$, where $x_P$ denotes the center of mass over $X_i \in P$. Let $V_{\theta\theta}(x_P)$ denote the $\theta$-block of $V(x_P)$ in \eqref{eqn:V-block-form}. Denote by $\norm{\cdot}_V$ the weighted Euclidean norm $\norm{z}_V \coloneqq \norm{V_{\theta\theta}(x_P) z}_2 = \sqrt{z^\top V_{\theta\theta}^\top(x_P) V_{\theta\theta}(x_P) z}$. Define the weighted oracle criterion $\Delta_V(C_1,C_2)$:
\begin{equation}\label{eqn:Delta-V-defn}
    \Delta_V(C_1,C_2) \coloneqq \frac{n_{C_1}n_{C_2}}{n_P^2}\norm{\hat\theta_{C_1} - \hat\theta_{C_2}}_V^2.
\end{equation}
Then, treating the split as fixed with $r^{-2} \ll n_{C_1}, n_{C_2}$ and sufficiently small $r > 0$,
\begin{equation*}
    \widetilde\Delta^\FPT(C_1,C_2) = \Delta_{V}(C_1,C_2) + o_P\left(r^2,~\frac{1}{{n_{C_1}}},~\frac{1}{{n_{C_2}}}\right).
\end{equation*}
\end{proposition}

\begin{lemma}\label{lem:Delta-implies-Delta-V-spec} Let $\cT(\Delta)$ denote a tree whose splitting mechanism seeks splits that maximize $\Delta(C_1,C_2)$ defined in \eqref{eqn:target-het-criterion}, and let $\cT(\Delta_V)$ denote a tree whose splitting mechanism seeks splits that maximize $\Delta_V(C_1,C_2)$ defined in \eqref{eqn:Delta-V-defn}. Suppose Assumptions~\ref{asm:all-assumptions} hold and assume moreover that $\cT(\Delta)$ is a tree that satisfies Specifications~\ref{spec:all-specifications}. Then, $\cT(\Delta_V)$ satisfies Specifications~\ref{spec:all-specifications}.
\end{lemma}

For Stage II, Theorem~\ref{thm:consistency} establishes the consistency of the GRF-$\FPT$ estimator $\hat\theta(x)$:
\begin{theorem}\label{thm:consistency} Suppose that Assumptions~\ref{asm:all-assumptions} hold, and let $(\hat\theta(x),\hat\nu(x))$ be estimates that solve \eqref{eqn:emp-weighted-est-eqn} based on weights induced by a forest of trees grown under the fixed-point tree algorithm satisfying Specifications~\ref{spec:all-specifications}. Then, $(\hat\theta(x),\hat\nu(x))$ converges in probability to $(\theta^*(x),\nu^*(x))$.
\end{theorem}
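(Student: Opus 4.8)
The plan is to prove Theorem~\ref{thm:consistency} by reduction to the consistency result for standard GRF established in \citet{athey2019generalized}. The central observation, emphasized in Section~\ref{sec:stage2}, is that Stage II solves the same locally weighted estimating equation \eqref{eqn:emp-weighted-est-eqn} regardless of how the weights $\alpha_i(x)$ were produced in Stage I. The consistency argument of \citet{athey2019generalized}, which builds on \citet{wager2018estimation}, never invokes the gradient-based criterion $\widetilde\Delta^\grad$ directly; instead it relies only on (a) regularity of the score $\psi$ together with identifiability of $(\theta^*(x),\nu^*(x))$ as the unique root of the population moment \eqref{eqn:est-eqn}, and (b) structural properties of the forest weights, namely that the constituent trees are honest, symmetric, and grown with balanced, randomized splits. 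Since Specifications~\ref{spec:symmetric}--\ref{spec:honest} encode exactly these structural properties, the strategy is to verify that the fixed-point tree algorithm produces weights that inherit them, and then to apply the GRF consistency theorem essentially verbatim.

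First I would isolate the single property of the weights that the consistency proof genuinely uses: the neighborhood-shrinking (weight concentration) property, stating that the forest weights $\alpha_i(x)$ place asymptotically negligible mass on training points $X_i$ far from $x$, so that the effective diameter of the leaf containing $x$ tends to zero. Following \citet{wager2018estimation}, this property is purely geometric: it follows from the balanced-split ($\alpha$-regularity) and random-split conditions, which force every leaf containing $x$ to shrink in all coordinates as $n\to\infty$. Crucially, the fixed-point algorithm alters only \emph{which} feasible split maximizes the criterion (choosing $\widetilde\Delta^\FPT$ over $\widetilde\Delta^\grad$), not the constraints on tree growth; hence every fixed-point tree belongs to the same class of regular, honest trees, and the concentration lemma applies without modification.

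Next I would combine weight concentration with the regularity of $\psi$ to show that the weighted empirical estimating function $\sum_{i=1}^n \alpha_i(x)\,\psi_{\theta,\nu}(O_i)$ converges, uniformly over a neighborhood of the truth, to the conditional population moment $\bbE[\psi_{\theta,\nu}(O_i)\mid X_i=x]$. The shrinking neighborhoods, together with Lipschitz continuity of $x\mapsto\bbE[\psi_{\theta,\nu}(O)\mid X=x]$ and the moment bounds supplied by Assumptions~\ref{asm:all-assumptions}, control the bias, while honesty and forest averaging control the variance. Since $(\theta^*(x),\nu^*(x))$ is the unique zero of the limiting moment by identifiability, a standard $Z$-estimator (argmin) consistency argument then yields $(\hat\theta(x),\hat\nu(x)) \to_p (\theta^*(x),\nu^*(x))$.

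I expect the main obstacle to be conceptual rather than computational: rigorously confirming that nothing in the GRF consistency proof secretly depends on the particular splitting criterion, so that substituting $\widetilde\Delta^\FPT$ for $\widetilde\Delta^\grad$ leaves the argument intact. The delicate point is the transfer of the neighborhood-shrinking lemma, since this is the only place one might worry the criterion matters; but because shrinkage is driven by the structural split constraints of Specifications~\ref{spec:symmetric}--\ref{spec:honest} and not by the numerical value of the criterion, the lemma carries over unchanged. Note that Proposition~\ref{prop:fixed-point-tilde} is not strictly required for consistency --- it certifies the asymptotic \emph{quality} of the fixed-point splits relative to the target $\Delta$-criterion --- so the proof reduces to verifying that the fixed-point weights satisfy the same hypotheses as standard GRF weights, after which the conclusion follows directly from the established GRF theory.
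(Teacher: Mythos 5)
Your proposal is correct and matches the paper's argument: the paper likewise proves consistency by direct reduction to Theorem~3 of \citet{athey2019generalized}, observing that Specifications~\ref{spec:symmetric}--\ref{spec:honest} constrain the tree-growing procedure independently of the splitting criterion, so fixed-point trees fall within the same class of forests covered by the existing GRF result. Your additional unpacking of why the reduction works (weight concentration from the structural split constraints, plus the standard $Z$-estimator argument) is a faithful elaboration of what the cited theorem does internally, not a different route.
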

The proof of Theorem~\ref{thm:consistency} follows directly from Theorem 3 of \citet{athey2019generalized}, which, under Assumptions~\ref{asm:all-assumptions}, establishes consistency for estimates $(\hat\theta(x),\hat\nu(x))$ that solve \eqref{eqn:emp-weighted-est-eqn} with weights from a forest that satisfies Specifications~\ref{spec:symmetric}-\ref{spec:honest}. 
Thanks to Lemma~\ref{lem:Delta-implies-Delta-V-spec}, these forest specifications must also apply to a forest grown under the $\FPT$ mechanism. Specifications~\ref{spec:symmetric}-\ref{spec:randomized} collectively impose mild boundary conditions on the splitting procedure. Meanwhile, Specification~\ref{spec:subsampling} requires that trees are trained on subsamples drawn without replacement \citep{biau2008consistency, scornet2015consistency, wager2014confidence,  wager2018estimation}, and Specification~\ref{spec:honest} requires that trees must be grown using an additional subsample splitting mechanism known as honesty \citep{athey2016recursive, biau2012analysis, denil2014narrowing}. Appendix~\ref{app:honest-indep} provides a detailed explanation of the subsampling and honest sample splitting procedure.

Finally, Theorem~\ref{thm:normality} establishes the asymptotic normality of the GRF-$\FPT$ estimator $\hat\theta(x)$:
\begin{theorem}\label{thm:normality} Under the conditions of Theorem~\ref{thm:consistency}, suppose moreover that Regularity Condition~\ref{regcond:variance} holds, and that a forest is grown on subsamples of size $s$ scaling as $s = n^\beta$, where $\beta$ satisfies Regularity Condition~\ref{regcond:sampling-size}. Then, there exists a sequence $\sigma_n(x)$ such that $(\hat\theta_n(x) - \theta^*(x))/\sigma_n(x) \leadsto \cN(0,1)$ and $\sigma^2_n(x) = \textnormal{polylog}(n/s)^{-1}s/n$, where $\textnormal{polylog}(n/s)$ is a function that is bounded away from 0 and increases at most polynomially with the log of the inverse sampling ratio $\log(n/s)$.
\end{theorem}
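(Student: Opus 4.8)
The plan is to reduce the claim to the asymptotic normality theorem of \citet{athey2019generalized} (their Theorem 5), exploiting the fact that GRF-$\FPT$ modifies only Stage I. The estimator $(\hat\theta_n(x),\hat\nu_n(x))$ is defined as the solution of the locally weighted estimating equation \eqref{eqn:emp-weighted-est-eqn}, exactly as in standard GRF; the sole difference is that the weights $\alpha_i(x)$ are induced by fixed-point trees rather than gradient trees. Since \citet{athey2019generalized} derive asymptotic normality for any forest whose trees satisfy the structural forest Specifications~\ref{spec:symmetric}-\ref{spec:honest} together with Regularity Conditions~\ref{regcond:variance} and \ref{regcond:sampling-size}, the first step is to argue that these hypotheses are all met by a fixed-point forest, after which the conclusion -- including the variance rate $\sigma_n^2(x) = \textnormal{polylog}(n/s)^{-1}s/n$ -- transfers verbatim.

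First I would verify that the fixed-point tree construction preserves every forest specification; this is the crux of the argument, and the point I would stress, because the specifications are purely structural. Honesty (Specification~\ref{spec:honest}) is an external sample-splitting device layered on top of the tree, symmetry and subsampling-without-replacement (Specification~\ref{spec:subsampling}) govern how subsamples are drawn and how each tree depends on them, and the random-split and regularity conditions (Specifications~\ref{spec:symmetric}-\ref{spec:randomized}) constrain only the admissible set of candidate splits through minimum child-node fractions and a positive split probability in every coordinate. None of these properties reference the criterion used to rank admissible splits. The fixed-point tree algorithm selects among exactly the same admissible splits, merely replacing the objective $\widetilde\Delta^\grad$ with $\widetilde\Delta^\FPT$; hence the resulting forest remains honest, symmetric, regular, and random-split, and Theorem~\ref{thm:consistency} already grants consistency within this regime.

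Next I would confirm that Regularity Conditions~\ref{regcond:variance} and \ref{regcond:sampling-size} are untouched by the switch to fixed-point splits. These conditions bear on the score function $\psi$, the smoothness of $\theta^*$ and $\nu^*$, the nondegeneracy of the limiting variance, and the subsample scaling $s = n^\beta$; they describe the data-generating process and the sampling schedule, not the tree-growing objective. With the specifications and regularity conditions in hand, the remaining content of Theorem 5 of \citet{athey2019generalized} -- the H\'ajek (ANOVA) projection that identifies the leading-order term as a sum of i.i.d. influence contributions, the incremental-variance bound certifying that $\sigma_n^2(x)$ is of order $s/n$ up to the polylog factor, and the bias bound showing the leaf-diameter-induced bias is $o_P(\sigma_n(x))$ under $s = n^\beta$ -- depends only on the structural and regularity hypotheses and therefore applies unchanged.

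The main obstacle I anticipate is precisely the bias-versus-variance balance: asymptotic normality, unlike consistency, requires the squared bias to be dominated by the variance, which in \citet{athey2019generalized} follows from the geometric shrinkage of leaf diameter guaranteed by the random-split regularity combined with the Lipschitz smoothness of $\theta^*$. I would emphasize that this leaf-shrinkage rate is a consequence of the regularity Specifications~\ref{spec:symmetric}-\ref{spec:randomized} alone and is independent of which admissible split is selected, so the fixed-point criterion inherits the same rate; the approximation guarantee of Proposition~\ref{prop:fixed-point-tilde} is reassuring but is not strictly required for the limiting distribution, since the GRF asymptotics are agnostic to the splitting objective beyond the structural constraints. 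Assembling these observations, the stated central limit theorem with $\sigma_n^2(x) = \textnormal{polylog}(n/s)^{-1}s/n$ follows directly.
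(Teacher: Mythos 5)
Your proposal is correct and takes exactly the route the paper does: the paper's entire proof is the one-line observation that the result is an immediate consequence of Theorem 5 of \citet{athey2019generalized}, since the fixed-point forest satisfies Specifications~\ref{spec:symmetric}--\ref{spec:honest} and the regularity conditions, which is precisely the reduction you spell out. Your added detail on why the specifications are criterion-agnostic, and your remark that Proposition~\ref{prop:fixed-point-tilde} is not strictly needed for the limiting distribution, is a faithful (and more explicit) elaboration of the same argument.
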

The proof of Theorem~\ref{thm:normality} is an immediate consequence of Theorem 5 of \citet{athey2019generalized}. Theorems~\ref{thm:consistency} and~\ref{thm:normality} demonstrate that the GRF-$\FPT$ estimator is able to meet key statistical guarantees.


\section{Applications}\label{sec:applications}

In this section, we explore applications of GRF-$\FPT$ for two related models: varying coefficient models and heterogeneous treatment effects. We consider an outcome model of the form introduced in Section~\ref{sec:limitations}. For each observation, let $Y_i$ denote the observed outcome, $W_i = (W_{i,1},\ldots,W_{i,K})^\top$ a $K$-dimensional regressor, and $X_i$ a set of mediating auxiliary variables, such that 
\begin{equation}\label{eqn:applications-model}
    Y_i = \nu^*(X_i) + W_i^\top \theta^*(X_i) + \epsilon_i,
\end{equation}
where $\nu^*(\cdot)$ is a nuisance intercept function, $\theta^*(x) = (\theta^*_1(x),\ldots,\theta^*_K(x))^\top$ are the target effect functions local to $X_i = x$, under the assumptions $\bbE[\epsilon_i \mid X_i=x]=0$ and $\bbE[\epsilon_i W_i \mid X_i=x]={\bf 0}$.

\noindent\textbf{Varying coefficient models (VCM).} Given regressors $W_i \in \bbR^K$, models of the form \eqref{eqn:applications-model} can be characterized as varying coefficient models \citep{hastie1993varying}. As discussed in Section~\ref{sec:limitations}, we must also assume that the regressors $W_i$ are conditionally exogenous given $X_i = x$.

\noindent\textbf{Heterogeneous treatment effects (HTE).} A special case of \eqref{eqn:applications-model} arises within the Neyman-Rubin potential outcome framework, which models the causal effect of treatment on an outcome \citep{neyman1923applications, rubin1974estimating}. Here, $\theta^*(x) = (\theta^*_1(x), \ldots, \theta^*_K(x))^\top$ represents heterogeneous treatment effects associated with $K$ discrete treatment levels. Let $T_i \in \{1, \ldots, K\}$ denote the observed treatment level for the $i$-th observation, and $Y_i(k)$ the potential outcome that would have been observed if treatment level $k$ had been applied. The regressors $W_i \in \{0,1\}^K$ in \eqref{eqn:applications-model} are interpreted as a vector of dummy variables indicating the observed treatment level, $W_{i,k} \coloneqq \mathds{1}(T_i = k)$. The auxiliary variables $X_i$ account for potential confounding effects. The conditional average treatment effect of treatment level $k \in \{2, \ldots, K\}$ relative to the baseline level $k = 1$ is then defined as:
\begin{equation*}
    \theta^*_k(x) \coloneqq \mathbb{E}\left[Y_i(k) - Y_i(1) \;\middle|\; X_i = x\right],
\end{equation*}
where the baseline contrast is set to $\theta^*_1(x) \coloneqq 0$.

Under exogeneity of the regressors, the target effects $\theta^*(x)$ in models \eqref{eqn:applications-model} are identified by moment conditions \eqref{eqn:est-eqn} for scoring function \citep{angrist2009mostly, athey2019generalized}
\begin{equation*}
    \psi_{\theta,\nu}(Y_i,W_i) \coloneqq \begin{bmatrix}
        (Y_i - W_i^\top \theta - \nu) \cdot W_i \\
        Y_i - W_i^\top \theta - \nu
    \end{bmatrix}.
\end{equation*}
The gradient-based pseudo-outcomes \eqref{eqn:grad-pseudo-outcome} are computed as
\begin{equation}\label{eqn:grad-het-treatment-effect-pseudo-outcomes}
    \rho_i^\grad = -A_P^{-1}  (W_i - \overline W_P) \left(Y_i - \overline Y_P - (W_i - \overline W_P)^\top\hat\theta_P\right),
\end{equation}
where $\overline W_P$ and $\overline Y_P$ are the local means of $W_i$ and $Y_i$ over the observations in $P$. Centering $Y_i - \overline Y_P$ and $W_i - \overline W_P$ removes the baseline effect of the mean $\hat\nu_P$ on $\rho_i^\grad$, and where $A_P$ is given by \eqref{eqn:AP-matrix-hte} as:
\begin{equation}\label{eqn:AP-matrix-applications}
     A_P = -\frac{1}{n_P}\sum_{\{i:X_i\in P\}} (W_i - \overline W_P)(W_i - \overline W_P)^\top.
\end{equation}
Computing $\rho_i^\grad$ in \eqref{eqn:grad-het-treatment-effect-pseudo-outcomes} involves the OLS coefficients $\hat\theta_P$ from regressing $Y_i - \overline Y_P$ on $W_i - \overline W_P$, over the observations in $P$:
\begin{equation}\label{eqn:vcm-hte-thetaP}
    \hat\theta_P \coloneqq -A_P^{-1} \frac{1}{n_P} \sum_{\{i:X_i \in P\}} (W_i - \overline W_P) (Y_i - \overline Y_P).
\end{equation}
In comparison, $\rho_i^\FPT$ in \eqref{eqn:fp-pseudo-outcome} are computed as:
\begin{align}
    \rho^\FPT_i &\coloneqq -\xi^\top\psi_{\hat\theta_P,\hat\nu_P}(Y_i,W_i),\nonumber\\
    &= -(W_i - \overline W_P) \left(Y_i - \overline Y_P - (W_i - \overline W_P)^\top\hat\theta_P\right),\label{eqn:fp-het-treatment-effect-pseudo-outcomes}
\end{align}
The relationship $\rho_i^\grad =  A_P^{-1}\rho_i^\FPT$ reveals a significant benefit of $\FPT$ pseudo-outcomes. The form of $\rho_i^\FPT$ eliminates the computational cost associated with the multiplication of $A_P^{-1}$, leading to $\cO(K^3)$ computational savings. Furthermore, the computation of $\hat\theta_P$ in \eqref{eqn:fp-het-treatment-effect-pseudo-outcomes} no longer requires solving for $A_P^{-1}$. Therefore, we can further enhance computational efficiency by using an accelerated form of pseudo-outcome $\phi_i^\FPT$ instead of $\rho_i^\FPT$:
\begin{equation}\label{eqn:accelerated-fpt-pseudo-outcome}
    \phi_i^\FPT \coloneqq -(W_i - \overline W_P) \left(Y_i - \overline Y_P - (W_i - \overline W_P)^\top\tilde\theta_P\right),
\end{equation}
where $\hat\theta_P$ is replaced by $\tilde\theta_P$  in \eqref{eqn:fp-het-treatment-effect-pseudo-outcomes}, which is defined as a one-step gradient descent approximation of $\hat\theta_P$ taken from the origin:
\begin{equation}\label{eqn:one-step-theta-approx}
    \tilde\theta_P \coloneqq \gamma  \frac{1}{n_P}  \sum_{\{i:X_i \in P\}}(W_i - \overline W_P) (Y_i - \overline Y_P).
\end{equation}
Here, $\gamma$ denotes the exact line search step size for the regression of $Y_i - \overline Y_P$ on $W_i - \overline W_P$ over $P$:
\begin{equation}\label{eqn:step-size}
    \gamma \coloneqq \frac{\norm{(W - \overline W_P)^\top (Y - \overline Y_P)}^2_2}{\norm{(W - \overline W_P) (W - \overline W_P)^\top (Y - \overline Y_P) }_2^2},
\end{equation}
where $W = \renewcommand\arraystretch{0.75}\left[\begin{matrix} W_1~\cdots ~W_{n_P} \end{matrix}\right]^\top$ and $Y = \renewcommand\arraystretch{0.75}\left[\begin{matrix} Y_1~\cdots ~Y_{n_P} \end{matrix}\right]^\top$ with the notation $W - \overline W_P$ and $Y - \overline Y_P$ understood as row-wise centering.

The computational cost associated with $\tilde\theta_P$ is comparatively small because many of the products that appear in \eqref{eqn:one-step-theta-approx} and \eqref{eqn:step-size} are already computed as part of $\rho_i^\FPT$ in \eqref{eqn:fp-het-treatment-effect-pseudo-outcomes}. Meanwhile, we show in Appendix~\ref{app:asymp-equiv-approx-fp} that the approximation for the $\FPT$ child estimator:
\begin{equation*}\label{eqn:approx-fpt-child-estimator}
    \bar\theta_{C_j}^\FPT \coloneqq \hat\theta_P + \frac{1}{n_{C_j}}\sum_{\{i:X_i \in C_j\}}\phi_i^\FPT,
\end{equation*}
is consistent for the original $\FPT$ child estimator $\tilde\theta_{C_j}^\FPT$ as $\lVert\tilde\theta_{C_j}^\FPT - \bar\theta_{C_j}^\FPT\rVert = o_P(1)$, meaning that this approximation does not alter the asymptotic behavior of our estimator. These accelerations are particularly compelling when the dimension of $\theta^*(x)$ is large and computational efficiency is critical, as in large-scale A/B testing with multiple concurrent treatment arms or observational studies with numerous treatment levels \citep{kohavi2013online, bakshy2014designing}.

\section{Simulations}\label{sec:simulations}

In this section, we perform empirical evaluations of the computational efficiency and estimation accuracy of the GRF-$\FPT$ method. We let GRF-$\FPTone$ denote the $\FPT$ algorithm using the exact form of the $\FPT$ VCM/HTE pseudo-outcomes \eqref{eqn:fp-het-treatment-effect-pseudo-outcomes} and we let GRF-$\FPTtwo$ denote the accelerated $\FPT$ algorithm based on the form of the $\FPT$ pseudo-outcome approximation \eqref{eqn:accelerated-fpt-pseudo-outcome} in Section~\ref{sec:applications}. We compare both implementations relative to GRF-$\grad$ under VCM and HTE designs. Implementation details and links to the reproducible code are found in Appendix~\ref{app:simulation-details}. 

\noindent\textbf{Settings.} We follow the structural model in \eqref{eqn:applications-model}. The auxiliary variables $X_i$ are drawn from the Gaussian copula with latent covariance matrix $\Sigma$, where $[\Sigma]_{j,k} = (0.3)^{|j-k|}$. Supporting experiments for multicollinearity in $X_i$ can be found in Appendix~\ref{app:supporting-experiments-sims}. The outcomes $Y_i$ follow \eqref{eqn:applications-model} with Gaussian noise $\epsilon_i \sim \mathcal N(0,1)$. For VCM experiments, regressors $W_i \in \bbR^K$ are sampled from $\cN_K({\bm 0},\bbI)$. For HTE experiments, $W_i \in \{0,1\}^K$ follows a multinomial distribution, $W_i\mid X_i=x \sim \text{Multinomial}(1,(\pi_1(x),\ldots,\pi_K(x)))$, where $\pi_k(x)$ is the probability of treatment level $k \in \{1,\ldots, K\}$, characterizing a variety of different location-specific dependence structures through the setting of $\pi_k(\cdot)$.  We set $\nu^*(x) \coloneqq 0$ and vary the target effect functions $\theta^*_k(x)$ and treatment probabilities $\pi_k(x)$ across different settings, fully detailed in Appendix~\ref{app:simulation-details}. Throughout our experiments we use subsampling ratio $s/n = 0.5$. Supporting experiments under different subsample ratios are found in Appendix~\ref{app:supporting-experiments-sims}.

\noindent\textbf{Results.} The relative computational advantage of forests trained under GRF-$\FPT$ is displayed in Figure~\ref{fig:timing-ratio-vcm}, while Figure~\ref{fig:timing-vcm} (in Appendix~\ref{app:simulation-figures}) summarizes the absolute fit times across the three methods. These data show that the $\FPT$ mechanism is able to consistently offer a relative advantage, observing speedups of up to 3.5$\times$ faster than the gradient-based approach at the largest dimension $K = 256$. Figure~\ref{fig:timing-ratio-vcm} also shows increasing gains with increasing $K$ and provides an empirical measurement of the theoretical scaling benefits discussed in Section~\ref{sec:applications}. Moreover, the absolute fit times in Figure~\ref{fig:timing-vcm} (in Appendix~\ref{app:simulation-figures}) illustrate that our method consistently remains faster than GRF-$\grad$, with no clear computational or algorithmic bottleneck as a function of either $n$ or $K$. Supporting experiments exploring the effects of sample sizes up to $n = 500,000$ are presented in Appendix~\ref{app:supporting-experiments-sims}, while Figures~\ref{fig:timing-ratio-vcm-small} and~\ref{fig:timing-vcm-small} (in Appendix~\ref{app:simulation-figures}) show that even when $n$ is small, GRF-$\FPT$ still observes a noticeable gain relative to GRF-$\grad$. Additional timing benchmarks for VCM experiments and all HTE experiments are discussed in Appendix~\ref{app:simulation-figures}.

To assess estimation accuracy, we evaluate the mean squared error (MSE) of $\hat\theta(x)$ across 50 replications of the model and testing on a separate set of $5,000$ observations. Figure~\ref{fig:forest-mse-vcm} in Appendix~\ref{app:simulation-figures} confirms that GRF-$\FPT$ matches the accuracy of GRF-$\grad$, while significantly reducing computation time. Further comparisons for both VCM and HTE settings are provided in Appendix~\ref{app:simulation-figures}.

\begin{figure}[t]
    \centering
    \includegraphics[width=1\linewidth]{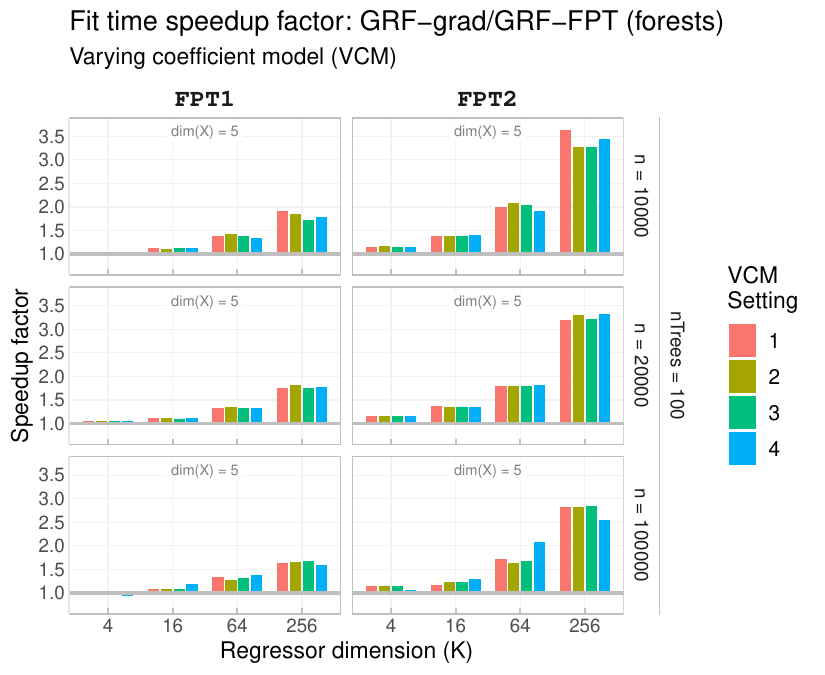}
    \caption{Speedup factor for GRF-$\FPT$ in comparison to GRF-$\grad$ for VCM timing experiments.}\label{fig:timing-ratio-vcm}
\end{figure}

\section{Real Data Application}\label{sec:real-data}

\noindent\textbf{Data.} In this section we apply GRF-$\FPT$ to the analysis of geographically-varying effects $\theta^*(x)$ on housing prices. The data, first appearing in \citet{kelley1997sparse}, contains 20,640 observations of housing prices taken from the 1990 California census. Each observation corresponds to measurements aggregated over a small geographical census block, and contains measurements of 9 variables: median housing value, longitude, latitude, median housing age, total rooms, total bedrooms, population, households, and median income. We employ a VCM design of the form \eqref{eqn:applications-model} where $Y_i$ denotes the housing value, $X_i$ denote the spatial coordinates, and $W_i = (W_{i,1},\ldots, W_{i,6})^\top$ are the remaining six regressors. Details of the model and data transformations used for the California housing analysis is found in Appendix~\ref{app:california-data}. 

\noindent\textbf{Results.} Table~\ref{tbl:california-housing-times} summarizes the computational benefit of GRF-$\FPT$ applied to the California housing data. Figure~\ref{fig:california-housing-fpt2} illustrates the six geographically-varying effect estimates under GRF-$\FPTtwo$, with qualitatively similar results shown in Figure~\ref{fig:california-housing-grad-fpt1} for GRF-$\FPTone$ and GRF-$\grad$ in Appendix~\ref{app:california-data}. Figure~\ref{fig:california-housing-fpt2} shows clearly the geographically-dependent relationship between different housing features and housing prices. In major urban centers such as LA, San Francisco, and Sacramento, housing prices tend to decrease with an increasing number of households, and may reflect overcrowding in densely populated areas. In contrast, rural regions show the opposite trend: prices rise slightly when rural areas have a larger number of housing units. This suggests that, in sparsely populated rural areas, a modest increase in households makes these places more attractive and livable. Median income, however, consistently shows a positive effect on prices across nearly all of California, while population size tends to show a negative effect, highlighting broader state-wide pressures on housing affordability.

\begin{figure}
    \centering
     \includegraphics[width=1\linewidth]{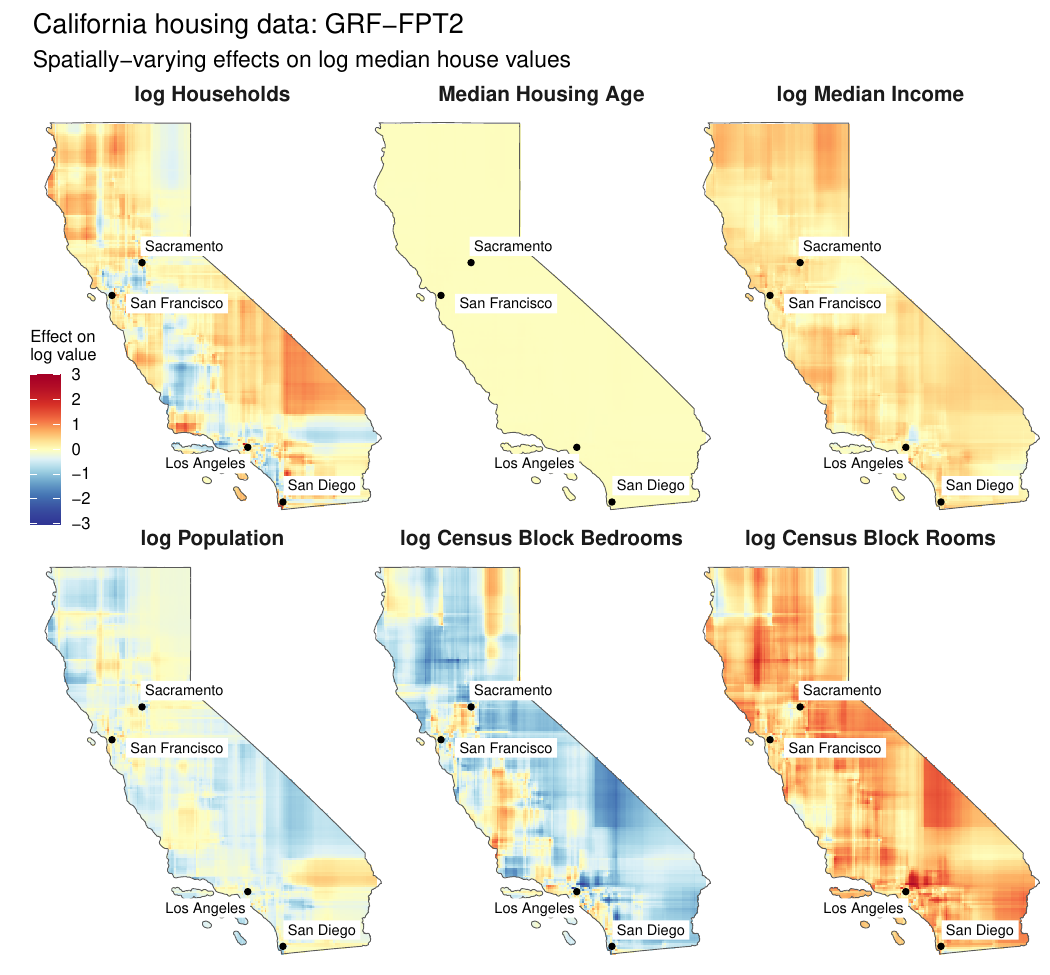}
    \caption{Geographically-varying GRF-$\FPTtwo$ estimates $\hat\theta(x)$.}
    \label{fig:california-housing-fpt2}
\end{figure}

\section{Conclusion}

Our results demonstrate that the $\FPT$ algorithm offers a substantial computational advantage over GRF-$\grad$ with comparable statistical accuracy, and highlights GRF-$\FPT$ as a powerful method for multi-dimensional estimation, particularly when estimates of the target function must be learned from the data rather than observed directly. Future work may explore extensions to larger-scale problems and alternative estimation tasks, as in unsupervised learning and structured prediction. Our findings position GRF-$\FPT$ as a scalable and robust alternative for practitioners seeking efficient localized estimation.

\clearpage
\section*{Impact Statement}

This paper presents work whose goal is to advance the field of machine learning. There are many potential societal consequences of our work, none which we feel must be specifically highlighted here.

\section*{Acknowledgments}
This work was supported by Natural Sciences and Engineering Research Council (NSERC) Discovery Grant (RGPIN-2024-06780) and FRQNT Team Research Project Grant (FRQ-NT 327788).

\bibliography{ref}

\begin{thebibliography}{55}
\providecommand{\natexlab}[1]{#1}
\providecommand{\url}[1]{\texttt{#1}}
\expandafter\ifx\csname urlstyle\endcsname\relax
  \providecommand{\doi}[1]{doi: #1}\else
  \providecommand{\doi}{doi: \begingroup \urlstyle{rm}\Url}\fi

\bibitem[Amit \& Geman(1997)Amit and Geman]{amit1997shape}
Amit, Y. and Geman, D.
\newblock Shape quantization and recognition with randomized trees.
\newblock \emph{Neural Computation}, 9\penalty0 (7):\penalty0 1545--1588, 1997.

\bibitem[Angrist \& Pischke(2009)Angrist and Pischke]{angrist2009mostly}
Angrist, J.~D. and Pischke, J.-S.
\newblock \emph{Mostly harmless econometrics: An empiricist's companion}.
\newblock Princeton university press, 2009.

\bibitem[Athey \& Imbens(2016)Athey and Imbens]{athey2016recursive}
Athey, S. and Imbens, G.
\newblock Recursive partitioning for heterogeneous causal effects.
\newblock \emph{Proceedings of the National Academy of Sciences}, 113\penalty0 (27):\penalty0 7353--7360, 2016.

\bibitem[Athey et~al.(2019)Athey, Tibshirani, and Wager]{athey2019generalized}
Athey, S., Tibshirani, J., and Wager, S.
\newblock Generalized random forests.
\newblock \emph{The Annals of Statistics}, 47\penalty0 (2):\penalty0 1148 -- 1178, 2019.
\newblock \doi{10.1214/18-AOS1709}.
\newblock URL \url{https://doi.org/10.1214/18-AOS1709}.

\bibitem[Bakshy et~al.(2014)Bakshy, Eckles, and Bernstein]{bakshy2014designing}
Bakshy, E., Eckles, D., and Bernstein, M.~S.
\newblock Designing and deploying online field experiments.
\newblock In \emph{Proceedings of the 23rd International Conference on World Wide Web}, WWW '14, pp.\  283–292, New York, NY, USA, 2014. Association for Computing Machinery.
\newblock ISBN 9781450327442.
\newblock \doi{10.1145/2566486.2567967}.
\newblock URL \url{https://doi.org/10.1145/2566486.2567967}.

\bibitem[Banach(1922)]{Banach1922}
Banach, S.
\newblock Sur les op{\'e}rations dans les ensembles abstraits et leur application aux {\'e}quations int{\'e}grales.
\newblock \emph{Fundamenta Mathematicae}, 3:\penalty0 133--181, 1922.

\bibitem[Belloni et~al.(2017)Belloni, Chernozhukov, Fernandez-Val, and Hansen]{belloni2017program}
Belloni, A., Chernozhukov, V., Fernandez-Val, I., and Hansen, C.
\newblock Program evaluation and causal inference with high-dimensional data.
\newblock \emph{Econometrica}, 85\penalty0 (1):\penalty0 233--298, 2017.

\bibitem[Biau(2012)]{biau2012analysis}
Biau, G.
\newblock Analysis of a random forests model.
\newblock \emph{The Journal of Machine Learning Research}, 13\penalty0 (1):\penalty0 1063--1095, 2012.

\bibitem[Biau et~al.(2008)Biau, Devroye, and Lugosi]{biau2008consistency}
Biau, G., Devroye, L., and Lugosi, G.
\newblock Consistency of random forests and other averaging classifiers.
\newblock \emph{Journal of Machine Learning Research}, 9\penalty0 (66):\penalty0 2015--2033, 2008.
\newblock URL \url{http://jmlr.org/papers/v9/biau08a.html}.

\bibitem[Breiman(1996)]{breiman1996bagging}
Breiman, L.
\newblock Bagging predictors.
\newblock \emph{Machine Learning}, 24:\penalty0 123--140, 1996.

\bibitem[Breiman(2001)]{breiman2001random}
Breiman, L.
\newblock Random forests.
\newblock \emph{Machine Learning}, 45:\penalty0 5--32, 2001.

\bibitem[Breiman et~al.(1984)Breiman, Friedman, Olshen, and Stone]{breiman1984classification}
Breiman, L., Friedman, J., Olshen, R.~A., and Stone, C.~J.
\newblock \emph{Classification and Regression Trees}.
\newblock CRC, 1984.
\newblock ISBN 9780412048418.

\bibitem[Cevid et~al.(2022)Cevid, Michel, N{\"a}f, B{\"u}hlmann, and Meinshausen]{cevid2022distributional}
Cevid, D., Michel, L., N{\"a}f, J., B{\"u}hlmann, P., and Meinshausen, N.
\newblock Distributional random forests: Heterogeneity adjustment and multivariate distributional regression.
\newblock \emph{Journal of Machine Learning Research}, 23\penalty0 (333):\penalty0 1--79, 2022.
\newblock URL \url{http://jmlr.org/papers/v23/21-0585.html}.

\bibitem[Chernozhukov et~al.(2018)Chernozhukov, Chetverikov, Demirer, Duflo, Hansen, Newey, and Robins]{chernozhukov2018double}
Chernozhukov, V., Chetverikov, D., Demirer, M., Duflo, E., Hansen, C., Newey, W., and Robins, J.
\newblock Double/debiased machine learning for treatment and structural parameters.
\newblock \emph{The Econometrics Journal}, 21\penalty0 (1):\penalty0 C1--C68, 01 2018.
\newblock ISSN 1368-4221.
\newblock \doi{10.1111/ectj.12097}.
\newblock URL \url{https://doi.org/10.1111/ectj.12097}.

\bibitem[Cui et~al.(2023)Cui, Kosorok, Sverdrup, Wager, and Zhu]{cui2023estimating}
Cui, Y., Kosorok, M.~R., Sverdrup, E., Wager, S., and Zhu, R.
\newblock Estimating heterogeneous treatment effects with right-censored data via causal survival forests.
\newblock \emph{Journal of the Royal Statistical Society Series B: Statistical Methodology}, 85\penalty0 (2):\penalty0 179--211, 02 2023.
\newblock ISSN 1369-7412.
\newblock \doi{10.1093/jrsssb/qkac001}.
\newblock URL \url{https://doi.org/10.1093/jrsssb/qkac001}.

\bibitem[De'ath(2002)]{de2002multivariate}
De'ath, G.
\newblock Multivariate regression trees: a new technique for modeling species--environment relationships.
\newblock \emph{Ecology}, 83\penalty0 (4):\penalty0 1105--1117, 2002.

\bibitem[Denil et~al.(2014)Denil, Matheson, and De~Freitas]{denil2014narrowing}
Denil, M., Matheson, D., and De~Freitas, N.
\newblock Narrowing the gap: Random forests in theory and in practice.
\newblock In Xing, E.~P. and Jebara, T. (eds.), \emph{Proceedings of the 31st International Conference on Machine Learning}, volume~32 of \emph{Proceedings of Machine Learning Research}, pp.\  665--673, Bejing, China, 22--24 Jun 2014. PMLR.
\newblock URL \url{https://proceedings.mlr.press/v32/denil14.html}.

\bibitem[Dietterich(2000)]{dietterich2000experimental}
Dietterich, T.~G.
\newblock An experimental comparison of three methods for constructing ensembles of decision trees: Bagging, boosting, and randomization.
\newblock \emph{Machine Learning}, 40:\penalty0 139--157, 2000.

\bibitem[Fan \& Gijbels(1996)Fan and Gijbels]{fan1996local}
Fan, J. and Gijbels, I.
\newblock \emph{Local Polynomial Modelling and Its Applications}, volume~66 of \emph{Monographs on Statistics and Applied Probability}.
\newblock Chapman \& Hall/CRC, London, 1996.
\newblock \doi{10.1201/9780203748725}.
\newblock URL \url{https://www.taylorfrancis.com/books/mono/10.1201/9780203748725}.

\bibitem[Fan et~al.(1995)Fan, Heckman, and Wand]{fan1995local}
Fan, J., Heckman, N.~E., and Wand, M.~P.
\newblock Local polynomial kernel regression for generalized linear models and quasi-likelihood functions.
\newblock \emph{Journal of the American Statistical Association}, 90\penalty0 (429):\penalty0 141--150, 1995.

\bibitem[Friedberg et~al.(2020)Friedberg, Tibshirani, Athey, and Wager]{friedberg2020local}
Friedberg, R., Tibshirani, J., Athey, S., and Wager, S.
\newblock Local linear forests.
\newblock \emph{Journal of Computational and Graphical Statistics}, 30\penalty0 (2):\penalty0 503--517, 2020.

\bibitem[Friedman(2001)]{friedman2001greedy}
Friedman, J.
\newblock Greedy function approximation: a gradient boosting machine.
\newblock \emph{Annals of Statistics}, pp.\  1189--1232, 2001.

\bibitem[Hastie \& Tibshirani(1993)Hastie and Tibshirani]{hastie1993varying}
Hastie, T. and Tibshirani, R.
\newblock Varying-coefficient models.
\newblock \emph{Journal of the Royal Statistical Society. Series B (Methodological)}, 55\penalty0 (4):\penalty0 757--796, 1993.
\newblock ISSN 00359246.
\newblock URL \url{http://www.jstor.org/stable/2345993}.

\bibitem[Imai \& Ratkovic(2013)Imai and Ratkovic]{imai2013estimating}
Imai, K. and Ratkovic, M.
\newblock {Estimating treatment effect heterogeneity in randomized program evaluation}.
\newblock \emph{The Annals of Applied Statistics}, 7\penalty0 (1):\penalty0 443 -- 470, 2013.
\newblock \doi{10.1214/12-AOAS593}.
\newblock URL \url{https://doi.org/10.1214/12-AOAS593}.

\bibitem[Johansson et~al.(2016)Johansson, Shalit, and Sontag]{johansson2016learning}
Johansson, F., Shalit, U., and Sontag, D.
\newblock Learning representations for counterfactual inference.
\newblock In \emph{International Conference on Machine Learning}, pp.\  3020--3029. PMLR, 2016.

\bibitem[{Kelley Pace} \& Barry(1997){Kelley Pace} and Barry]{kelley1997sparse}
{Kelley Pace}, R. and Barry, R.
\newblock Sparse spatial autoregressions.
\newblock \emph{Statistics \& Probability Letters}, 33\penalty0 (3):\penalty0 291--297, 1997.
\newblock ISSN 0167-7152.
\newblock \doi{https://doi.org/10.1016/S0167-7152(96)00140-X}.
\newblock URL \url{https://www.sciencedirect.com/science/article/pii/S016771529600140X}.

\bibitem[Knaus et~al.(2021)Knaus, Lechner, and Strittmatter]{knaus2021machine}
Knaus, M.~C., Lechner, M., and Strittmatter, A.
\newblock Machine learning estimation of heterogeneous causal effects: Empirical monte carlo evidence.
\newblock \emph{The Econometrics Journal}, 24\penalty0 (1):\penalty0 134--161, 2021.

\bibitem[Kohavi et~al.(2013)Kohavi, Deng, Frasca, Walker, Xu, and Pohlmann]{kohavi2013online}
Kohavi, R., Deng, A., Frasca, B., Walker, T., Xu, Y., and Pohlmann, N.
\newblock Online controlled experiments at large scale.
\newblock In \emph{Proceedings of the 19th ACM SIGKDD International Conference on Knowledge Discovery and Data Mining}, KDD '13, pp.\  1168–1176, New York, NY, USA, 2013. Association for Computing Machinery.
\newblock ISBN 9781450321747.
\newblock \doi{10.1145/2487575.2488217}.
\newblock URL \url{https://doi.org/10.1145/2487575.2488217}.

\bibitem[K{\"u}nzel et~al.(2019)K{\"u}nzel, Sekhon, Bickel, and Yu]{kunzel2019metalearners}
K{\"u}nzel, S.~R., Sekhon, J.~S., Bickel, P.~J., and Yu, B.
\newblock Metalearners for estimating heterogeneous treatment effects using machine learning.
\newblock \emph{Proceedings of the National Academy of Sciences}, 116\penalty0 (10):\penalty0 4156--4165, 2019.

\bibitem[Lee et~al.(2020)Lee, Veerubhotla, Jeong, and Lee]{lee2020deep}
Lee, Y., Veerubhotla, K., Jeong, M.~H., and Lee, C.~H.
\newblock Deep learning in personalization of cardiovascular stents.
\newblock \emph{Journal of Cardiovascular Pharmacology and Therapeutics}, 25\penalty0 (2):\penalty0 110--120, 2020.

\bibitem[Lewbel(2007)]{lewbel2007local}
Lewbel, A.
\newblock A local generalized method of moments estimator.
\newblock \emph{Economics Letters}, 94\penalty0 (1):\penalty0 124--128, 2007.

\bibitem[Lindel{\"o}f(1894)]{Lindelof1894}
Lindel{\"o}f, E.
\newblock Sur l'application de la m{\'e}thode des approximations successives aux {\'e}quations diff{\'e}rentielles ordinaires du premier ordre.
\newblock \emph{Comptes Rendus Hebdomadaires des S{\'e}ances de l'Acad{\'e}mie des Sciences}, 116:\penalty0 454--457, 1894.

\bibitem[Murdoch et~al.(2019)Murdoch, Singh, Kumbier, Abbasi-Asl, and Yu]{murdoch2019definitions}
Murdoch, W.~J., Singh, C., Kumbier, K., Abbasi-Asl, R., and Yu, B.
\newblock Definitions, methods, and applications in interpretable machine learning.
\newblock \emph{Proceedings of the National Academy of Sciences}, 116\penalty0 (44):\penalty0 22071--22080, 2019.

\bibitem[Newey(1994)]{newey1994kernel}
Newey, W.~K.
\newblock Kernel estimation of partial means and a general variance estimator.
\newblock \emph{Econometric Theory}, 10\penalty0 (2):\penalty0 1--21, 1994.

\bibitem[Neyman(1923)]{neyman1923applications}
Neyman, J.
\newblock Sur les applications de la th{\'e}orie des probabilit{\'e}s aux experiences agricoles: Essai des principes.
\newblock \emph{Roczniki Nauk Rolniczych}, 10\penalty0 (1):\penalty0 1--51, 1923.
\newblock Reprinted and translated in Neyman, J. (1990). Statistical Science, 5(4), 463--480.

\bibitem[Nie \& Wager(2021)Nie and Wager]{nie2021quasi}
Nie, X. and Wager, S.
\newblock Quasi-oracle estimation of heterogeneous treatment effects.
\newblock \emph{Biometrika}, 108\penalty0 (2):\penalty0 299--319, 2021.

\bibitem[Picard(1890)]{Picard1890}
Picard, {\'E}.
\newblock M{\'e}moire sur la th{\'e}orie des {\'e}quations aux d{\'e}riv{\'e}es partielles et la m{\'e}thode des approximations successives.
\newblock \emph{Journal de Math{\'e}matiques Pures et Appliqu{\'e}es}, 6:\penalty0 145--210, 1890.

\bibitem[Powers et~al.(2018)Powers, Qian, Jung, Schuler, Shah, Hastie, and Tibshirani]{powers2018some}
Powers, S., Qian, J., Jung, K., Schuler, A., Shah, N.~H., Hastie, T., and Tibshirani, R.
\newblock Some methods for heterogeneous treatment effect estimation in high dimensions.
\newblock \emph{Statistics in Medicine}, 37\penalty0 (11):\penalty0 1767--1787, 2018.

\bibitem[Robins \& Ritov(1997)Robins and Ritov]{robins1997toward}
Robins, J.~M. and Ritov, Y.
\newblock Toward a curse of dimensionality appropriate (coda) asymptotic theory for semi-parametric models.
\newblock \emph{Statistics in Medicine}, 16\penalty0 (1-3):\penalty0 285--319, 1997.
\newblock \doi{10.1002/(SICI)1097-0258(19970215)16:3<285::AID-SIM535>3.0.CO;2-\#}.
\newblock URL \url{https://pubmed.ncbi.nlm.nih.gov/9004398/}.

\bibitem[Robinson(1988)]{robinson1988root}
Robinson, P.~M.
\newblock Root-n-consistent semiparametric regression.
\newblock \emph{Econometrica: Journal of the Econometric Society}, pp.\  931--954, 1988.

\bibitem[Rubin(1974)]{rubin1974estimating}
Rubin, D.~B.
\newblock Estimating causal effects of treatments in randomized and nonrandomized studies.
\newblock \emph{Journal of Educational Psychology}, 66\penalty0 (5):\penalty0 688, 1974.

\bibitem[Ryu \& Boyd(2016)Ryu and Boyd]{RyuBoyd2016}
Ryu, E.~K. and Boyd, S.
\newblock A primer on monotone operator methods (survey).
\newblock \emph{Applied and Computational Mathematics}, 15\penalty0 (1):\penalty0 3--43, 2016.
\newblock Survey article.

\bibitem[Scornet et~al.(2015)Scornet, Biau, and Vert]{scornet2015consistency}
Scornet, E., Biau, G., and Vert, J.-P.
\newblock {Consistency of random forests}.
\newblock \emph{The Annals of Statistics}, 43\penalty0 (4):\penalty0 1716 -- 1741, 2015.
\newblock \doi{10.1214/15-AOS1321}.
\newblock URL \url{https://doi.org/10.1214/15-AOS1321}.

\bibitem[Segal(1992)]{segal1992tree}
Segal, M.~R.
\newblock Tree-structured methods for longitudinal data.
\newblock \emph{Journal of the American Statistical Association}, 87\penalty0 (418):\penalty0 407--418, 1992.

\bibitem[Severini \& Staniswalis(1994)Severini and Staniswalis]{severini1994quasi}
Severini, T.~A. and Staniswalis, J.~G.
\newblock Quasi-likelihood estimation in semiparametric models.
\newblock \emph{Journal of the American Statistical Association}, 89\penalty0 (426):\penalty0 501--511, 1994.

\bibitem[Shalit et~al.(2017)Shalit, Johansson, and Sontag]{shalit2017estimating}
Shalit, U., Johansson, F.~D., and Sontag, D.
\newblock Estimating individual treatment effect: generalization bounds and algorithms.
\newblock In Precup, D. and Teh, Y.~W. (eds.), \emph{Proceedings of the 34th International Conference on Machine Learning}, volume~70 of \emph{Proceedings of Machine Learning Research}, pp.\  3076--3085. PMLR, 06--11 Aug 2017.
\newblock URL \url{https://proceedings.mlr.press/v70/shalit17a.html}.

\bibitem[Speckman(1988)]{speckman1988kernel}
Speckman, P.
\newblock Kernel smoothing in partial linear models.
\newblock \emph{Journal of the Royal Statistical Society. Series B (Methodological)}, 50\penalty0 (3):\penalty0 413--436, 1988.
\newblock ISSN 00359246.
\newblock URL \url{http://www.jstor.org/stable/2345705}.

\bibitem[Staniswalis(1989)]{staniswalis1989kernel}
Staniswalis, J.~G.
\newblock The kernel estimate of a regression function in likelihood-based models.
\newblock \emph{Journal of the American Statistical Association}, 84\penalty0 (405):\penalty0 276--283, 1989.

\bibitem[Tibshirani et~al.(2024)Tibshirani, Athey, Sverdrup, and Wager]{grfpackage}
Tibshirani, J., Athey, S., Sverdrup, E., and Wager, S.
\newblock \emph{grf: Generalized Random Forests}, 2024.
\newblock URL \url{https://github.com/grf-labs/grf}.
\newblock R package version 2.4.0.

\bibitem[Wager \& Athey(2018)Wager and Athey]{wager2018estimation}
Wager, S. and Athey, S.
\newblock Estimation and inference of heterogeneous treatment effects using random forests.
\newblock \emph{Journal of the American Statistical Association}, 113\penalty0 (523):\penalty0 1228--1242, 2018.

\bibitem[Wager \& Walther(2015)Wager and Walther]{wager2015adaptive}
Wager, S. and Walther, G.
\newblock Adaptive concentration of regression trees, with application to random forests.
\newblock \emph{arXiv preprint arXiv:1503.06388}, 2015.

\bibitem[Wager et~al.(2014)Wager, Hastie, and Efron]{wager2014confidence}
Wager, S., Hastie, T., and Efron, B.
\newblock Confidence intervals for random forests: The jackknife and the infinitesimal jackknife.
\newblock \emph{Journal of Machine Learning Research}, 15\penalty0 (1):\penalty0 1625--1651, 2014.
\newblock URL \url{https://jmlr.org/papers/volume15/wager14a/wager14a.pdf}.

\bibitem[Yang et~al.(2021)Yang, Gu, Zhao, and Fan]{yang2021flexible}
Yang, Y., Gu, Y., Zhao, Y., and Fan, J.
\newblock Flexible regularized estimating equations: Some new perspectives.
\newblock 2021.
\newblock URL \url{https://arxiv.org/abs/2110.11074}.

\bibitem[Zeileis \& Hornik(2007)Zeileis and Hornik]{zeileis2007generalized}
Zeileis, A. and Hornik, K.
\newblock Generalized m-fluctuation tests for parameter instability.
\newblock \emph{Statistica Neerlandica}, 61\penalty0 (4):\penalty0 488--508, 2007.

\bibitem[Zeileis et~al.(2008)Zeileis, Hothorn, and Hornik]{zeileis2008model}
Zeileis, A., Hothorn, T., and Hornik, K.
\newblock Model-based recursive partitioning.
\newblock \emph{Journal of Computational and Graphical Statistics}, 17\penalty0 (2):\penalty0 492--514, 2008.

\end{thebibliography}
\bibliographystyle{icml2025}

\newpage
\appendix
\onecolumn

\section{Technical Preliminaries}\label{app:prelim}

\subsection{Assumptions}\label{app:assumptions}

We follow the key assumptions of \citet{athey2019generalized} made for the theoretical analyses of GRF. The predictor and parameter spaces are both subsets of Euclidean space such that $x \in \cX = [0,1]^p$ and $(\theta, \nu) \in \cB\subset\bbR^K$, where $\cB$ is a compact subset of $\bbR^K$. Under the analyses of \citet{wager2015adaptive}, we suppose that the features of the auxiliary covariates $X_i = (X_{i,1},\ldots,X_{i,p})^\top$ have density $f_X$ that is bounded away from 0 and $\infty$, i.e. $c \leq f_X(x) \leq C < \infty$, for some constants $c > 0$ and $C < \infty$. GRF does not require that the score function $\psi$ is continuous in $(\theta,\nu)$, as is the case for quantile estimation, one does require that the expected score/moment function
\begin{equation}\label{eqn:m-function}
    M_{\theta,\nu}(x) \coloneqq \bbE_{O|X}\left[\psi_{\theta,\nu}(O) \mid X =x\right],
\end{equation}
is smoothly varying in its parameters $(\theta,\nu)$.

\begin{enumerate}[
    label={\textsc{Assumption} \arabic{enumi}.},
    wide,
    labelindent=0pt,
    ref={\arabic{enumi}}
]\label{asm:all-assumptions}

    \item\label{asm:lipschitz-M} For fixed $(\theta,\nu)$, the $M$-function \eqref{eqn:m-function} is Lipschitz continuous in $x$.
    
    \item\label{asm:differentiable-M} For fixed $x$, the $M$-function is twice-differentiable in $(\theta,\nu)$ with uniformly bounded second derivative, 
    \begin{equation*}
        \left\VERT \nabla^2_{(\theta,\nu)} M_{\theta,\nu}(x) \right\VERT < \infty,
    \end{equation*}
    where $\left\VERT\cdot\right\VERT$ denotes the appropriate tensor norm for the second derivative of $M_{\theta,\nu}$ taken with respect to $(\theta,\nu)$. Let $V(x) \coloneqq \nabla_{(\theta,\nu)} M_{\theta,\nu}(x)\big|_{\theta=\theta^*(x),\nu=\nu^*(x)}$ denote the population Jacobian at the true $(\theta^*(x),\nu^*(x))$, and assume that $V(x)$ is invertible for all $x \in \cX$. We write $V(x)$ in block form as
    \begin{equation}\label{eqn:V-block-form}
        V(x) = \begin{bmatrix}
            V_{\theta\theta}(x) & V_{\theta\nu}(x) \\
            V_{\nu\theta}(x) & V_{\nu\nu}(x)
        \end{bmatrix}.
    \end{equation}
    
    \item\label{asm:continuous-psi} The score functions $\psi_{\theta,\nu}(O_i)$ have a continuous covariance structure in the following sense: Let $\gamma(\cdot,\cdot)$ denote the worst-case variogram:
    \begin{equation*}
        \gamma\left( \begin{bmatrix}
            \theta_1\\ \nu_1
        \end{bmatrix},\begin{bmatrix}
            \theta_2 \\ \nu_2
        \end{bmatrix} \right) \coloneqq \sup_{x\in\cX} \left\{ \norm{ \var_{O|X}\left( \psi_{\theta_1,\nu_1}(O_i) - \psi_{\theta_2,\nu_2}(O_i) \;\middle|\; X_i = x\right)}_F \right\},
    \end{equation*}
    then, for some $L > 0$,
    \begin{equation*}
        \gamma\left( \begin{bmatrix}
            \theta_1\\ \nu_1
        \end{bmatrix},\begin{bmatrix}
            \theta_2 \\ \nu_2
        \end{bmatrix} \right) \leq L \left\lVert \begin{bmatrix}
            \theta_1\\ \nu_1
        \end{bmatrix} - \begin{bmatrix}
            \theta_2 \\ \nu_2
        \end{bmatrix}  \right\rVert_2, \quad \text{for all } (\theta_1,\nu_1),~(\theta_2,\nu_2).
    \end{equation*}
    
    \item\label{asm:lipschitz-bounded-psi} The score function $\psi_{\theta,\nu}(O_i)$ can be written as
    \begin{equation*}
        \psi_{\theta,\nu}(O_i) = \lambda(\theta,\nu;O_i) + \zeta_{\theta,\nu}(g(O_i)),
    \end{equation*}
    where $\lambda$ is Lipschitz-continuous in $(\theta,\nu)$, $g:\{O_i\}\to\bbR$ a univariate summary of the observables $O_i$, and $\zeta_\theta:\bbR\to\bbR$ any family of monotone and bounded functions.
    
    \item\label{asm:weights} For any weights $\alpha_i$ with $\sum \alpha_i = 1$, the minimizer $(\hat\theta,\hat\nu)$ of the weighted empirical estimation problem \eqref{eqn:emp-weighted-est-eqn} satisfies:
    \begin{equation*}
        \left\lVert \sum^n_{i =1}\alpha_i \psi_{\hat\theta,\hat\nu}(O_i) \right\rVert_2 \leq C \max_{1\leq i\leq n}\{\alpha_i\}, \quad \text{for $C \geq 0$.}
    \end{equation*}
    
    \item\label{asm:convexity} The score function $\psi_{\theta,\nu}(O_i)$ is a negative subgradient of a convex function, and the moment function $M_{\theta,\nu}(X_i)$ is the negative gradient of a strongly convex function.
\end{enumerate}

\subsection{Forest specifications}

The consistency and asymptotic normality results, Theorems~\ref{thm:consistency} and~\ref{thm:normality}, require that the forest trained following Algorithm~\ref{alg:grffpt-pseudocode-stage1-forest} consists of trees that satisfy a certain set of specifications. These forest specifications are precisely those imposed by \citet{athey2019generalized} for forests of gradient-based trees, and collectively, these specifications describe fairly mild conditions on the tree splitting mechanism, as well as specific requirements for the sampling procedure. 

\begin{enumerate}[
    label={\textsc{Specification} \arabic{enumi}.},
    wide,
    labelindent=0pt,
    ref={\arabic{enumi}}
]\label{spec:all-specifications}

    \item\label{spec:symmetric} ({\em Symmetric}) Tree estimates are invariant to permutations of the training indices. In other words, the output of a tree does not depend on the order in which the training samples are indexed.

    \item\label{spec:balanced} ({\em Balanced/$\omega$-regular}) The proportion of parent observations assigned into either child is bound below by some $\omega > 0$, i.e. $n_{C_j} \geq \omega n_P$. 

    \item\label{spec:randomized} ({\em Randomized/random-split}) The probability of splitting along any feature/dimension of the input space is bound below by some $\pi > 0$.

    \item\label{spec:subsampling} ({\em Subsampling}) Trees are trained on subsample of size $s$, drawn without replacement from $n$ training samples, where $s/n\to 0$ as $s\to\infty$.

    \item\label{spec:honest} ({\em Honesty}) Trees are trained using the sample splitting procedure described in Appendix~\ref{app:honest-indep}.
\end{enumerate}

\subsection{Regularity conditions}

\begin{enumerate}[
    label={\textsc{Regularity condition} \arabic{enumi}.},
    wide,
    labelindent=0pt,
    ref={\arabic{enumi}}
]\label{regcond:all-regularity-conditions}

    \item\label{regcond:variance}  Let $V(x)$ be as defined in Assumption~\ref{asm:differentiable-M} and let $\rho_i^*(x)$ denote the influence function of the $i$-th observation with respect to the target $\theta^*(x)$:
    \begin{equation*}
        \rho_i^*(x) := -\xi^\top V(x)^{-1}\psi_{\theta^*(x),\nu^*(x)}(O_i).
    \end{equation*}
    Then,
    \begin{equation*}
        \var(\rho_i^*(x)\mid X_i=x) > 0, \qquad \text{for all $x \in \cX$.}
    \end{equation*}

    \item\label{regcond:sampling-size} Trees are grown on subsamples of size $s$ scaling as $s = n^\beta$, for some subsample scaling exponent $\beta$ bound according to $\beta_\text{min} < \beta < 1$, such that
    \begin{equation*}
        \beta_\textnormal{min} := 1 - \left(1 + \frac{1}{\pi} \cdot \frac{\log \left(\omega^{-1} \right)}{\log \left((1 - \omega)^{-1} \right)} \right)^{-1} < \beta < 1,
    \end{equation*}
    where $0 < \pi,\omega < 1$ are constants defined in forest Specifications~\ref{spec:balanced} and~\ref{spec:randomized}.
\end{enumerate}

\subsection{Neyman orthogonality}\label{app:neyman-example-vcm-hte}

To identify the underlying local parameters $(\theta^*(x),\nu^*(x)) \in \bbR^K$ one must have a score $\psi_{\theta,\nu}(O)$ with at least $K = K_\theta + K_\nu$ components, where here we use $K_\theta$ and $K_\nu$ to denote the dimensions of the component subvectors $\theta^*(x) \in \bbR^{K_\theta}$ and $\nu^*(x) \in \bbR^{K_\nu}$. Conceptually, a score $\psi_{\theta,\nu}(O)$ can be partitioned into the components that identify the $\theta$-coordinates, denoted by $\psi_1$, and those that identify the $\nu$-coordinates, denoted by $\psi_2$, and thus the moment functions $M_{\theta,\nu}(x)$ in \eqref{eqn:m-function} can also be partitioned the same way:
\begin{equation*}
    \psi_{\theta,\nu}(O) = \begin{bmatrix}
        \psi_1(\theta,\nu;O)  \\
        \psi_2(\theta,\nu;O)
    \end{bmatrix}, \qquad M_{\theta,\nu}(x) = \begin{bmatrix}
        M_1(\theta,\nu;x) \\
        M_2(\theta,\nu;x)
    \end{bmatrix} = \begin{bmatrix}
        \mathbb E[\psi_1(\theta,\nu;O)\mid X = x] \\
        \mathbb E[\psi_2(\theta,\nu;O)\mid X = x] \\
    \end{bmatrix}.
\end{equation*}
The corresponding Jacobian matrix of $M_{\theta,\nu}(x)$ taken with respect to $(\theta,\nu)$ and evaluated at the truth $(\theta^*(x),\nu^*(x))$ is
\begin{equation*}
    V(x) = \nabla_{(\theta,\nu)} \left.M(\theta,\nu;x)\right|_{\theta=\theta^*(x),\nu=\nu^*(x)} = \begin{bmatrix}
        V_{\theta\theta}(x) & V_{\theta\nu}(x) \\
        V_{\nu\theta}(x) & V_{\nu\nu}(x)
    \end{bmatrix},
\end{equation*}
where here the subscripts in the block expressions of $V(x)$ indicate the coordinates with which the gradient is taken, and in all cases are evaluated at the truth $(\theta^*(x),\nu^*(x))$:
\begin{align*}
    V_{\theta\theta}(x) &= \nabla_{\theta} \left.M_1(\theta,\nu;x)\right|_{\theta=\theta^*(x),\nu=\nu^*(x)}, \\
    V_{\theta\nu}(x) &= \nabla_{\nu} \left.M_1(\theta,\nu;x)\right|_{\theta=\theta^*(x),\nu=\nu^*(x)}, \\
    V_{\nu\theta}(x) &= \nabla_{\theta} \left.M_2(\theta,\nu;x)\right|_{\theta=\theta^*(x),\nu=\nu^*(x)}, \\
    V_{\nu\nu}(x) &= \nabla_{\nu} \left.M_2(\theta,\nu;x)\right|_{\theta=\theta^*(x),\nu=\nu^*(x)}.
\end{align*}
In this context, the assumption of Neyman orthogonal moment conditions is more completely labeled as Neyman orthogonality for the estimation of $\theta^*(x)$ with respect to the nuisance $\nu^*(x)$, and can be summarized as an assumption that the moment conditions for $\theta^*(x)$ are insensitive to first-order changes in $\nu$ around the truth $\nu^*(x)$ whenever $\theta = \theta^*(x)$. For GRF, this means that one assumes \eqref{eqn:est-eqn} satisfies $M_1(\theta^*(x),\nu^*(x);x) = {\bf 0}$, and in other words, the partial derivatives of the moment functions for $\theta^*(x)$ with respect to $\nu$ are zero at $(\theta^*(x),\nu^*(x))$:
\begin{equation*}
    V_{\theta\nu}(x) = {\bf 0}.
\end{equation*}

\subsection{Example: Neyman orthogonality for VCM and HTE}\label{app:example-neyman-orthog}

Consider the VCM/HTE model with data $(Y_i, W_i, X_i)$ related according to
\begin{equation*}
    \mathbb E[Y_i\mid X_i = x] = \nu^*(x) + W_i^\top \theta^*(x),
\end{equation*}
such that, as discussed in Section~\ref{sec:limitations}, the score function $\psi_{\theta,\nu}$ that identifies the underlying $(\theta^*(x),\nu^*(x))$ is
\begin{equation*}
    \psi_{\theta,\nu}(Y_i, W_i) \coloneqq \begin{bmatrix}
        (Y_i - W_i^\top \theta - \nu) W_i\\
        Y_i - W_i^\top \theta - \nu 
    \end{bmatrix},
\end{equation*}
and the corresponding local Jacobian $V(x)$ has block form
\begin{equation*}
    V(x) = -\mathbb E\left[\begin{bmatrix}
        W_iW_i^\top & W_i^\top \\
        W_i & 1
    \end{bmatrix}\;\middle|\;X_i = x\right] = -\begin{bmatrix}
        \mathbb E[W_iW_i^\top \mid X_i = x] &  \mathbb E[W_i^\top \mid X_i = x] \\
        \mathbb E[W_i \mid X_i = x] & 1
    \end{bmatrix}.
\end{equation*}
Therefore, for Neyman orthogonality to hold one requires that $\mathbb E[W_i \mid X_i = x] = {\bf 0}$.

\section{Derivations and Proofs}

\subsection{Proofs for Section~\ref{sec:pseudo-outcomes}}\label{app:proofs-pseudo-outcomes}

\subsubsection{Multivariate CART criteria}\label{app:multivariate-cart-criteria}

Let $\rho_i \in \bbR^K$ be vector-valued responses associated with covariates $X_i \in P$. A standard CART split $(C_1,C_2)$ of $P$ minimizes the conventional least-squares criterion:
\begin{equation}\label{eqn:cart-criterion-appendix}
    \sum_{\{i\,:\,X_i \in C_1\}} \norm{\rho_i - \bar \rho_{C_1}}^2 + \sum_{\{i\,:\,X_i \in C_2\}} \norm{\rho_i - \bar \rho_{C_2}}^2,
\end{equation}
where $\bar\rho_{C_j} \coloneqq n_{C_j}^{-1}\sum_{\{i:X_i \in C_j\}} \rho_i$ is the local prediction over child node $C_j$. We verify that a split $(C_1,C_2)$ minimizes \eqref{eqn:cart-criterion-appendix} if and only if it maximizes
\begin{equation}
\label{eqn:cart-criterion_eq}
n_{C_1}\norm{\bar\rho_{C_1}}^2 + n_{C_2}\norm{\bar\rho_{C_2}}^2.
\end{equation}

\begin{proof} Each sum in \eqref{eqn:cart-criterion-appendix} can be expanded as
\begin{align*}
    \sum_{\{i\,:\,X_i \in C_j\}} \norm{\rho_i - \bar \rho_{C_j}}^2 &= \sum_{\{i:X_i\in P\}} \norm{\rho_i - \bar\rho_{C_j}}^2\cdot\mathds 1(X_i \in C_j), \\
    &= \sum_{\{i:X_i\in P\}} \left(\norm{\rho_i}^2 - 2\rho_i^\top \bar\rho_{C_j} + \norm{\bar\rho_{C_j}}^2\right) \cdot\mathds 1(X_i \in C_j), \\
    &= \sum_{\{i:X_i\in P\}} \norm{\rho_i}^2 \cdot \mathds 1(X_i \in C_j) - n_{C_j} \norm{\bar\rho_{C_j}}^2.
\end{align*}
Therefore, the least-squares criterion CART \eqref{eqn:cart-criterion-appendix} is equivalently written as
\begin{align*}
    \sum_{j=1,2} \sum_{\{i\,:\,X_i \in C_j\}} \norm{\rho_i - \bar \rho_{C_j}}^2 &= \sum_{j=1,2} \left( \sum_{\{i:X_i\in P\}} \norm{\rho_i}^2 \cdot \mathds 1(X_i \in C_j) - n_{C_j} \norm{\bar\rho_{C_j}}^2 \right), \\
    &= \sum_{j=1,2}  \left(\sum_{\{i:X_i\in P\}} \norm{\rho_i}^2 \cdot \mathds 1(X_i \in C_j) \right) - \left(n_{C_1}  \norm{\bar\rho_{C_1}}^2  + n_{C_2}  \norm{\bar\rho_{C_2}}^2 \right), \\
    &= \sum_{\{i:X_i\in P\}} \norm{\rho_i}^2 - \left( n_{C_1}\norm{\bar\rho_{C_1}}^2 + n_{C_2} \norm{\bar\rho_{C_2}}^2\right).
\end{align*}
The first term does not depend on the choice of split, and therefore the split that minimizes \eqref{eqn:cart-criterion-appendix} is equivalent to the split that maximizes \eqref{eqn:cart-criterion_eq}.
\end{proof}

\subsubsection{Splits via CART on pseudo-outcomes}\label{app:splits-via-CART-pseudo-outcomes}

The following result is a generalization to the claim made in Section~\ref{sec:pseudo-outcomes} that a CART split on pseudo-outcomes $\rho_i^\FPT$ will produce a split that maximizes the $\widetilde\Delta^\FPT$-criterion, and is sufficiently general to cover gradient-based pseudo-outcomes $\rho_i^\grad$ and the corresponding $\widetilde\Delta^\grad$-criterion.

\begin{lemma}\label{lem:affine-equivalence-pseudo-outcome} Suppose we can write
\begin{equation}\label{eqn:affine-pseudo-outcome-theta-estimator}
    \tilde\theta_{C_j} =  a + \frac{1}{n_{C_j}} \sum_{\{i:X_i\in C_j\}} \rho_i, \qquad \rho_i = -B \psi_{\hat\theta_P,\hat\nu_P}(O_i),
\end{equation}
where $a$ and $B$ denote appropriately sized vectors and matrices whose values do not depend on the candidate child node $C_j$. Under Assumptions~\ref{asm:all-assumptions}, the split $(C_1,C_2)$ that maximizes
\begin{equation*}
\widetilde\Delta(C_1,C_2) = \frac{n_{C_1}n_{C_2}}{n_P^2} \norm{\tilde\theta_{C_1} - \tilde\theta_{C_2}}^2,
\end{equation*}
is exactly the split chosen by CART for vector-valued responses $\rho_i$ fit over covariates $X_i \in P$.
\end{lemma}

\begin{proof}[{\bf Proof of Lemma~\ref{lem:affine-equivalence-pseudo-outcome}}] The scores $\psi_{\theta,\nu}(O_i)$ satisfy subgradient conditions by Assumption~\ref{asm:convexity}, and therefore the parent solutions $(\hat\theta_P,\hat\nu_P)$ satisfy the first-order conditions
\begin{equation*}
    \sum_{\{i:X_i \in P\}} \psi_{\hat\theta_P,\hat\nu_P}(O_i) = {\bf 0}.
\end{equation*}
Hence,
\begin{align*}
    {\bf 0} =  \sum_{\{i:X_i\in P\}} \psi_{\hat\theta_P,\hat\nu_P}(O_i) &= \sum_{\{i:X_i \in C_1\}} \psi_{\hat\theta_P,\hat\nu_P}(O_i) + \sum_{\{i :X_i\in C_2\}} \psi_{\hat\theta_P,\hat\nu_P}(O_i), \\
    &= -B\left( \sum_{\{i:X_i \in C_1\}}  \psi_{\hat\theta_P,\hat\nu_P}(O_i) + \sum_{\{i:X_i \in C_2\}} \psi_{\hat\theta_P,\hat\nu_P}(O_i)\right), \\
    &= \sum_{\{i:X_i \in C_1\}} \rho_i + \sum_{\{i:X_i \in C_2\}} \rho_i.
\end{align*}
Each sum in the previous expression is equivalently written as $\sum \rho_i = n_{C_j} (\tilde\theta_{C_j} - a)$. Hence,
\begin{align*}
    {\bf 0} &= \sum_{\{i:X_i \in C_1\}}\rho_i + \sum_{\{i:X_i \in C_2\}}\rho_i, \\
    &= n_{C_1} (\tilde\theta_{C_1} - a) + n_{C_2} (\tilde\theta_{C_2} - a), \\
    \iff a &= \frac{n_{C_1}}{n_P} \tilde\theta_{C_1} + \frac{n_{C_2}}{n_P}\tilde\theta_{C_2}.
\end{align*}
Writing $\bar\rho_{C_j} \coloneqq \frac{1}{n_{C_j}}\sum_{\{i:X_i \in C_j\}} \rho_i$, one has:
\begin{align*}
    \bar\rho_{C_1} &=  \tilde\theta_{C_1} - a, \\
    &= \tilde\theta_{C_1} -  \frac{n_{C_1}}{n_P} \tilde\theta_{C_1} - \frac{n_{C_2}}{n_P}\tilde\theta_{C_2}, \\
    &= \frac{n_{C_2}}{n_P} \left(\tilde\theta_{C_1} - \tilde\theta_{C_2}\right),
\end{align*}
and
\begin{equation*}
    \frac{n_{C_1}}{n_P}\norm{\bar\rho_{C_1}}^2 = \frac{n_{C_1} n_{C_2}^2}{n_P^3} \norm{\tilde\theta_{C_1} - \tilde\theta_{C_2}}^2.
\end{equation*}
Applying analogous arguments with respect to $C_2$, one has the symmetric result:
\begin{equation*}
    \frac{n_{C_2}}{n_P}\norm{\bar\rho_{C_2}}^2 = \frac{n_{C_2} n_{C_1}^2}{n_P^3} \norm{\tilde\theta_{C_1} - \tilde\theta_{C_2}}^2.
\end{equation*}
Therefore,
\begin{align*}
    \frac{1}{n_P} \left( n_{C_1} \norm{\bar\rho_{C_1}}^2 + n_{C_2}\norm{\bar\rho_{C_2}}^2 \right) &= \frac{n_{C_1} n_{C_2}^2}{n_P^3} \norm{\tilde\theta_{C_1} - \tilde\theta_{C_2}}^2 + \frac{n_{C_2} n_{C_1}^2}{n_P^3} \norm{\tilde\theta_{C_1} - \tilde\theta_{C_2}}^2, \\
    &= \frac{n_{C_1}n_{C_1}}{n_P^2}\norm{\tilde\theta_{C_1} - \tilde\theta_{C_2}}^2, \\
    &= \widetilde\Delta(C_1,C_2).
\end{align*}
Based on the arguments in Appendix~\ref{app:multivariate-cart-criteria}, a split $(C_1,C_2)$ maximizes $n_{C_1}\norm{\bar\rho_{C_1}}^2 + n_{C_2}\norm{\bar\rho_{C_2}}^2$ if and only if it is a CART split performed on the $\rho_i$ over $P$. That is, $\widetilde\Delta(C_1,C_2)$ is precisely maximized by a single CART split on $\rho_i = -B\psi_{\hat\theta_P,\hat\nu_P}(O_i)$ fit over covariates $X_i \in P$, as desired.
\end{proof}

\subsubsection{Scale invariance of CART splits}\label{app:scale-invariance-eta}

 \begin{lemma}[Argmax equivalence of $\FPT$ criteria]\label{lem:scale-invariance-cart} The optimal split identified by CART on pseudo-outcomes $\rho_i^\FPT$ of the form \eqref{eqn:fp-pseudo-outcome-eta} does not depend on the scale factor $\eta$, for any $\eta \neq 0$.
\end{lemma}

\begin{proof}[{\bf Proof of Lemma~\ref{lem:scale-invariance-cart}}] Denote by $\rho_i^{(\eta)}$ $\FPT$ pseudo-outcomes based on an arbitrary scale factor $\eta \neq 0$ of the form \eqref{eqn:fp-pseudo-outcome-eta}:
\begin{equation}\label{eqn:fp-pseudo-outcomes-eta-notation}
    \rho_i^{(\eta)} \coloneqq -\eta \xi^\top \psi_{\hat\theta_P,\hat\nu_P}(O_i),
\end{equation}
and let $\overline\psi_{C_j}$ denote the child-leaf average score evaluated at the parent solution $(\hat\theta_P,\hat\nu_P)$:
\begin{equation*}
    \overline\psi_{C_j} \coloneqq \frac{1}{n_{C_j}}\sum_{\{i:X_i \in C_j\}}\psi_{\hat\theta_P,\hat\nu_P}(O_i),
\end{equation*}
such that the corresponding child-leaf pseudo-outcome averages $\bar\rho_{C_j}^{~(\eta)} \coloneqq \frac{1}{n_{C_j}}\sum_{\{i:X_i \in C_j\}}\rho_i^{(\eta)}$ are equivalently written as
\begin{equation*}
    \bar\rho_{C_j}^{~(\eta)} = -\eta\xi^\top~\overline\psi_{C_j}
\end{equation*}
Let $\widetilde\Delta_\eta^\FPT(C_1,C_2)$ denote the $\FPT$ criterion of the form \eqref{eqn:fp-criterion-pseudo-outcome} based on pseudo-outcomes \eqref{eqn:fp-pseudo-outcomes-eta-notation}:
\begin{equation*}
    \widetilde\Delta_\eta^\FPT(C_1,C_2) = \frac{n_{C_1}n_{C_2}}{n_P^2} \norm{ \bar\rho_{C_1}^{~(\eta)} -  \bar\rho_{C_j}^{~(\eta)}}^2 = \frac{n_{C_1}n_{C_2}}{n_P^2} \norm{\eta\xi^\top (\overline\psi_{C_1} - \overline\psi_{C_2})}^2.
\end{equation*}
One has:
\begin{equation*}
    \norm{\eta\xi^\top (\overline\psi_{C_1} - \overline\psi_{C_2})}^2 = \eta^2 \norm{\xi^\top (\overline\psi_{C_1} - \overline\psi_{C_2})}^2,
\end{equation*}
and hence the $\widetilde\Delta_\eta^\FPT$-criteria obey the scaling relation:
\begin{equation}\label{eqn:scaling-relationship-criteria-TEST}
    \widetilde\Delta_\eta^\FPT(C_1,C_2) = \eta^2 \cdot \widetilde\Delta_1^\FPT(C_1,C_2),
\end{equation}
where $\widetilde\Delta_1^\FPT$ denotes the $\FPT$ criterion induced by pseudo-outcomes $\rho_i^{(1)}$ based on unit scale factor $\eta = 1$. The relation \eqref{eqn:scaling-relationship-criteria-TEST} implies that  any nonzero split-independent rescaling $\rho_i^{(\eta)} = \eta \rho_i^{(1)}$ will induce a splitting criterion $\widetilde\Delta_\eta^\FPT(C_1,C_2)$ with the same maximizer as $\widetilde\Delta_1^\FPT(C_1,C_2)$:
\begin{equation*}
    \argmax_{(C_1,C_2)}~\left\{\widetilde\Delta_\eta^\FPT(C_1,C_2)\right\} = \argmax_{(C_1,C_2)}~\left\{\eta^2 \cdot \widetilde\Delta_1^\FPT(C_1,C_2) \right\} = \argmax_{(C_1,C_2)}~\left\{\widetilde\Delta_1^\FPT(C_1,C_2)\right\}.
\end{equation*}
Intuitively, a CART split is chosen by ranking the criterion values among the candidate splits and selecting the maximizing split $(C_1,C_2)$. Therefore, the $\FPT$ splitting mechanism is unaffected by the scale factor $\eta$ used to specify fixed-point pseudo-outcomes \eqref{eqn:fp-pseudo-outcome-eta}. The absolute scale of the $\widetilde\Delta^\FPT$-criterion does not matter when searching for the optimal split, and only the criterion rankings across the candidate splits determine the final partition.
\end{proof}

\subsection{Proofs for Section~\ref{sec:theoretical-analysis}}\label{app:proofs}

\paragraph{Notation and definitions.} 

\begin{itemize}
    \item Let $o_P(a,b,c) \coloneqq o_P(\max\{a,b,c\})$, with an analogous abbreviation for $O_P(\cdot)$.

    \item For a fixed parent node $P$, denote by $x_P$ the center of mass of the $X_i \in P$, and let $r \coloneqq \sup_{\{i:X_i\in P\}}\norm{X_i - x_P}$ denote the radius of the parent $P$. Throughout, we consider an asymptotic regime where $n_{C_j}\to\infty$ and $r\to 0$, corresponding to leaves over $\cX$ of vanishing radius. Further, $r$ and $n_{C_j}$ are related under the conditions of GRF Proposition 1, namely, $r^{-2} \ll n_{C_j}$ and hence $n_{C_j}r^2 \to \infty$ and $1/\sqrt{n_{C_j}} = o(r)$.

    \item Let $\theta_{C_j}^*$ denote the true parameter expectation over the child node:
    \begin{equation}\label{eqn:theta-star-Cj}
        \theta_{C_j}^* \coloneqq \mathbb E[\theta^*(X) \mid X \in C_j], \qquad j = 1,2,
    \end{equation}
    and let $\tilde\theta^*_{C_j}(x_P)$ denote an oracle version of the gradient-based leaf statistic:
    \begin{equation*}
        \tilde\theta^*_{C_j}(x_P) \coloneqq \theta^*(x_P) - \frac{1}{n_{C_j}} \sum_{\{i:X_i \in C_j\}} \xi^\top V(x_P)^{-1} \psi_{\theta^*(x_P),\nu^*(x_P)}(O_i),
    \end{equation*}
    where $V(x)$ is the underlying local Jacobian in Assumption~\ref{asm:differentiable-M}. Equivalently, in terms of the oracle pseudo-outcome/influence function $\rho_i^*(\cdot)$ defined in Regularity Condition~\ref{regcond:variance},
    \begin{equation*}
        \tilde\theta^*_{C_j}(x_P) \coloneqq \theta^*(x_P) + \frac{1}{n_{C_j}} \sum_{\{i:X_i \in C_j\}} \rho_i^*(x_P).
    \end{equation*}
\end{itemize}

The following are technical lemmas used for the proof of Proposition~\ref{prop:Delta-V-crit-asymptotic}.

\begin{lemma}\label{lem:Delta-crit-asymptotic} Suppose Assumptions~\ref{asm:all-assumptions} and Specifications~\ref{spec:all-specifications} hold. Then,
\begin{equation*}
    \Delta(C_1,C_2) = \frac{n_{C_1}n_{C_2}}{n_P^2} \norm{\theta_{C_1}^* - \theta_{C_2}^*}^2 + o_P\left(r^2,\frac{1}{n_{C_1}},~\frac{1}{n_{C_2}}\right).
\end{equation*}
\end{lemma}

\begin{proof}[{\bf Proof of Lemma~\ref{lem:Delta-crit-asymptotic}}] Write the difference $\hat\theta_{C_j} - \theta_{C_j}^*$ as
\begin{equation*}
    \hat\theta_{C_j} - \theta_{C_j}^* = \underbrace{\left(\hat\theta_{C_j} - \tilde\theta_{C_j}^*(x_P)\right)}_{T_1} + \underbrace{\left( \tilde\theta_{C_j}^*(x_P) - \bbE[\tilde\theta_{C_j}^*(x_P)\mid X \in C_j] \right)}_{T_2} + \underbrace{\left( \bbE[\tilde\theta_{C_j}^*(x_P)\mid X \in C_j] - \theta_{C_j}^* \right)}_{T_3}.
\end{equation*}
Under standard LLN arguments, the second term satisfies $T_2 = O_P(1/\sqrt{n_{C_j}})$, and in an asymptotic regime with $r^{-2}\ll n_{C_j}$ one has $T_2 = o_P(r)$. Meanwhile, the first and third terms appear in the proofs of Propositions 2 and 1 of \citet{athey2019generalized}, respectively, and satisfy $T_1 = o_P(r,~1/\sqrt{n_{C_j}})$ and $T_3 = \cO(r^2) \implies T_3 = o(r)$. It follows
\begin{equation*}
     \hat\theta_{C_j} - \theta_{C_j}^* = o_P\left(r,~1/\sqrt{n_{C_j}}\right),
\end{equation*}
and in particular
\begin{equation*}
    \hat\theta_{C_1} - \hat\theta_{C_2} = \theta_{C_1}^* - \theta_{C_2}^* + o_P\left(r, ~\frac{1}{\sqrt{n_{C_1}}},~\frac{1}{\sqrt{n_{C_2}}}\right).
\end{equation*}
Write $A = \theta_{C_1}^* - \theta_{C_2}^*$ and let $E$ be any term satisfying $E = o_P(r,~1/\sqrt{n_{C_1}},~1/\sqrt{n_{C_2}})$ such that $\Delta(C_1,C_2)$ is equivalently written $\Delta(C_1,C_2) = (n_{C_1}n_{C_2}/n_P^2) \cdot \norm{A + E}^2$. Consider the difference
\begin{align*}
    \Delta(C_1,C_2) - \frac{n_{C_1}n_{C_2}}{n_P^2} \norm{\theta_{C_1}^* - \theta_{C_2}^*}^2 &= \frac{n_{C_1}n_{C_2}}{n_P^2} \left( \norm{A + E}^2 - \norm{A}^2 \right),\\
    &= \frac{n_{C_1}n_{C_2}}{n_P^2} \left(2\langle A, E\rangle + \norm{E}^2\right).
\end{align*}
Under Specification~\ref{spec:balanced} there exists a fixed proportion $\omega > 0$ such that $n_{C_1},n_{C_2} \geq \omega n_P$, and hence $n_{C_1}n_{C_2}/n_P^2 \geq \omega(1 - \omega)$ and also $n_{C_1}n_{C_2}/n_P^2 \leq 1/4$ for all $n_{C_1} + n_{C_2} = n_P$. Therefore $n_{C_1}n_{C_2}/n_P^2 = \cO(1)$. Meanwhile, $\norm{E}^2 = o_P(r^2,~1/n_{C_1},~1/n_{C_2})$ is true by definition of $E$, and under our assumptions one may follow the arguments of \citet{athey2019generalized} Proposition 1 to see that $A = \theta_{C_1}^* - \theta_{C_2}^* = \cO(r)$. Thus,
\begin{equation*}
    \langle A, E\rangle =\cO(r) \cdot o_P\left(r, ~\frac{1}{\sqrt{n_{C_1}}},~\frac{1}{\sqrt{n_{C_2}}}\right) = o_P\left(r^2, ~\frac{r}{\sqrt{n_{C_1}}},~\frac{r}{\sqrt{n_{C_2}}}\right),
\end{equation*}
and therefore
\begin{equation*}
    \Delta(C_1,C_2) - \frac{n_{C_1}n_{C_2}}{n_P^2} \norm{\theta_{C_1}^* - \theta_{C_2}^*}^2 = o_P\left(r^2, ~\frac{1}{{n_{C_1}}},~\frac{1}{{n_{C_2}}}\right),
\end{equation*}
as desired.
\end{proof}

\begin{lemma}\label{lem:Delta-FPT-crit-asymptotic} Suppose the conditions of Lemma~\ref{lem:Delta-crit-asymptotic} hold, and assume moreover Neyman orthogonal moment conditions such that the underlying Jacobian $V(x)$ defined in Assumption~\ref{asm:differentiable-M} with block form \eqref{eqn:V-block-form}. Then, 
\begin{equation*}
    \widetilde\Delta_\eta^\FPT(C_1,C_2) = \frac{n_{C_1}n_{C_2}}{n_P^2} \eta^2 \norm{ V_{\theta\theta}(x_P)(\theta_{C_1}^* - \theta_{C_2}^*)}^2 + o_P\left(r^2, ~\frac{1}{{n_{C_1}}},~\frac{1}{{n_{C_2}}}\right),
\end{equation*}
where $\Delta_\eta^\FPT$ defined in Lemma~\ref{lem:scale-invariance-cart} denotes the $\FPT$ criterion with arbitrary scale factor $\eta \neq 0$.
\end{lemma}

\begin{proof}[{\bf Proof of Lemma~\ref{lem:Delta-FPT-crit-asymptotic}}] From the proof of Lemma~\ref{lem:scale-invariance-cart} one finds that $\Delta_\eta^\FPT(C_1,C_2)$ is equivalently written
\begin{equation*}
    \Delta_\eta^\FPT(C_1,C_2) \coloneqq \frac{n_{C_1}n_{C_2}}{n_P^2} \eta^2 \norm{\xi^\top (\overline\psi_{C_1} - \overline\psi_{C_2})}^2, \qquad \overline\psi_{C_j} \coloneqq \frac{1}{n_{C_1}} \sum_{\{i:X_i \in C_j\}}\psi_{\hat\theta_P,\hat\nu_P}(O_i).
\end{equation*}
Under standard LLN arguments the average scores $\overline\psi_{C_j}$ satisfy
\begin{equation}
    \overline\psi_{C_j} = \bbE[\psi_{\hat\theta_P,\hat\nu_P}(O) \mid X \in C_j] + O_P(1/\sqrt{n_{C_j}}).
\end{equation}
One applies iterated expectation to see
\begin{equation*}
    \bbE[\psi_{\hat\theta_P,\hat\nu_P}(O) \mid X \in C_j] = \bbE\left[ \bbE\left[ \psi_{\hat\theta_P,\hat\nu_P}(O) \mid X \right]\mid X \in C_j \right] = \bbE[M_{\hat\theta_P,\hat\nu_P}(X) \mid X \in C_j],
\end{equation*}
and hence
\begin{equation}\label{eqn:average-score-asymp} 
    \overline\psi_{C_j} = \bbE[M_{\hat\theta_P,\hat\nu_P}(X) \mid X \in C_j] + O_P(1/\sqrt{n_{C_j}}).
\end{equation}

\paragraph{Expansion of $\bm{M_{\hat\theta_P,\hat\nu_P}(X)}$.}

Under Assumption~\ref{asm:differentiable-M} one considers the Taylor expansion of $M_{\hat\theta_P,\hat\nu_P}(X)$ about $(\theta,\nu) = (\theta^*(x_P),\nu^*(x_P))$:
\begin{align*}
    M_{\hat\theta_P,\hat\nu_P}(X) &= M_{\theta^*(x_P),\nu^*(x_P)}(X) \\
    &\hphantom{...} + \left[\nabla_{(\theta,\nu)}M_{\theta^*(x_P),\nu^*(x_P)}(X)\right] \begin{bmatrix}
        \hat\theta_P - \theta^*(x_P) \\
        \hat\nu_P - \nu^*(x_P)
    \end{bmatrix} + O_P\left( \norm{ \begin{bmatrix}
        \hat\theta_P - \theta^*(x_P) \\
        \hat\nu_P - \nu^*(x_P)
    \end{bmatrix} }^2 \right).
\end{align*}
The consistency of the parent solutions $(\hat\theta_P,\hat\nu_P)$ for $(\theta^*(x_P),\nu^*(x_P))$ is established by \citet{athey2019generalized}, and in particular $(\hat\theta_P,\hat\nu_P) - (\theta^*(x_P),\nu^*(x_P)) = O_P(r,~1/\sqrt{n_P})$. The asymptotic regime $r^{-2}\ll n_P$ implies $1/\sqrt{n_P} = o(r)$ and therefore the higher order quadratic term is equivalently expressed:
\begin{equation*}
    M_{\hat\theta_P,\hat\nu_P}(X) = M_{\theta^*(x_P),\nu^*(x_P)}(X) + \left[\nabla_{(\theta,\nu)} M_{\theta^*(x_P),\nu^*(x_P)}(X)\right] \begin{bmatrix}
        \hat\theta_P - \theta^*(x_P) \\
        \hat\nu_P - \nu^*(x_P)
    \end{bmatrix} + O_P(r^2),
\end{equation*}
and therefore
\begin{align*}
    \mathbb E\left[M_{\hat\theta_P,\hat\nu_P}(X)\mid X \in C_j \right] &= \mathbb E\left[ M_{\theta^*(x_P),\nu^*(x_P)}(X) \mid X \in C_j \right] \\
    &\hphantom{.....} + \mathbb E\left[  \nabla_{(\theta,\nu)} M_{\theta^*(x_P),\nu^*(x_P)}(X) 
    \,\middle|\, X \in C_j \right] \begin{bmatrix}
        \hat\theta_P - \theta^*(x_P) \\
        \hat\nu_P - \nu^*(x_P)
    \end{bmatrix} + O_P(r^2).
\end{align*}
One has $\nabla_{(\theta,\nu)} M_{\theta^*(x_P),\nu^*(x_P)}(X) = V(x_P) + O_P(r)$ because $M_{\theta,\nu}(x)$ is Lipschitz in $x$, and the expansion in the previous display becomes:
\begin{equation}\label{eqn:M-theta-hat-P-taylor-expansion}
    \mathbb E\left[M_{\hat\theta_P,\hat\nu_P}(X)\mid X \in C_j \right] = \mathbb E\left[ M_{\theta^*(x_P),\nu^*(x_P)}(X) \mid X \in C_j \right] + V(x_P) \begin{bmatrix}
        \hat\theta_P - \theta^*(x_P) \\
        \hat\nu_P - \nu^*(x_P)
    \end{bmatrix} + O_P(r^2).
\end{equation}

\paragraph{Expansion of $\bm{M_{\theta^*(x_P),\nu^*(x_P)}(X)}$.}

Following similar arguments, the term $M_{\theta^*(x_P),\nu^*(x_P)}(X)$ is expanded about $(\theta,\nu) = (\theta^*(X),\nu^*(X))$ as:
\begin{align*}
    M_{\theta^*(x_P),\nu^*(x_P)}(X) &= M_{\theta^*(X),\nu^*(X)}(X) + V(X) \begin{bmatrix}
        \theta^*(x_P) - \theta^*(X) \\
        \nu^*(x_P) - \nu^*(X)
    \end{bmatrix} + O_P(r^2), \\
    &= V(X) \begin{bmatrix}
        \theta^*(x_P) - \theta^*(X) \\
        \nu^*(x_P) - \nu^*(X)
    \end{bmatrix} + O_P(r^2),
\end{align*}
where $M_{\theta^*(X),\nu^*(X)}(X) = {\bf 0}$ holds because $(\theta^*(X),\nu^*(X))$ are defined as satisfying the GRF moment conditions \eqref{eqn:est-eqn} local to $X$. One takes the conditional expectation of the previous display:
\begin{equation*}
    \mathbb E\left[M_{\theta^*(x_P),\nu^*(x_P)}(X)\mid X \in C_j\right] = \mathbb E\left[V(X)  \begin{bmatrix}
        \theta^*(x_P) - \theta^*(X) \\
        \nu^*(x_P) - \nu^*(X)
    \end{bmatrix}\,\middle|\, X \in C_j\right] + O_P(r^2).
\end{equation*}
Whenever $X \in C_j$ one has $\norm{X - x_P} = \cO(r)$, and the same Lipschitz arguments can be applied to see $V(X) = V(x_P) + O_P(r)$ conditional on $X \in C_j$, and the previous display simplifies:
\begin{equation}\label{eqn:M-theta-star-xp-expansion}
    \mathbb E\left[M_{\theta^*(x_P),\nu^*(x_P)}(X)\mid X \in C_j\right] = V(x_P) \begin{bmatrix}
        \theta^*(x_P) - \theta_{C_j}^* \\
        \nu^*(x_P) - \nu_{C_j}^*
    \end{bmatrix} + O_P(r^2),
\end{equation}
where $\theta_{C_j}^*\coloneqq \mathbb E[\theta^*(X)\mid X \in C_j]$ and $\nu_{C_j}^*\coloneqq \mathbb E[\nu^*(X) \mid X \in C_j]$. Substitute \eqref{eqn:M-theta-star-xp-expansion} into the conditional expectation \eqref{eqn:M-theta-hat-P-taylor-expansion}:
\begin{align*}
    \mathbb E\left[M_{\hat\theta_P,\hat\nu_P}(X)\mid X \in C_j \right] &= V(x_P) \begin{bmatrix}
        \theta^*(x_P) - \theta_{C_j}^* \\
        \nu^*(x_P) - \nu_{C_j}^*
    \end{bmatrix} + V(x_P) \begin{bmatrix}
        \hat\theta_P - \theta^*(x_P) \\
        \hat\nu_P - \nu^*(x_P)
    \end{bmatrix} + O_P(r^2),\\
    &= V(x_P) \begin{bmatrix}
        \hat\theta_P - \theta_{C_j}^* \\
        \hat\nu_P - \nu_{C_j}^*
    \end{bmatrix} + O_P(r^2).
\end{align*}
Therefore, the child node score averages $\overline\psi_{C_j}$ in \eqref{eqn:average-score-asymp} satisfy
\begin{equation*}
    \overline\psi_{C_j} = V(x_P) \begin{bmatrix}
        \hat\theta_P - \theta_{C_j}^* \\
        \hat\nu_P - \nu_{C_j}^*
    \end{bmatrix} + O_P\left(r^2,~1/\sqrt{n_{C_j}}\right),
\end{equation*}
and the difference $\overline\psi_{C_1} - \overline\psi_{C_2}$ satisfies
\begin{align*}
    \overline\psi_{C_1} - \overline\psi_{C_2} &= V(x_P) \begin{bmatrix}
        \hat\theta_P - \theta_{C_1}^* \\
        \hat\nu_P - \nu_{C_1}^*
    \end{bmatrix} - V(x_P)\begin{bmatrix}
        \hat\theta_P - \theta_{C_2}^* \\
        \hat\nu_P - \nu_{C_2}^*
    \end{bmatrix}  +  O_P\left(r^2,~\frac{1}{\sqrt{n_{C_1}}},~\frac{1}{\sqrt{n_{C_2}}}\right),\\
    &= -V(x_P) \begin{bmatrix}
        \theta_{C_1}^* - \theta_{C_2}^* \\
        \nu_{C_1}^* - \nu_{C_2}^*
    \end{bmatrix}  + O_P\left(r^2,~\frac{1}{\sqrt{n_{C_1}}},~\frac{1}{\sqrt{n_{C_2}}}\right).
\end{align*}
We assume $\eta$ is a fixed scalar and $\xi^\top$ a fixed matrix, it follows:
\begin{equation}\label{eqn:initial-consistency-diff-psiC1-psiC2}
    \eta\xi^\top(\overline\psi_{C_1} - \overline\psi_{C_2}) = -\eta\xi^\top V(x_P) \begin{bmatrix}
        \theta_{C_1}^* - \theta_{C_2}^* \\
        \nu_{C_1}^* - \nu_{C_2}^*
    \end{bmatrix} +  O_P\left(r^2,~\frac{1}{\sqrt{n_{C_1}}},~\frac{1}{\sqrt{n_{C_2}}}\right).
\end{equation}
The fixed matrix $\xi^\top$ selects the coordinates of the target effect as $\xi^\top(\theta,\nu)^\top = \theta$, and hence the product $\xi^\top V(x_P)$ simplifies:
\begin{equation*}
    \xi^\top V(x_P) = \xi^\top \begin{bmatrix}
        V_{\theta\theta}(x_P) & V_{\theta\nu}(x_P) \\
        V_{\nu\theta}(x_P) & V_{\nu\nu}(x_P)
    \end{bmatrix} = \begin{bmatrix}
        V_{\theta\theta}(x_P) & V_{\theta\nu}(x_P)
    \end{bmatrix}.
\end{equation*}
Under Neyman orthogonality one has $V_{\theta\nu}(x_P) = {\bf 0}$, implying that $\xi^\top V(x_P) = [V_{\theta\theta}(x_P)~~{\bf 0}]$, and \eqref{eqn:initial-consistency-diff-psiC1-psiC2} becomes
\begin{equation}\label{eqn:asymptotic-FPT-vector}
    \eta\xi^\top(\overline\psi_{C_1} - \overline\psi_{C_2}) = -\eta  V_{\theta\theta}(x_P) (\theta_{C_1}^* - \theta_{C_2}^*) +  O_P\left(r^2,~\frac{1}{\sqrt{n_{C_1}}},~\frac{1}{\sqrt{n_{C_2}}}\right).
\end{equation}

\paragraph{Asymptotic analysis.}  

Let $E$ be any term satisfying $E = O_P(r^2,~1/\sqrt{n_{C_1}},~1/\sqrt{n_{C_2}})$. In our asymptotic regime with $r^{-2}\ll n_{C_j} \implies 1/\sqrt{n_{C_j}} = o(r)$, one has
\begin{equation*}
    E = O_P\left(r^2,~\frac{1}{\sqrt{n_{C_1}}},~\frac{1}{\sqrt{n_{C_2}}}\right) \implies E = o_P\left(r,~\frac{1}{\sqrt{n_{C_1}}},~\frac{1}{\sqrt{n_{C_2}}}\right)
\end{equation*}
and therefore \eqref{eqn:asymptotic-FPT-vector} satisfies
\begin{equation}\label{eqn:Delta-FPT-norm-term-asympt}
    \eta\xi^\top(\overline\psi_{C_1} - \overline\psi_{C_2}) = -\eta  V_{\theta\theta}(x_P) (\theta_{C_1}^* - \theta_{C_2}^*) +  o_P\left(r,~\frac{1}{\sqrt{n_{C_1}}},~\frac{1}{\sqrt{n_{C_2}}}\right).
\end{equation}
Write $A = V_{\theta\theta}(\theta_{C_1}^* - \theta_{C_2}^*)$ such that $\widetilde\Delta_\eta^\FPT(C_1,C_2)$ is equivalently written $\widetilde\Delta_\eta^\FPT(C_1,C_2) = (n_{C_1}n_{C_2}/n_P^2) \cdot \eta^2 \norm{A + E}^2$. Consider the difference
\begin{align*}
    \widetilde\Delta_\eta^\FPT(C_1,C_2) - \frac{n_{C_1}n_{C_2}}{n_P^2} \eta^2 \norm{V_{\theta\theta}(\theta_{C_1}^* - \theta_{C_2}^*)}^2 &= \frac{n_{C_1}n_{C_2}}{n_P^2} \eta^2 \left(\norm{A + E}^2 - \norm{A}^2\right), \\
    &= \frac{n_{C_1}n_{C_2}}{n_P^2} \eta^2 \left( 2\langle A, E\rangle + \norm{E}^2 \right).
\end{align*}
One repeats the same arguments used in the final asymptotic analysis of Lemma~\ref{lem:Delta-crit-asymptotic} to show
\begin{equation*}
    2\langle A, E\rangle +  \norm{E}^2 =  o_P\left(r^2,~\frac{1}{{n_{C_1}}},~\frac{1}{{n_{C_2}}}\right),
\end{equation*}
and thus
\begin{equation*}
    \widetilde\Delta_\eta^\FPT(C_1,C_2) - \frac{n_{C_1}n_{C_2}}{n_P^2} \eta^2 \norm{V_{\theta\theta}(x_P) (\theta_{C_1}^* - \theta_{C_2}^*)}^2 = o_P\left(r^2,~\frac{1}{{n_{C_1}}},~\frac{1}{{n_{C_2}}}\right),
\end{equation*}
as desired.
\end{proof}

\begin{proof}[{\bf Proof of Proposition~\ref{prop:Delta-V-crit-asymptotic}}] First, under Assumptions~\ref{asm:all-assumptions} the $\theta$-block $V_{\theta\theta}(x_P)$ of the local Jacobian $V(x_P)$ is strictly positive definite and thus $\norm{\cdot}_V$ defines a true norm. From the proof of Lemma~\ref{lem:Delta-crit-asymptotic}:
\begin{equation*}
    \hat\theta_{C_1} - \hat\theta_{C_2} = \theta_{C_1}^* - \theta_{C_2}^* + o_P\left(r, ~\frac{1}{\sqrt{n_{C_1}}},~\frac{1}{\sqrt{n_{C_2}}}\right).
\end{equation*}
The matrix $V(x_P)$ is non-random and fixed given $P$ and $\eta$ is a fixed scalar. It follows:
\begin{equation*}
    \eta V_{\theta\theta}(x_P) (\hat\theta_{C_1} - \hat\theta_{C_2}) = \eta  V_{\theta\theta}(x_P) (\theta_{C_1}^* - \theta_{C_2}^*) + o_P\left(r, ~\frac{1}{\sqrt{n_{C_1}}},~\frac{1}{\sqrt{n_{C_2}}}\right).
\end{equation*}
Up to a negative factor, the expression on the right is precisely the same as \eqref{eqn:Delta-FPT-norm-term-asympt} in the proof of Lemma~\ref{lem:Delta-FPT-crit-asymptotic}, and thus one repeats the arguments to arrive at
\begin{equation*}
    \norm{\eta V_{\theta\theta}(x_P) (\hat\theta_{C_1} - \hat\theta_{C_2})}_2^2 = \norm{ \eta  V_{\theta\theta}(x_P) (\theta_{C_1}^* - \theta_{C_2}^*)}_2^2 + o_P\left(r^2, ~\frac{1}{{n_{C_1}}},~\frac{1}{{n_{C_2}}}\right),
\end{equation*}
and hence
\begin{equation*}
    \Delta_{\eta V}(C_1,C_2) = \frac{n_{C_1}n_{C_2}}{n_P^2}  \eta^2 \norm{  V_{\theta\theta}(x_P) (\theta_{C_1}^* - \theta_{C_2}^*)}_2^2 + o_P\left(r^2, ~\frac{1}{{n_{C_1}}},~\frac{1}{{n_{C_2}}}\right).
\end{equation*}
The right hand side is precisely the same as in the statement of Lemma~\ref{lem:Delta-FPT-crit-asymptotic} established for $\widetilde\Delta_\eta^\FPT(C_1,C_2)$, and thus
\begin{equation}
    \widetilde\Delta_\eta^\FPT(C_1,C_2) -  \Delta_{\eta V}(C_1,C_2) =  o_P\left(r^2, ~\frac{1}{{n_{C_1}}},~\frac{1}{{n_{C_2}}}\right),
\end{equation}
as desired.
\end{proof}

\begin{proof}[{\bf Proof of Lemma~\ref{lem:Delta-implies-Delta-V-spec}}] Firstly, Specifications~\ref{spec:subsampling} (subsampling) and~\ref{spec:honest} (honesty) describe conditions imposed on the sampling mechanism and are not affected by the form of the splitting criterion. It remains to verify whether Specification~\ref{spec:symmetric} (symmetry), Specification~\ref{spec:balanced} (balanced/$\omega$-regular), and Specification~\ref{spec:randomized} (randomized/random-split) are satisfied by $\cT(\Delta_V)$.

\begin{enumerate}
    \item {\bf Specification~\ref{spec:symmetric}: Symmetry.} A tree is said to be symmetric if its estimates are invariant under permutations of the tree's training samples. Conditional on a sequence of criterion values computed over splits of $P$, the CART mechanism of selecting the best split by scanning over the collection of candidates does not depend on the parent samples at all. This means that asymmetry could only enter through the criterion values. Therefore, a sufficient condition for symmetry in the tree estimates with respect to permutations of the tree samples is whether the criterion $\Delta_V(C_1,C_2)$ is symmetric. Conditional on the child solutions $\hat\theta_{C_1},\hat\theta_{C_2}$, the map $(\hat\theta_{C_1},\hat\theta_{C_2})\mapsto \Delta_V(C_1,C_2)$ does not depend on the parent samples at all, and therefore asymmetry could only enter through child solutions $\hat\theta_{C_j}$. However, both criteria use precisely the same child solutions $\hat\theta_{C_j}$ in \eqref{eqn:local-est-Cj}, and therefore $\Delta_V(C_1,C_2)$ will be symmetric whenever $\Delta(C_1,C_2)$ is symmetric (specifically, whenever $\psi_{\theta,\nu}(O_i)$ is symmetric with respect to permutations of $i$).    

    \item {\bf Specification~\ref{spec:balanced}: Balanced/$\omega$-regular.} This condition is enforced by by GRF by adding an additional stopping condition to the gradient-based version of Algorithm~\ref{alg:grffpt-pseudocode-stage1-tree}. Specifically, GRF stops a recursive splitting path if a proposed $\Delta$-optimal split were to send fewer than $\omega n_P$ of the parent samples into either child. Simply stated, GRF enforces balanced splits by defining the set of valid candidate splits as those with at least $\omega n_P$ parent samples in each child. This is left unchanged by our method.

    \item {\bf Specification~\ref{spec:randomized}: Randomized/random-split.} The asymptotic theory of GRF requires that, at each node, each variable is selected for a split with some lower bound probability $\pi > 0$. In order to satisfy the minimum split probability GRF uses the feature sampling device of \citet{denil2014narrowing} which, at each step, considers only $\min\{\max\{\textrm{Poisson}(m), 1\}, p\}$ randomly selected features as candidate variables, where $p = \dim(\cX)$ and $m$ is a GRF tuning parameter. In other words, GRF defines the set of valid candidate splits such that the set of valid splitting dimensions is itself a random variable. This mechanism is left unchanged under our method. 
    
    No column of $V(x_P)$ is all-zero $V_{\cdot,k}(x_P) \neq {\bf 0}$ because $V(x_P)$ is strictly positive definite symmetric, and therefore $\Delta_V(C_1,C_2)$ will not systematically ignore signals along parameter dimensions $\theta_k$ that can be detected by the $\Delta$-criterion. Finally,
    \begin{equation*}
        \hat\theta_{C_1} - \hat\theta_{C_1} \neq {\bf 0} \implies V_{\theta\theta}(x_P)(\hat\theta_{C_1} - \hat\theta_{C_2}) \neq {\bf 0},
    \end{equation*}
    because $V(x)$ is strictly positive definite symmetric by Assumption~\ref{asm:differentiable-M}. Therefore $\Delta(C_1,C_2) > 0 \implies \Delta_V(C_1,C_2) > 0$ meaning that the $\Delta_V$-criterion is non-degenerate and will always be able to select at least one feature whenever the $\Delta$-criterion can select a feature.
\end{enumerate}

Therefore, all five specifications are met, and one concludes that $\cT(\Delta_V)$ must satisfy the forest Specifications~\ref{spec:all-specifications} whenever they are satisfied by $\cT(\Delta)$.
\end{proof}

\subsection{Asymptotic equivalence of the pseudo-outcome approximation for VCM/HTE models}\label{app:asymp-equiv-approx-fp}

In this section we establish the asymptotic equivalence of the further acceleration of the fixed-point method proposed in Section~\ref{sec:applications} for VCM/HTE models. The accelerated algorithm is based on $\FPT$ pseudo-outcomes that use an approximation $\tilde\theta_P$ for the actual parent solutions $\hat\theta_P$ in \eqref{eqn:vcm-hte-thetaP}. Specifically, the parent-leaf approximations $\tilde\theta_P$ are found by a single gradient descent step towards $\hat\theta_P$ taken from the origin \eqref{eqn:one-step-theta-approx-appendix}. Let $\rho_i^\FPT$ denote the original $\FPT$ pseudo-outcomes for VCM/HTE models:
\begin{equation}\label{eqn:fp-het-treatment-effect-pseudo-outcomes-appendix}
    \rho_i^\FPT \coloneqq -(W_i - \overline W_P) \left(Y_i - \overline Y_P - (W_i - \overline W_P)^\top \hat\theta_P\right),
\end{equation}
where the solution $\hat\theta_P$ for the local model over the parent $P$ are precisely the OLS coefficients from the regression the centered outcomes $Y_i - \overline Y_P \in \bbR$ on the centered regressors $W_i - \overline W_P \in \bbR^K$. In contrast, let $\phi_i^\FPT$ denote approximations of $\rho_i^\FPT$ pseudo-outcomes that are of the form
\begin{equation}\label{eqn:approx-fpt-pseudo-outcomes}
    \phi_i^\FPT \coloneqq -(W_i - \overline W_P) \left( Y_i - \overline Y_P - (W_i - \overline W_P)^\top \tilde\theta_P\right),
\end{equation}
where $\tilde\theta_P$ approximates $\hat\theta_P$ as:
\begin{equation}\label{eqn:one-step-theta-approx-appendix}
    \tilde\theta_P \coloneqq \gamma \cdot \frac{1}{n_P} \sum_{\{i:X_i \in P\}} (W_i - \overline W_P) (Y_i - \overline Y_P) = \gamma \cdot \frac{1}{n_P} W_P^\top Y_P.
\end{equation}
Here, $W_P \in \bbR^{n_P\times K}$ and $Y_P \in \bbR^{n_P}$ denote the centered data matrices, $W_P \coloneqq [W_i - \overline W_P]_{i:X_i \in P}$ and $Y_P \coloneqq [Y_i - \overline Y_P]_{i:X_i \in P}$, and the scalar $\gamma > 0$ denotes the exact line search step size corresponding to the regression of the centered outcomes on the centered regressors:
\begin{equation}
    \gamma \coloneqq \frac{ \norm{W_P^\top Y_P}_2^2 }{ \norm{W_P W_P^\top Y_P}_2^2 }.
\end{equation}

\begin{lemma}\label{lem:approx-fp-method}

Let $\tilde\theta_{C_j}$ denote the $\FPT$ estimator of the form \eqref{eqn:fp-estimator-eta-pseudo-outcome} for the child solution $\hat\theta_{C_j}$ for VCM/HTE models. One can express $\tilde\theta_{C_j}$ in terms of the corresponding fixed-point pseudo-outcomes:
\begin{equation*}
    \tilde\theta_{C_j} \coloneqq \hat\theta_P + \frac{1}{n_{C_j}}\sum_{i:X_i \in C_j} \rho_i^\FPT.
\end{equation*}
Similarly, denote by $\bar\theta_{C_j}$ the $\FPT$ estimator of $\hat\theta_{C_j}$ induced by pseudo-outcomes approximations $\phi_i^\FPT$:
\begin{equation*}
    \bar\theta_{C_j} \coloneqq \hat\theta_P + \frac{1}{n_{C_j}} \sum_{\{i:X_i \in C_j\}} \phi_i^\FPT.
\end{equation*}
Then, under the assumptions of Proposition~\ref{prop:Delta-V-crit-asymptotic}, $\bar\theta_{C_j}$ is consistent for $\tilde\theta_{C_j}$ as:
\begin{equation*}
    \norm{\tilde\theta_{C_j} - \bar\theta_{C_j}} = o_P(1).
\end{equation*}
\end{lemma}

\begin{proof} 
 A direct calculation reveals that the difference between the original $\FPT$ pseudo-outcomes $\rho_i^\FPT$ in \eqref{eqn:fp-het-treatment-effect-pseudo-outcomes-appendix} and the approximations $\phi_i^\FPT$ in \eqref{eqn:approx-fpt-pseudo-outcomes} satisfy
\begin{align}
    \rho_i^\FPT - \phi_i^\FPT &= -(W_i - \overline W_P) \left(\left[Y_i - \overline Y_P - (W_i - \overline W_P)^\top \hat\theta_P\right] - \left[Y_i - \overline Y_P - (W_i - \overline W_P)^\top \tilde\theta_P\right] \right), \nonumber \\
    &= (W_i - \overline W_P) (W_i - \overline W_P)^\top (\hat\theta_P - \tilde\theta_P). \label{eqn:fpt-pseudo-outcome-difference}
\end{align}

Therefore, the difference between the original $\FPT$ child estimator $\tilde\theta_{C_j}$ and the approximation $\bar\theta_{C_j}$ satisfies
\begin{align*}
    \tilde\theta_{C_j} - \bar\theta_{C_j} &= \frac{1}{n_{C_j}} \sum_{\{i:X_i \in C_j\}} (\rho_i^\FPT - \phi_i^\FPT), \\
    &= \frac{1}{n_{C_j}} \sum_{\{i:X_i \in C_j\}} (W_i - \overline W_P) (W_i - \overline W_P)^\top (\hat\theta_P - \tilde\theta_P), \\
    &= S_{C_j} (\hat\theta_P - \tilde\theta_P),
\end{align*}
where we denote $S_{C_j} \coloneqq \frac{1}{n_{C_j}}\sum_{\{i:X_i \in C_j\}} (W_i - \overline W_P) (W_i - \overline W_P)^\top$. Therefore,
\begin{equation}\label{eqn:tildethetaCj-barthetaCj-bound}
    \norm{\tilde\theta_{C_j} - \bar\theta_{C_j}} = \norm{S_{C_j} (\hat\theta_P - \tilde\theta_P)} \leq \norm{S_{C_j}}_F \norm{\hat\theta_P - \tilde\theta_P}.
\end{equation}
Under GRF's regularity conditions, in a limit where $n_{C_j}\to\infty$ and the parent radius $r \coloneqq \sup_{\{i:X_i\in P\}}\norm{X_i - \overline X_P}$ goes to zero $r\to 0$, we have $S_{C_j} \stackrel{p}{\to} Q$ for some positive semidefinite symmetric matrix $Q$, and hence $\norm{S_{C_j}}_F = O_P(1)$. Meanwhile, by definition \eqref{eqn:vcm-hte-thetaP} for the OLS coefficients $\hat\theta_P$ and definition \eqref{eqn:one-step-theta-approx-appendix} for the one-step approximations $\bar\theta_P$, we have
\begin{align}
    \norm{\hat\theta_P - \tilde\theta_P} &= \norm{ \left( -A_P^{-1} \cdot n_P^{-1} W_P^\top Y_P \right) - \left( \gamma \cdot n_P^{-1} W_P^\top Y_P \right) },\nonumber \\
    &= \norm{\left(  -A_P^{-1} - \gamma \bbI\right) \cdot n_P^{-1} W_P^\top Y_P },\nonumber \\
    &\leq \norm{-A_P^{-1} - \gamma \bbI}_F \norm{ n_P^{-1} W_P^\top Y_P},\nonumber \\
    &= \norm{\left[n_P^{-1} W_P^\top W_P\right]^{-1} - \gamma \bbI}_F \norm{ n_P^{-1} W_P^\top Y_P},\label{eqn:hatthetaP-tildethetaP-bound}
\end{align}
where $-A_P = n_P^{-1} W_P^\top W_P$ follows from the definition of $A_P$ as an estimator of the Jacobian $\nabla \psi$, e.g. \eqref{eqn:AP-matrix-hte} in the context of VCM/HTE models. Under the Lipschitz continuity Assumptions~\ref{asm:lipschitz-M} \&~\ref{asm:continuous-psi}, one has the standard stochastic bound for the cross term $n_P^{-1}W_P^\top Y_P$:
\begin{equation}
     \norm{ n_P^{-1} W_P^\top Y_P} = O_P\left(r,~\frac{1}{\sqrt{n_P}}\right),
\end{equation}
while the difference $[n_P^{-1} W_P^\top W_P]^{-1} - \gamma \bbI$ is stochastically bound as
\begin{equation*}
    \norm{\left[n_P^{-1} W_P^\top W_P\right]^{-1} - \gamma \bbI}_F = O_P(1),
\end{equation*}
because $n_P^{-1} W_P^\top W_P \stackrel{p}{\to}\cov(W_i \mid X_i \in P)$ is non-singular under Assumption~\ref{asm:differentiable-M}. Coupling these stochastic bounds together according to \eqref{eqn:hatthetaP-tildethetaP-bound} gives
\begin{equation*}
    \norm{\hat\theta_P - \tilde\theta_P} = O_P\left(r,~\frac{1}{\sqrt{n_P}}\right),
\end{equation*}
and trivially, because $n_{C_j} < n_P$, 
\begin{equation}\label{eqn:big-OP-hatP-tildeP}
    \norm{\hat\theta_P - \tilde\theta_P} = O_P\left(r,~\frac{1}{\sqrt{n_{C_j}}}\right).
\end{equation}
Under Proposition 1 of GRF one assumes $r^{-2} \ll n_{C_1}, n_{C_2}$ and thus, in an asymptotic regime where $n_{C_j}\to \infty$ and $r\to 0$, one has $1/\sqrt{n_{C_j}} = o(r)$, and hence:
\begin{equation}\label{eqn:small-oP-hatP-tildeP}
    \norm{\hat\theta_P - \tilde\theta_P} = o_P(1).
\end{equation}
Returning to \eqref{eqn:tildethetaCj-barthetaCj-bound}, the consistency of the parent approximation $\tilde\theta_P$ as \eqref{eqn:small-oP-hatP-tildeP} implies that the approximation $\bar\theta_{C_j}$ is itself consistent for the original $\FPT$ child estimator $\tilde\theta_{C_j}$:
\begin{equation}\label{eqn:approx-fpt-method-rate}
    \norm{\tilde\theta_{C_j} - \bar\theta_{C_j}} = o_P(1),
\end{equation}
as desired.
\end{proof}

\section{Implementation Details}\label{app:implementation-details}

\subsection{Honest subsampling}\label{app:honest-indep}

In this section we present the honest subsampling mechanism. Trees are used to form partitions of the input space such as to to specify weight functions $\alpha_i(x)$, defined as
\begin{equation}\label{eqn:appendix-grf-weight}
    \alpha_i(x) \coloneqq \frac{1}{B}\sum^B_{b=1}\alpha_{bi}(x), \quad \text{for} \quad  \alpha_{bi}(x) \coloneqq \frac{\mathds 1(X_i \in L_b(x))}{|L_b(x)|}, \qquad i =1,\ldots,n,
\end{equation}
where $L_b(x)$ denotes a subset training samples that fall alongside $x$ according to the partition of tree $b$. The honesty mechanism ensures that no observation in leaf $L_b(x)$ was used to build the partition rules of tree $b$. This is achieved by separating an initial subsample into two subsets: One for building the partition rules, and the other allocated as samples to the local leaves $L_b(x)$ according to the trained rules. Below, we give a detailed outline of how subsampling and honest sample splitting is used to train a forest of trees, then show that weight function $\alpha_i(x)$ given by honest trees according to \eqref{eqn:appendix-grf-weight} is conditionally independent of $O_i$ given $X_i$.

\begin{mdframed}[frametitle={Honest subsampling for GRF}]
For tree $b \in \{1, \ldots, B\}$,

\begin{enumerate}
    \item ({\em Subsampling}). Draw an initial subsample $\cI^{(b)}$ of size $s\coloneqq |\cI^{(b)}|$ from the training set (without replacement).
    
    \item ({\em Honest splitting}). Split $\cI^{(b)}$ into disjoint sets $\cJ_1^{(b)}$ and $\cJ_2^{(b)}$ of size $|\cJ_1^{(b)}| = \lfloor s/2\rfloor$ and $|\cJ_2^{(b)}| = \lceil s/2\rceil$. 
    \begin{enumerate}
        \item Train tree $T(\cJ^{(b)}_1)$ based on the first subsample $\{(X_i, O_i) : i \in \cJ^{(b)}_1\}$. Let $\cR^{(b)}_1,\ldots,\cR^{(b)}_M$ denote the partition of $\cX$ induced by $T(\cJ^{(b)}_1)$ such that
        \begin{equation*}
            \cR^{(b)}_m \coloneqq \left\{ x \in \cX : \text{$x$ satisfies the partition rules for leaf $m$ of $T(\cJ^{(b)}_1)$} \right\}.
        \end{equation*}

        \item Subset the samples from the second subsample $\{X_i : i\in \cJ^{(b)}_2\}$ according to the trained rules of $T(\cJ^{(b)}_1)$, i.e. the samples of $\cJ^{(b)}_2$ in the leaves are determined by the rules of $T(\cJ^{(b)}_1)$.
    \end{enumerate}
\end{enumerate} 
\end{mdframed}

For any $x\in \cX$, the local leaf $L_b(x)$ that appears in \eqref{eqn:appendix-grf-weight} is defined as the specific subset of $\cJ^{(b)}_2$ samples belonging to the same leaf of tree $T(\cJ^{(b)}_1)$ as $x$,
\begin{equation*}
    L_b(x) = \{X_i \in \cR^{(b)}_m : i\in\cJ^{(b)}_2 \text{ and } x\in \cR^{(b)}_m\},
\end{equation*}

\paragraph{Conditional independence of $\alpha_i(x)$ and $O_i$ given $X_i$.}

By definition, the partition rules of tree $T(\cJ^{(b)}_1)$ depend only on the $\cJ^{(b)}_1$ subsample. The rules of a tree operate only on covariate values, and therefore the task of subsetting $\{X_i : i\in\cJ^{(b)}_2\}$ into leaves according to the rules of $T(\cJ^{(b)}_1)$ requires knowledge of the $X_i$ values from the $\cJ^{(b)}_2$ subsample but not necessarily the $O_i$. Based on this understanding, we will show that $\alpha_{i}(x)$ is conditionally independent of $O_i$ given $X_i$. Based on \eqref{eqn:appendix-grf-weight}, it is sufficient to show
\begin{equation*}
    \bbE[\alpha_{bi}(x)\mid O_i,X_i] = \bbE[\alpha_{bi}(x)\mid  X_i].
\end{equation*}

\begin{enumerate}[label=Case \arabic*.]
    \item Suppose $i \notin \cJ^{(b)}_2$. By definition $L_b(x) \subset \{X_j : j \in \cJ^{(b)}_2\}$. It is immediate that $\mathds 1(\{X_i \in L_b(x)\}) = 0$, and therefore $\alpha_{bi}(x) = 0$, and trivially
    \begin{equation*}
        \bbE[\alpha_{bi}(x)\mid O_i,X_i] = \bbE[\alpha_{bi}(x)\mid  X_i]=0, \qquad \text{for all} \quad i \notin \cJ^{(b)}_2.
    \end{equation*}

    \item Suppose $i \in \cJ^{(b)}_2$. We show that each component used to specify $\alpha_{bi}(x)$ in  \eqref{eqn:appendix-grf-weight} is conditionally independent of $O_i$ given $X_i$:
    \begin{itemize}
        \item Tree $T(\cJ^{(b)}_1)$ is trained using only the $\cJ^{(b)}_1$ subsample. This does not depend on $O_i$, for all $i \in \cJ^{(b)}_2$, conditionally on $X_i$ or otherwise.
        
        \item The rules of tree $T(\cJ^{(b)}_1)$ operate only on input values. Therefore, conditionally on $X_i$ for all $i \in \cJ^{(b)}_2$, the leaves of the $\cJ^{(b)}_2$ subsample specified by tree $T(\cJ^{(b)}_1)$ do not depend on the value of $O_i$.
        
        \item Leaf $L_b(x)$ is the specific subset of the $\cJ^{(b)}_2$ samples satisfying the same partition rules of $T(\cJ^{(b)}_1)$ as $x$. Given the leaves have been specified by the previous step, this depends only on $x$.
     \end{itemize}
     
     Therefore, the individual component functions $\alpha_{bi}(x) = \mathds 1(\{X_i \in L_b(x)\})/|L_b(x)|$ are conditionally independent of $O_i$ given $X_i$,
     \begin{equation*}
        \bbE[\alpha_{bi}(x)\mid O_i,X_i] = \bbE[\alpha_{bi}(x)\mid  X_i], \qquad \text{for all} \quad i \in \cJ^{(b)}_2.
    \end{equation*}    
\end{enumerate}

\paragraph{Demonstration of honest subsampling.}

Let $\{(X_i, O_i)\}_{i=1}^n$ denote a training set of $n = 20$ observations, where each $X_i = (X_{i,1}, X_{i,2})$ is over $\mathcal{X} \equiv \mathbb{R}^2$. We will use a forest of a single tree ($B = 1$) to specify the functional form of weights $\alpha_i(x)$.

\begin{enumerate}[label=\arabic*.]
    \item (Subsampling). Draw an initial subsample $\cI$ of size $s = 10$. 
    
    \item (Honest splitting). Split $\cI$ into two disjoint sets \magentabg{$\cJ_1$} and \cyanbg{$\cJ_2$}, each with $s/2 = 5$ samples.

\begin{minipage}{.65\textwidth}
\begin{center}
{\footnotesize
\begin{tabular}{r|c|c}
    $i$ & $X_{i,1}~X_{i,2}$ & $O_i$ \\
    \hline
    1 & & \\
    \vdots & & \\
    20 & & 
\end{tabular}
\begin{tikzcd}[labels={font=\normalsize},semithick]
{} \arrow[r,"\cI"] & {}
\end{tikzcd}
\begin{tabular}{r|c|c}
    $i \in \cI$ & $X_{i,1}~X_{i,2}$ & $O_i$ \\
    \hline
    \magentabg{2} & & \\
    \magentabg{3} & & \\
    \cyanbg{5} & & \\
    \magentabg{8} & & \\
    \cyanbg{10} & & \\
    \cyanbg{11} & & \\
    \cyanbg{14} & & \\
    \magentabg{15} & & \\
    \cyanbg{16} & & \\
    \magentabg{20} & & 
\end{tabular}}
\begin{tikzcd}[labels={font=\normalsize},semithick]
    & {} \\
    {} \arrow[ur,"\magentabg{$\cJ_1$}"] & \\
    {} \arrow[dr,"\cyanbg{$\cJ_2$}"'] & \\
    & {}
\end{tikzcd}
\end{center}
\end{minipage}
\begin{minipage}{.25\textwidth}
\begin{center}
{\footnotesize
\begin{tabular}{r|cc|c}
    $i \in$ \magentabg{$\cJ_1$} & $X_{i,1}$ & $X_{i,2}$ & $O_i$ \\
    \hline
    \magentabg{2} & & & \\
    \magentabg{3} & & & \\
    \magentabg{8} & & & \\
    \magentabg{15} & & & \\
    \magentabg{20} & & &  
    \vspace*{20pt}
\end{tabular}

\begin{tabular}{r|cc|c}
    $i \in$ \cyanbg{$\cJ_2$} & $X_{i,1}$ & $X_{i,2}$ & $O_i$ \\
    \hline 
    \cyanbg{5} & 1 & 0 & \\
    \cyanbg{10} & 2 & -2 & \\
    \cyanbg{11} & 0 & 1 & \\
    \cyanbg{14} & 1 & -2 & \\
    \cyanbg{16} & 2 & 2 &    
\end{tabular}}
\end{center}
\end{minipage}

    \begin{enumerate}[label=(\alph*)]
        \item Train a tree using the data from the first subsample \magentabg{$\cJ_1$}, inducing a partition of $\cX \equiv \bbR^2$. Suppose the fitted tree has the following structure:

\begin{center}
{\small
\begin{tikzpicture}[node/.style]
    \node [node,draw] (A) {$X_{i,1} < 3$};
    \path (A) ++(-125:\nodeDist) node [node,draw] (B) {$X_{i,2} < -1$};
    \path (A) ++(-55:\nodeDist) node [node] (C) {$R_3$};
    \path (B) ++(-125:\nodeDist) node [node] (D) {$R_1$};
    \path (B) ++(-55:\nodeDist) node [node] (E) {$R_2$};

    \draw (A) -- (B) node [left,pos=0.5]{yes} (A);
    \draw (A) -- (C) node [right,pos=0.5]{no} (A);
    \draw (B) -- (D) node [left,pos=0.5]{yes} (A);
    \draw (B) -- (E) node [right,pos=0.5]{no} (A);
\end{tikzpicture}}
\begin{tikzpicture}
\begin{axis}[
  axis lines=middle,
  axis line style={Stealth-Stealth,thick},
  xmin=-1.2,xmax=3.7,ymin=-2.5,ymax=2.5,
  xtick distance=1,
  ytick distance=1,
  tick label style={font=\footnotesize, anchor=north west},
  xlabel=$X_{i,1}$,
  ylabel=$X_{i,2}$,
  label style={anchor=south},
  grid=none,
  ]
    \draw[magenta,ultra thick](axis cs:3, -8) -- (axis cs:3, 8);
    \draw[magenta,ultra thick](axis cs:-8, -1) -- (axis cs:3, -1);
    \node[] at (axis cs: 3.4,-0.85) {{$\bm{\cR_3}$}};
    \node[] at (axis cs: 2.5,0.75) {{$\bm{\cR_2}$}};
    \node[] at (axis cs: 2.5,-1.5) {{$\bm{\cR_1}$}};
\end{axis}
\end{tikzpicture}
\end{center}

        \item Use the trained partition rules to subset the \cyanbg{$\cJ_2$} subsample into separate leaves.

\begin{center}
\begin{tikzpicture}
\begin{axis}[
  axis lines=middle,
  axis line style={Stealth-Stealth,thick},
  xmin=-1.2,xmax=3.7,ymin=-2.5,ymax=2.5,
  xtick distance=1,
  ytick distance=1,
  xticklabel=\empty,
  yticklabel=\empty,
  xlabel=$X_{i,1}$,
  ylabel=$X_{i,2}$,
  label style={anchor=south},
  grid=none,
  ]
    \draw[magenta,ultra thick](axis cs:3, -8) -- (axis cs:3, 8);
    \draw[magenta,ultra thick](axis cs:-8, -1) -- (axis cs:3, -1);
    \node[label={120:$X_5$},cyan,circle,fill,inner sep=2pt] at (axis cs:1,0) {};
    \node[label={120:$X_{10}$},cyan,circle,fill,inner sep=2pt] at (axis cs:2,-2) {};
    \node[label={120:$X_{11}$},cyan,circle,fill,inner sep=2pt] at (axis cs:0,1) {};
    \node[label={120:$X_{14}$},cyan,circle,fill,inner sep=2pt] at (axis cs:1,-2) {};
    \node[label={120:$X_{16}$},cyan,circle,fill,inner sep=2pt] at (axis cs:2,2) {};
    \node[] at (axis cs: 3.4,-0.85) {{$\bm{\cR_3}$}};
    \node[] at (axis cs: 2.5,0.75) {{$\bm{\cR_2}$}};
    \node[] at (axis cs: 2.5,-1.5) {{$\bm{\cR_1}$}};
\end{axis}
\end{tikzpicture}
\end{center}
    
    The tree trained on the \magentabg{$\cJ_1$} subsample will subset the \cyanbg{$\cJ_2$} subsample as
    \begin{equation*}
        \{X_i\,:\,i\in \cJ_2\} = \{X_{10},X_{14}\} \cup \{X_5, X_{11},X_{16}\} \cup \emptyset,
    \end{equation*}
    where we include the trivial union with $\emptyset$ to note that the tree assigns none of the \cyanbg{$\cJ_2$} samples to the partition of $\bbR^2$ where $X_{i,1} \geq 3$.
    \end{enumerate}
\end{enumerate}

The leaf $L_b(x)$ is the specific subset of the \cyanbg{$\cJ_2$} subsample such that $X_i \in  \cyanbg{$\cJ_2$}$ satisfy the same partition rules as $x$. Given a test point $x = x_0$, there are three possible scenarios for $L_b(x_0)$ that correspond to the three regions $R_1,R_2,R_3 \subset \bbR^2$ in which the test point $x_0$ can appear.

\paragraph{Region 1.} If $x_0 \in \cR_1$ then $L_b(x_0) = \{X_{10},X_{14}\}$ and
    \begin{equation*}
        \alpha_{bi}(x_0) = \frac{\mathds 1(\{X_i \in L_b(x_0)\})}{|L_b(x_0)|} = \begin{cases}
            \frac{1}{2} & \text{if } i \in \{10, 14\}, \\
            0 & \text{otherwise.}
        \end{cases}
    \end{equation*}
    Therefore, $\alpha_i(x_0) = \frac{1}{2}$ for $i = 10, 14$ and zero for $i \in \{1,\ldots,20\}\setminus\{10,14\}$.

\paragraph{Region 2.}  If $x_0 \in \cR_2$ then $L_b(x_0) = \{X_{5},X_{11},X_{16}\}$ and
\begin{equation*}
    \alpha_{bi}(x_0) = \frac{\mathds 1(\{X_i \in L_b(x_0)\})}{|L_b(x_0)|} = \begin{cases}
        \frac{1}{3} & \text{if } i \in \{5, 11, 16\}, \\
        0 & \text{otherwise.}
        \end{cases}
\end{equation*}
Therefore, $\alpha_i(x_0) = \frac{1}{3}$ for $i = 5, 11, 16$ and zero $i \in \{1,\ldots,20\}\setminus\{5,11,16\}$.

\paragraph{Region 3.} If $x_0 \in \cR_3$ then $L_b(x_0) = \emptyset$. This is a degenerate case such that
\begin{equation*}
    \alpha_{bi}(x_0) =  \frac{\mathds 1(\{X_i \in L_b(x_0)\})}{|L_b(x_0)|}
\end{equation*}
is undefined, leading to a non-identifiability problem whenever $x_0 \in \cR_3$. When this occurs, \citet{grfpackage} recommends calculating $\alpha_i(x_0)$ based on only the trees with non-empty $L_b(x_0)$. Let $\cB \coloneqq \{b \in \{1,\ldots,B\}\,:\,|L_b(x_0)| > 0\}$ denote the indices of non-empty leaves associated with $x_0$. Then, the GRF weight functions based on this recommendation can be written as
\begin{equation*}
    \alpha_i(x_0) = \frac{1}{|\cB|}\sum_{b\in\cB} \alpha_{bi}(x_0).
\end{equation*}

\subsection{In-sample predictions}\label{app:insample-predictions}

There is additional bias associated with making predictions based on in-sample observations $X_i$ that may have been used either to train the tree structure or to populate the local leaves $L_b(x)$. The recommendation of \citet{grfpackage} is along the lines of the out-of-bag mechanism used by \citet{breiman2001random}. For an in-sample observation $x' \in \{X_i\}_{i=1}^n$, calculate weights $\alpha^\texttt{oob}_i(x')$ based only on those trees whose initial subsample $\cI^{(b)}$ does not contain $x'$. Then the out-of-bag weight is defined as:
\begin{equation*}
    \alpha_i^\texttt{oob}(x') \coloneqq \frac{1}{|\{b\,:\,x' \notin \cI^{(b)}\}|} \sum_{\{b:x'\notin \cI^{(b)}\}} \alpha^\texttt{oob}_{bi}(x') \qquad \text{for} \qquad \alpha_{bi}^\texttt{oob}(x') \coloneqq \frac{\mathds 1(X_i \in L_b(x'))}{|L_b(x')|}.
\end{equation*}
The in-sample prediction $\hat\theta^\texttt{oob}(x')$ for $x'$ is made by GRF by solving a version of the locally weighted estimating equation \eqref{eqn:emp-weighted-est-eqn} using out of bag weights $\alpha_i^\texttt{oob}(x')$ 
\begin{equation*}
    \left(\hat\theta^\texttt{oob}(x'),\hat\nu^\texttt{oob}(x')\right) \in \argmin_{\theta,\nu} \norm{\sum^n_{i=1}\alpha^\texttt{oob}_i(x') \psi_{\theta,\nu}(O_i) },
\end{equation*}
which preserve the consistency and asymptotic normality of the GRF estimator at in-sample observations.

\subsection{Algorithms and Pseudocode}\label{app:pseudocode}

\begin{algorithm}[tb]
\caption{The fixed-point tree algorithm} \label{alg:grffpt-pseudocode-stage1-tree}
\begin{algorithmic}
\FUNCTION{\textsc{\color{FPTcolor}TrainFixedPointTree}}
\STATE {\bfseries Input:} node $\cN$
\STATE node $P_0 \leftarrow$ \textsc{GetSamples}($\cN$)
\STATE queue $\cQ \leftarrow$ \textsc{InitializeQueue}($P_0$)
\WHILE{\textsc{NotNull}(node $P \leftarrow$ \textsc{Pop}($\cQ$))}
    \STATE \makebox[0.6\linewidth][l]{$(\hat\theta_P,\hat\nu_P) \leftarrow$ \textsc{SolveEstimatingEquation}($P$)} $\triangleright$ Computes \eqref{eqn:local-est-P}.
    \STATE \makebox[0.6\linewidth][l]{$\rho^\FPT \leftarrow$ \textsc{{\color{FPTcolor}FPTPseudoOutcomes}}($\hat\theta_P, \hat\nu_P$)} $\triangleright$ Applies \eqref{eqn:fp-pseudo-outcome} over $P$.
    \STATE \makebox[0.6\linewidth][l]{split $\Sigma \leftarrow$ \textsc{CARTSplit}($P$, $\rho^\FPT$)} $\triangleright$ Optimizes \eqref{eqn:fp-approx-criterion}.
    \IF{\textsc{SplitSucceeded}($\Sigma$)}
        \STATE \textsc{SetChildren}($P$, \textsc{GetLeftChild}($\Sigma$), \textsc{GetRightChild}($\Sigma$))
        \STATE \textsc{AddToQueue}($\cQ$, \textsc{GetLeftChild}($\Sigma$))
        \STATE \textsc{AddToQueue}($\cQ$, \textsc{GetRightChild}($\Sigma$))
    \ENDIF
\ENDWHILE
\STATE {\bfseries Output:} tree with root node $P_0$
\ENDFUNCTION
\end{algorithmic}
\justify
\textsc{Pop} returns and removes the oldest element of queue a $\cQ$, unless $\cQ$ is empty, in which case it returns \texttt{NULL}. \textsc{CARTSplit} runs a multivariate CART split on the pseudo-outcomes $\rho^\FPT \coloneqq \{\rho_i^\FPT\}_{i\in P}$, and either returns a pair of child nodes or indicates that no split of $P$ is possible.
\end{algorithm}

\begin{algorithm}[tb]
\caption{Stage I GRF-$\FPT$: Training a generalized random forest using fixed-point trees} \label{alg:grffpt-pseudocode-stage1-forest}
\begin{algorithmic}
\FUNCTION{\textsc{\color{FPTcolor}TrainGeneralizedRandomForestFPT}}
\STATE {\bfseries Input:} samples $\cS$, number of trees $B$
\FOR{$b = 1,\ldots, B$}
    \STATE set of samples $\cI \leftarrow$ \textsc{Subsample}($\cS$)
    \STATE \makebox[0.6\linewidth][l]{sets of samples $\cJ_\textsc{build},\cJ_\textsc{populate} \leftarrow$ \textsc{HonestSplit}($\cI$)} $\triangleright$ See honesty: Appendix~\ref{app:honest-indep}.
    \STATE \makebox[0.6\linewidth][l]{tree $\cT_b \leftarrow$ \textsc{{\color{FPTcolor}TrainFixedPointTree}}($\cJ_\textsc{build}$)}  $\triangleright$ See Algorithm~\ref{alg:grffpt-pseudocode-stage1-tree}.
    \STATE \makebox[0.6\linewidth][l]{leaves $\cL_b\leftarrow$ \textsc{PopulateLeaves}($\cT_b, \cJ_\textsc{populate}$)}  $\triangleright$ See honesty: Appendix~\ref{app:honest-indep}.
\ENDFOR
\STATE {\bfseries Output:} forest $\cF \leftarrow \{\cL_1, \ldots, \cL_B\}$
\ENDFUNCTION
\end{algorithmic}
\justify
\textsc{PopulateLeaves} creates a collection of subsets (leaves) of the $\cJ_\textsc{populate}$ samples based on the partition rules of tree $\cT_b$. For weight functions $\alpha_i(x)$, see \textsc{GetWeights} in Algorithm~\ref{alg:grffpt-pseudocode-stage2}. For Stage II, see \textsc{Estimate} in Algorithm~\ref{alg:grffpt-pseudocode-stage2}, where estimates $\hat\theta(x)$ are made given a forest $\cF$.
\end{algorithm}

\newcommand{\pluseq}{\mathrel{+}=}

\begin{algorithm}[tb]
\caption{GRF-$\FPT$: Estimates of $\theta^*(x)$} \label{alg:grffpt-pseudocode-stage2}
\begin{algorithmic}
\FUNCTION{\textsc{Estimate}}
\STATE {\bfseries Input:} forest $\cF$, test observation $x \in \cX$
\STATE weights $\alpha \leftarrow$ \textsc{GetWeights}($\cF,x$)
\STATE {\bfseries Output:} $\hat\theta(x)$, the solution to the weighted estimating equation \eqref{eqn:emp-weighted-est-eqn} using weights $\alpha$
\ENDFUNCTION
\vspace{0.5em}
\FUNCTION{\textsc{GetWeights}}
\STATE {\bfseries Input:} forest $\cF$, test observation $x$
\STATE \makebox[0.6\linewidth][l]{vector of weights $\alpha \gets$ \textsc{Zeros}($n$)} $\triangleright$ Initialize weights; $n = |\cS|$ used to train $\cF$.
\FOR{indices $i:X_i \in L_b(x)$}
    \STATE $\alpha[{i}] \pluseq 1/|L_b(x)|$
\ENDFOR
\STATE \makebox[0.6\linewidth][l]{{\bfseries Output:} local weights $\alpha\leftarrow \alpha/|\cF|$} $\triangleright$ Weights \eqref{eqn:grf-weights}.
\ENDFUNCTION
\end{algorithmic}
\justify
Stage II of the GRF-$\FPT$ algorithm. The procedure \textsc{Estimate} returns an estimate of $\theta^*(x)$ given a forest $\cF$ trained under Stage I and a test observation $x$; see Algorithm~\ref{alg:grffpt-pseudocode-stage1-forest}.
\end{algorithm}

\subsection{Simulation Details}\label{app:simulation-details}

\paragraph{Implementation details.} 

We implement the GRF-$\FPT$ algorithm in a fork of \texttt{grf} \citep{grfpackage} available at \url{https://github.com/dfleis/grf}. The functions \texttt{grf::lm\_forest} and \texttt{grf::multi\_arm\_causal\_forest} provide an easy to use interface for VCM and HTE estimation, respectively, and we allow the choice GRF-$\FPTone$, GRF-$\FPTtwo$, or GRF-$\grad$ to be controlled via the \texttt{method} argument. Code and data for reproducing all experiments and figures are available at \url{https://github.com/dfleis/grf-experiments}.

\paragraph{Data-generating settings.} 

The different setting for the target effects $\theta_k^*(x)$ include a sparse linear setting, a sparse logistic setting with interaction, a dense logistic setting, and a random function generator setting. Tables~\ref{tbl:vcm-sim-settings} and~\ref{tbl:hte-sim-settings} provide the details of each regime for VCM and HTE experiments, respectively, for the data-generating model \eqref{eqn:applications-model}. These tables also summarize the different settings used to generate the $K$-dimensional regressors $W_i = (W_{i,1},\ldots,W_{i,K})^\top$. For VCM experiments, $W_{i,k}\sim \cN(0,1)$ for all $k = 1,\ldots, K$. For HTE experiments, $W_i\mid X_i=x \sim \text{Multinomial}(1, (\pi_1(x),\ldots,\pi_K(x)))$, where $\pi_k(x)$ denotes the underlying probability the sample is observed as having treatment level $k \in \{1,\ldots, K\}$.

\begin{table}
    \centering
    \begin{tabular}{lc}
        \toprule
        Parameter & Values \\
        \hline
        $K$ & 4; 16; 64; 256 \\
        $n$ & 10,000; 20,000; 100,000 \\
        $\dim(\cX)$ & 5 \\
        nTrees & 100 \\
        \bottomrule
    \end{tabular}
    \begin{tabular}{lc}
        \toprule
        Parameter & Values \\
        \hline
        $K$ & 4; 16; \\
        $n$ & 1000; 4000 \\
        $\dim(\cX)$ & 2 \\
        nTrees & 100; 500 \\
        \bottomrule
    \end{tabular}
    \caption{Parameter values for VCM and HTE experiments in Section~\ref{sec:simulations}. Target/regressor dimension $K$, number of observations $n$, dimension of the auxiliary variables $\dim(\cX)$, and number of trees $\text{nTrees}$. Experiments include a large-$n$ setting (left table) and a small-$n$ setting (right table).}
    \label{tbl:sim_params}
\end{table}

\begin{table}[h!]
\centering
\begin{tabular}{cll}
\toprule
    VCM Setting & Effect function $\theta^*_k(x)$ & $W_{i,k}$ \\
\hline
    1 & $\theta^*_k(x) = \beta_{k1} x_1,~\beta_{k1} \sim \cN(0,1)$ &  $\cN(0,1)$ \\
    2 & $\theta^*_k(x) = \varsigma(\beta_{k1} x_1)\varsigma(\beta_{k2} x_2),~\beta_{k1},\beta_{k2} \sim \cN(0,1)$ & $\cN(0,1)$ \\
    3 & $\theta^*_k(x) = \varsigma({\beta}^\top_k x)$, for ${\beta}_k\sim\cN_p({\bf 0},\mathbb I)$ &  $\cN(0,1)$ \\
    4 & $\theta^*_k(x) = \text{RFG}(x)$ & $\cN(0,1)$ \\
\bottomrule
\end{tabular}
\caption{Settings for the true effects $\theta^*_k(\cdot)$ and the regressors $W_{i,k}$ for VCM experiments in Section~\ref{sec:simulations}. The function $\varsigma(u) \coloneqq 1 + (1 + e^{-20(u - 1/3)})^{-1}$ is a logistic-type function in \cite{athey2019generalized}. The random function generator $\text{RFG}(x)$ is described in Appendix~\ref{app:simulation-details}.}
\label{tbl:vcm-sim-settings}
\end{table}

\begin{table}[h!]
\centering
\begin{tabular}{cll}
\toprule
    HTE Setting & Treatment effect $\theta^*_k(x)$ & Treatment probability $\pi_k(x)$ for $W_{i,k}$ \\
\hline
    1 & $\theta^*_k(x) = \beta_{k1} x_1,~\beta_{k1} \sim \cN(0,1)$ & $\pi_k(x) = 1/K$ for all $k$. \\
    2 & $\theta^*_k(x) = \beta_{k1} x_1,~\beta_{k1} \sim \cN(0,1)$ & $\pi_k(x) = \begin{cases}
        x_1 & k = 1, \\
        \frac{1}{K-1}(1-x_1) & k = 2,\ldots,K
    \end{cases}$ \\
    & & \\
    3 & $\theta^*_k(x) = \varsigma(\beta_{k1} x_1)\varsigma(\beta_{k2} x_2)$ & $\pi_k(x) = 1/K$ for all $k$. \\
    & \hphantom{.....} for $\beta_{k1},\beta_{k2} \sim \cN(0,1)$ & \\
    4 & $\theta^*_k(x) = \varsigma({\beta}^\top_k x)$, for ${\beta}_k\sim\cN_p({\bf 0},\mathbb I)$ & $\pi_k(x) = \begin{cases}
        x_1 & k = 1, \\
        \frac{1}{K-1}(1-x_1) & k = 2,\ldots,K.
    \end{cases}$ \\
    5 & $\theta^*_k(x) = \text{RFG}(x)$ & $\pi_k(x) = \frac{\exp \left\{{\gamma}_k^\top x \right\}}{ \sum^K_{j=1} \exp \left\{{\gamma}_j^\top x \right\}}$  \\
    & & \hphantom{.....} for ${\gamma}_k\sim\cN_p({\bm 0},\mathbb I)$. \\
\bottomrule
\end{tabular}
\caption{Settings for the underlying treatment effects $\theta^*_k(\cdot)$ and treatment probabilities $\pi_k(x)$ for HTE experiments in Section~\ref{sec:simulations}. The function $\varsigma(u) \coloneqq 1 + (1 + e^{-20(u - 1/3)})^{-1}$ is a logistic-type function used in \cite{athey2019generalized}. The random function generator $\text{RFG}(x)$ is described in Appendix~\ref{app:simulation-details}.}
\label{tbl:hte-sim-settings}
\end{table}

\paragraph{Random function generator.}

The effect functions $\theta^*_k(x) = \text{RFG}(x)$ under VCM Setting 4 (in Table~\ref{tbl:vcm-sim-settings}) and HTE Setting 5 (in Table~\ref{tbl:hte-sim-settings}) follow the random function generator design of \citet{friedman2001greedy}. The idea is to measure the performance of the estimator under a variety of randomly generated targets. Each $\theta^*_k(\cdot)$ is randomly generated as a linear combination of functions $\{g_{\ell}(\cdot)\}_{\ell}^{20}$ of the form
    \begin{equation*}
        \theta^*_k(x) = \sum_{\ell=1}^{20}a_{\ell}g_{\ell}(z_{\ell}),
    \end{equation*}
    where the coefficients $\{a_{\ell}\}_{\ell=1}^{20}$ are randomly generated from a uniform distribution $a_{\ell}\sim\mathcal U([-1,1])$. Each $g_{l}(z_{l})$ is a function of a randomly selected $p_{\ell}$-size subset of the $p$-dimensional variable $x$, where the size of each subset $p_{\ell}$ is randomly chosen by $p_{\ell} = \min(\left\lfloor 1.5+r_\ell\right\rfloor ,p)$, and $r_\ell$ is generated from an exponential distribution with mean $2$, $r_\ell\sim\mathrm{Exp}(0.5)$. Each $g(z_\ell)$ uses a $p_\ell$-sized random subset $z_\ell \in \bbR^{p_\ell}$ of the $p$-dimensional input $x \in \bbR^p$:
    \begin{equation*}
        z_{\ell} \coloneqq \left( x_{\phi_\ell(1)}, \ldots, x_{\phi_\ell(p_\ell)} \right) \in \bbR^{p_\ell},
    \end{equation*}
    such that $\{\phi_\ell(1),\ldots,\phi_\ell(p_\ell)\}$ is a length-$p_\ell$ permutation of indices drawn from $\{1,\ldots, p\}$, without replacement. The functions $g_{\ell}(\cdot)$ are Gaussian functions of the $p_\ell$ sampled variables:
    \begin{equation*}
    g_{\ell}(z_{\ell}) \coloneqq \exp\left\{-\frac{1}{2}(z_{\ell}-\mu_{\ell})^\top \mathbf{V}_{\ell}(z_{\ell}-\mu_{\ell})\right\},
    \end{equation*}
    where the mean vector $\mu_\ell \in \bbR^{p_\ell}$ is randomly generated from a standard multivariate Gaussian, $\mu_\ell \sim \mathcal N_{p_\ell}({\bf 0}, \bbI)$. The $p_{l}\times p_{l}$ covariance matrix $\mathbf{V}_{l}$ are formed through the spectral decomposition:
    \begin{equation*}
        \mathbf{V}_{\ell} = \mathbf{U}_{\ell}\mathbf{D}_{\ell}\mathbf{U}_{\ell}^\top,
    \end{equation*}
    where $\mathbf{U}_{\ell}$ is a random $p_\ell\times p_\ell$ orthonormal matrix and $\mathbf{D}_{\ell} \coloneqq \text{diag}(d_{1,\ell}, \ldots ,d_{p_{\ell},\ell})$ with diagonal entries $d_{j,\ell}$ generated from a uniform distribution according to $\sqrt{d_{j,\ell}}\sim\mathcal U(0.1,2.0)$.

\section{Additional Simulations}

\subsection{Settings for the criterion value experiment in Section~\ref{sec:pseudo-outcomes}}\label{app:experiments-fp-tree-split-invariance}

The criterion value experiment in Section~\ref{sec:pseudo-outcomes} was run under a varying coefficient model of the form
\begin{equation}\label{eqn:criteria-comparison-model}
    Y_i \coloneqq W_i^\top \theta^*(X_i) + \epsilon_i, \qquad \epsilon_i \sim \mathcal N(0,0.5^2),
\end{equation}
where the regressors $W_i$ were generated as bivariate standard Gaussian samples $W_i \sim \mathcal N_2({\bf 0},\bbI)$ and the auxiliary covariates were generated as standard uniform samples $X_i \sim \cU(0,1)$. The data-generating coefficient functions were $\theta^*(x) \coloneqq (\sin(2\pi x),\; x)$ and the criterion values were computed based on $n = 1000$ samples following \eqref{eqn:criteria-comparison-model}.

\subsection{Supporting experiments for Section~\ref{sec:simulations}}\label{app:supporting-experiments-sims}

\paragraph{Multicollinearity in auxiliary covariates.} 

We conducted a VCM experiment with highly correlated auxiliary covariate features. We ran a modified version of VCM Setting 3 by generating auxiliary covariates as $X_i \sim \mathcal N({\bf 0}, \Sigma)$, where $[\Sigma]_{j,k} = \omega^{|j-k|}$ for $\omega \in \{0, 0.5, 0.9\}$. Table~\ref{tbl:multicollinearity-in-X} provides a clear summary of the computational performance of GRF-$\FPT$ relative to GRF-$\grad$ and statistical accuracy (MSE -- multiplied by 100 for readability). All experiments were run over a forest of 10 trees and MSE estimates were computed over 50 replications of the model and evaluated on a separate set of $n = 5,000$ samples, carried out using GRF-$\FPTtwo$ and GRF-$\grad$. These results demonstrate clearly that GRF-$\FPT$ remains robust, stable, and computationally efficient, even under high multicollinearity in $X_i$.

\begin{table}[h!]
\centering
\begin{tabular}{llllccc}
    \toprule
    $\dim(\cX)$ & $n$ & $K$ & $\omega$ & Speedup & $100\times \text{MSE}$ $\grad$ & $100\times \text{MSE}$ $\FPTtwo$ \\
    \hline
     5 & 10,000 & 64 & 0.00 & 2.55 & 16.60 & 16.83 \\
     5 & 10,000 & 64 & 0.50 & 2.53 & 15.48 & 15.47 \\
     5 & 10,000 & 64 & 0.90 & 2.35 & 10.95 & 11.09 \\
    \bottomrule
\end{tabular}
\caption{Effect of multicollinearity in the auxiliary covariates $X_i$ on the relative computational gain of GRF-$\FPTtwo$, as well as the statistical accuracy of both GRF-$\FPT$ and GRF-$\grad$ estimators.}
\label{tbl:multicollinearity-in-X}
\end{table}

\paragraph{Subsampling ratio.}

We carried out an experiment to show that the subsample proportion does not affect the computational advantage or statistical accuracy of GRF-$\FPT$ relative to GRF-$\grad$. We varied the subsampling ratio $s/n \in \{0.25, 0.50, 0.75\}$ under VCM Setting 3 over a forest of 10 trees carried out using GRF-$\FPTtwo$ and GRF-$\grad$. Table~\ref{tbl:subsample-ratio} summarizes our results, averaged over 50 replications of the model, with a test set of 5,000 samples. These results show clearly that the statistical accuracy (MSE) of GRF-$\FPTtwo$ relative to GRF-$\grad$ does not depend strongly on the subsample ratio.

\begin{table}[h!]
\centering
\begin{tabular}{llllccc}
    \toprule
    $\dim(\cX)$ & $n$ & $K$ & $s/n$ & Speedup & $100\times \text{MSE}$ $\grad$ & $100\times \text{MSE}$ $\FPTtwo$ \\
    \hline
     2 & 10,000 & 64 & 0.25 & 2.77 & 2.86 & 2.90 \\
     2 & 10,000 & 64 & 0.50 & 3.10 & 2.91 & 2.90 \\
     2 & 10,000 & 64 & 0.75 & 2.98 & 3.21 & 3.19 \\
    \bottomrule
\end{tabular}
\caption{Effect of the subsampling ratio $s/n$ on the relative computational gain of GRF-$\FPTtwo$, as well as the statistical accuracy of both GRF-$\FPT$ and GRF-$\grad$ estimators.}
\label{tbl:subsample-ratio}
\end{table}

\paragraph{Large sample size.}

We ran additional experiments to clearly show how our method scales for very large datasets. Using a forest of 10 trees, we tested our method on VCM Setting 3 with 
sample sizes up to $n = 500,000$, carried out using GRF-$\FPTtwo$ and GRF-$\grad$. The results are summarized in Table~\ref{tbl:large-sample} and demonstrate that, even as the dataset grows very large, our method consistently remains faster than GRF-$\grad$. While the relative speedup slightly decreases at first, it stabilizes towards a consistent advantage as grows $n$ grows sufficiently large, suggesting that the advantage is not bottlenecked by $n$ and maintains a robust advantage at scale.

\begin{table}[h!]
\centering
\begin{tabular}{lllc}
    \toprule
    $\dim(\cX)$ & $K$ & $n$ & Speedup \\
    \hline
     2 & 256 & 10,000 & 4.54 \\
     2 & 256 & 20,000 & 3.59 \\
     2 & 256 & 50,000 & 3.49 \\
     2 & 256 & 100,000 & 3.11 \\
     2 & 256 & 200,000 & 3.04 \\
     2 & 256 & 500,000 & 3.08 \\
    \bottomrule
\end{tabular}
\caption{Effect of increasing sample sizes $n$ on the relative computational gain of GRF-$\FPTtwo$.}
\label{tbl:large-sample}
\end{table}

\subsection{Supporting figures for Section~\ref{sec:simulations}}\label{app:simulation-figures}

\subsubsection{VCM experiments}

\paragraph{Large $n$ VCM simulations.}

Figure~\ref{fig:timing-vcm} illustrates the absolute fit times for the GRF-$\FPT$ algorithms under the four VCM settings for $\theta^*_k(x)$ described in Table~\ref{tbl:vcm-sim-settings} over the large-$n$ settings in Table~\ref{tbl:sim_params}. Across all settings and all dimensions, GRF-$\FPT$ is consistently several factors faster than GRF-$\grad$. The speedup factor is summarized in Figure~\ref{fig:timing-ratio-vcm}, which illustrates the relative speedup of GRF-$\FPT$, calculated as the ratio of GRF-$\grad$ fit times over GRF-$\FPT$ fit times. Consistent with the observations in Section~\ref{sec:applications}, we find that the speed advantage of GRF-$\FPT$ increases as the dimension of the target increases. 

Figure~\ref{fig:forest-mse-vcm} shows that this speed advantage comes while performing comparably to GRF-$\grad$ in terms of statistical accuracy. Across all settings for VCMs with $K = 4$ dimensional targets, the MSE estimates from GRF-$\FPT$ is highly similar to the MSE estimates of GRF-$\grad$, while for $K = 256$ dimensional targets one sees more variation in MSE estimates across the methods. This effect likely reflects the increased variance associated with high-dimensional estimation. In some cases we see GRF-$\FPTone$ slightly outperform both GRF-$\FPTtwo$ and GRF-$\grad$, in other cases we see GRF-$\grad$ slightly outperform both GRF-$\FPT$ methods, and in others GRF-$\FPTtwo$ yields the lowest MSE. One sees that these differences are typically small. The key benefit we emphasize is that GRF-$\FPT$ is able to achieve nearly identical statistical accuracy with a substantial improvement in computational speed.

\paragraph{Small $n$ VCM simulations.}

Figures~\ref{fig:timing-vcm-small} and~\ref{fig:timing-ratio-vcm-small} illustrate the absolute fit times and relative speed advantage, respectively, of GRF-$\FPT$ under the VCM design of $\theta_k^*(x)$ over the small-$n$ settings. One sees that even when $n$ is more modest, GRF-$\FPT$ consistently offers a computational advantage over GRF-$\grad$, with possible outliers under VCM Setting 2 at $K = 4$. We believe this negative relative advantage to be caused by random fluctuations in computation and are not representative of the $\FPT$ algorithm itself, particularly in light of the fact that the negative effect vanishes when the number of trees increases from 100 to 500. As one would expect based on the large-$n$ results, the relative advantage tends to increase with increasing $K$, and generally stabilizes with increasing $n$. Figure~\ref{fig:forest-mse-vcm-small} shows that the GRF-$\FPT$ speed advantage does not come at any material cost in statistical accuracy, with similar performance to GRF-$\grad$ across all settings.

\begin{figure}
    \centering
    \includegraphics[width=0.49\linewidth]{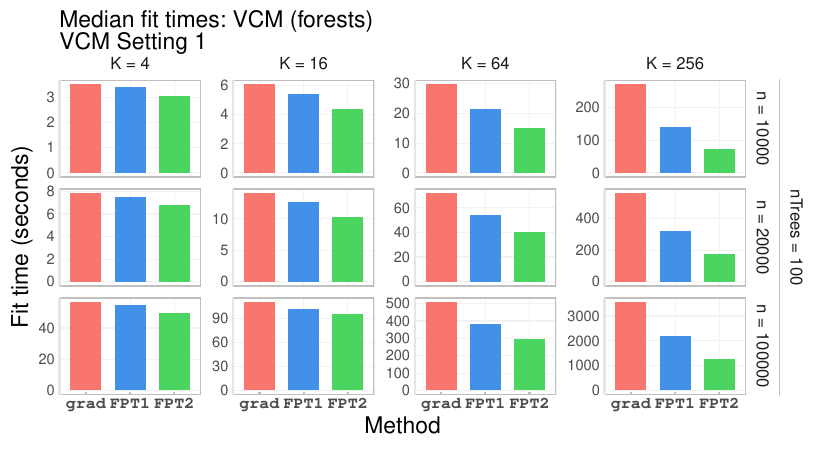}
    \includegraphics[width=0.49\linewidth]{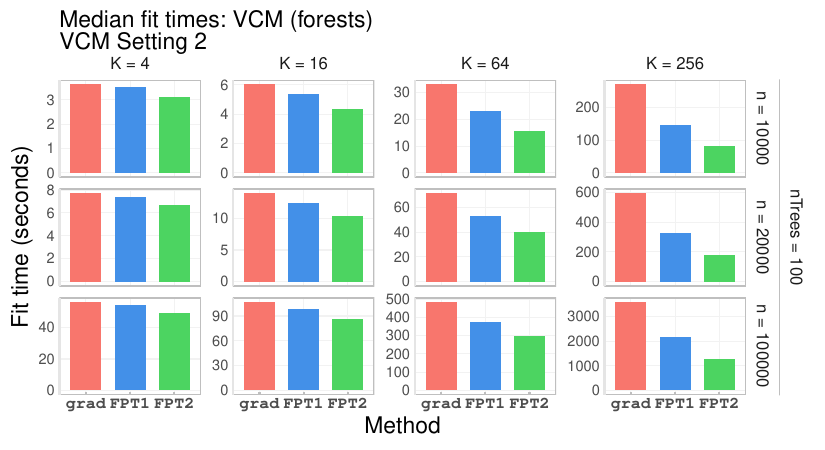}

    \includegraphics[width=0.49\linewidth]{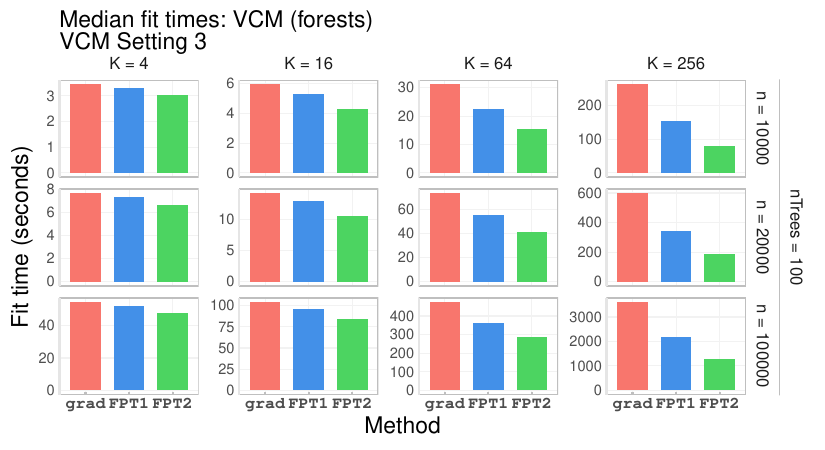}
    \includegraphics[width=0.49\linewidth]{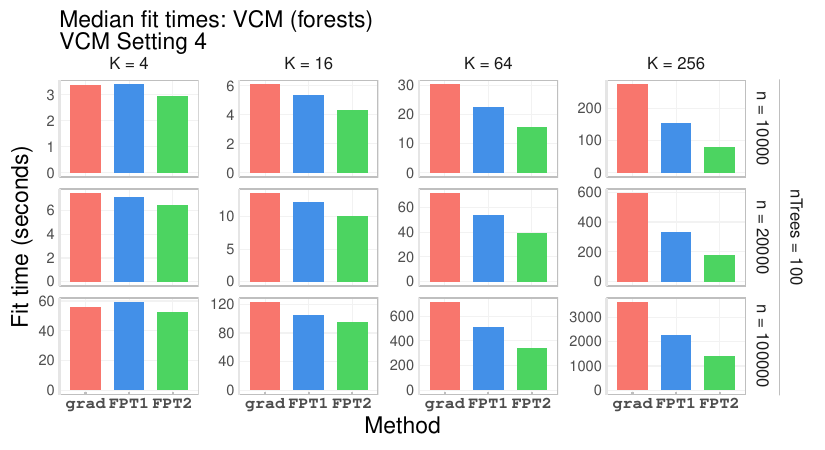}
    \caption{Absolute fit times for VCM timing experiments under the settings in  Table~\ref{tbl:vcm-sim-settings} and large-$n$ settings in Table~\ref{tbl:sim_params}.}\label{fig:timing-vcm}
\end{figure}

\begin{figure}
    \centering
    \includegraphics[width=0.95\linewidth]{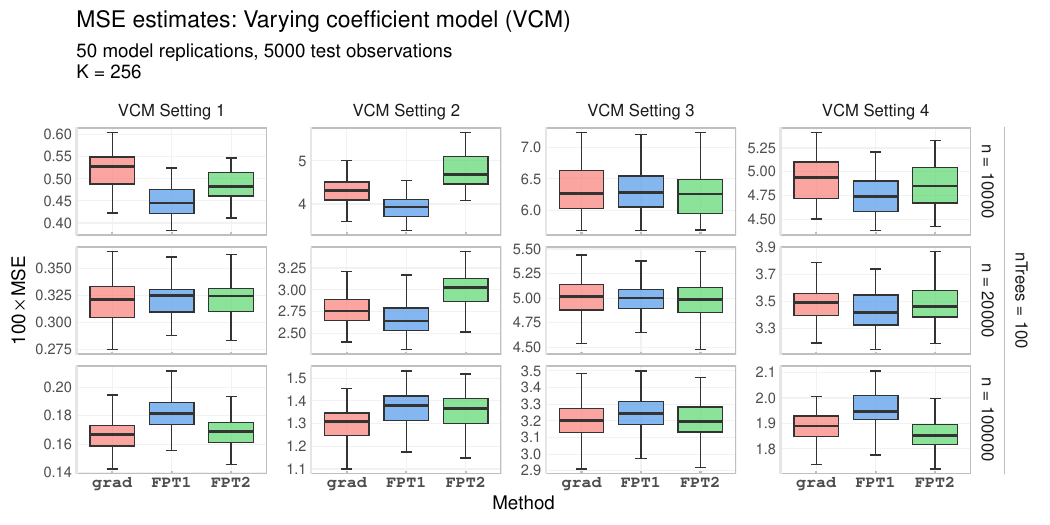}
    \includegraphics[width=0.95\linewidth]{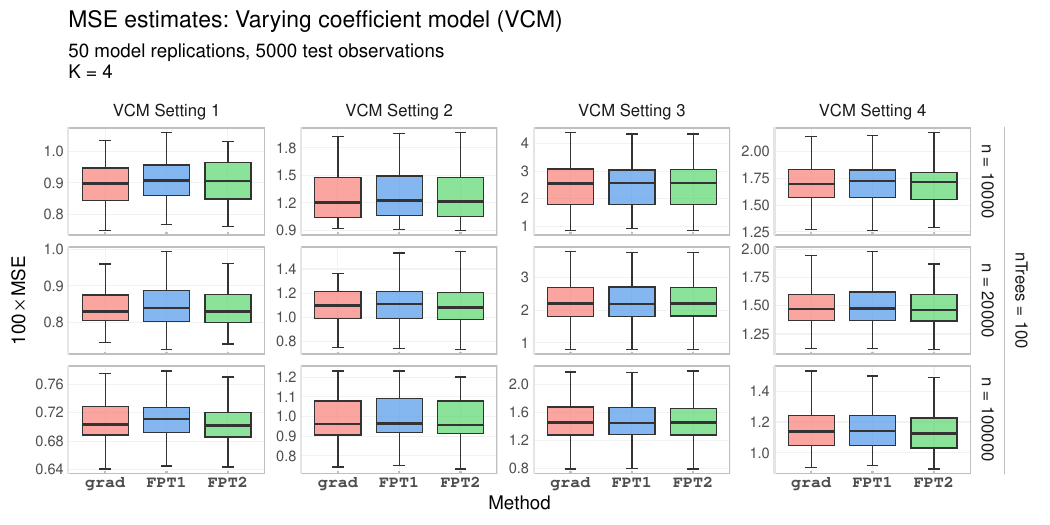}
    \caption{Estimates of MSE $\bbE[\|\theta^*(X) - \hat\theta(X)/K\|^2_2]$ for VCM for $K = 256$ dimensional (top) and $K = 4$ dimensional targets (bottom) under the large-$n$ settings in Table~\ref{tbl:sim_params}.}\label{fig:forest-mse-vcm}
\end{figure}

\begin{figure}
    \centering
    \includegraphics[width=0.7\linewidth]{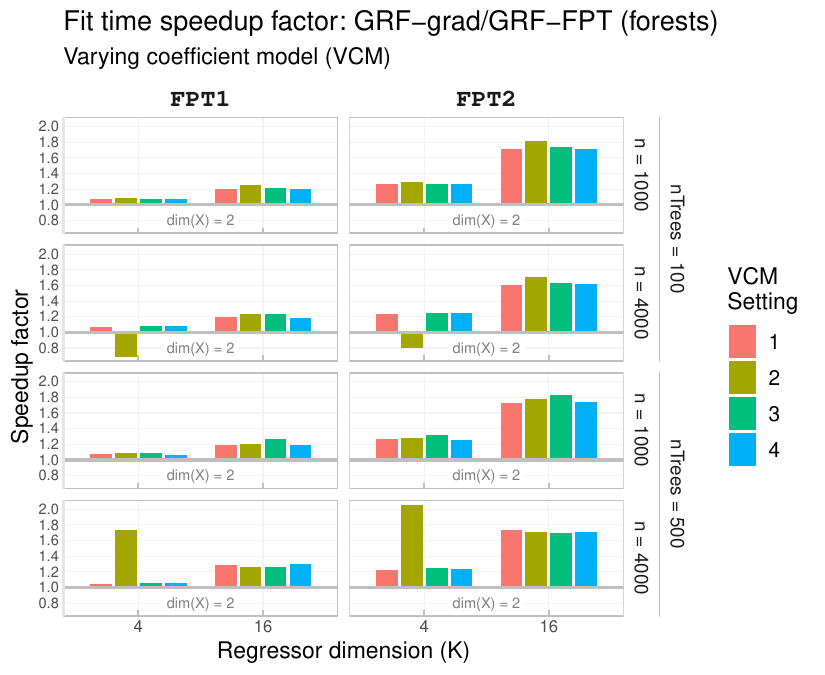}
    \caption{Speedup factor for GRF-$\FPT$ in comparison to GRF-$\grad$ for VCM timing experiments under the small-$n$ settings in Table~\ref{tbl:sim_params}.}\label{fig:timing-ratio-vcm-small}
\end{figure}

\begin{figure}
    \centering
    \includegraphics[width=0.4\linewidth]{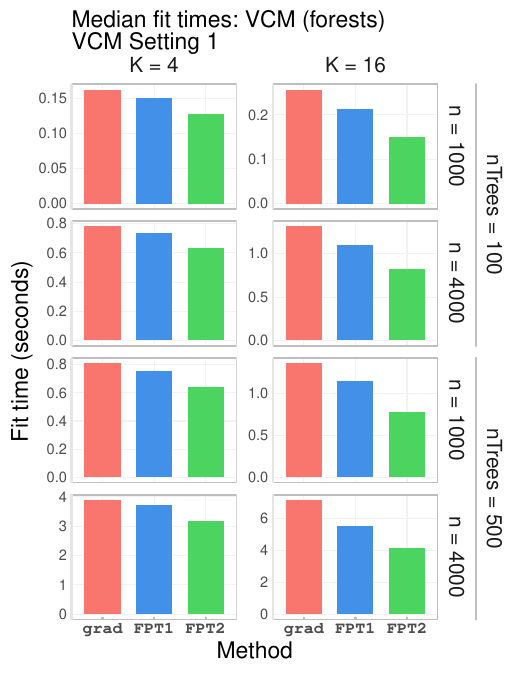}
    \includegraphics[width=0.4\linewidth]{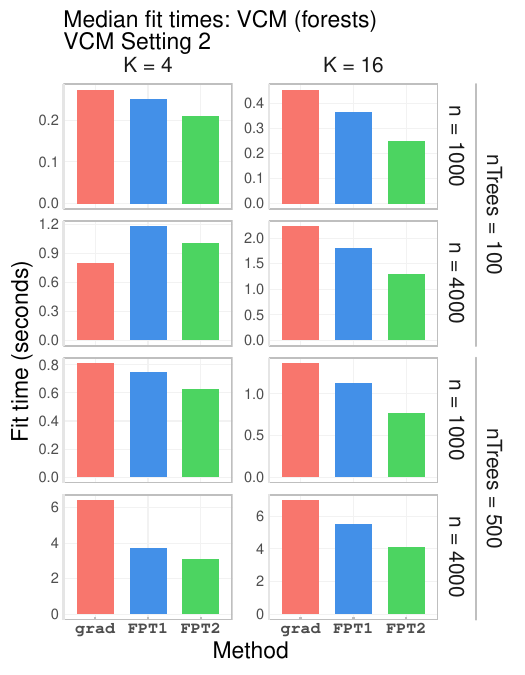}
    
    \includegraphics[width=0.4\linewidth]{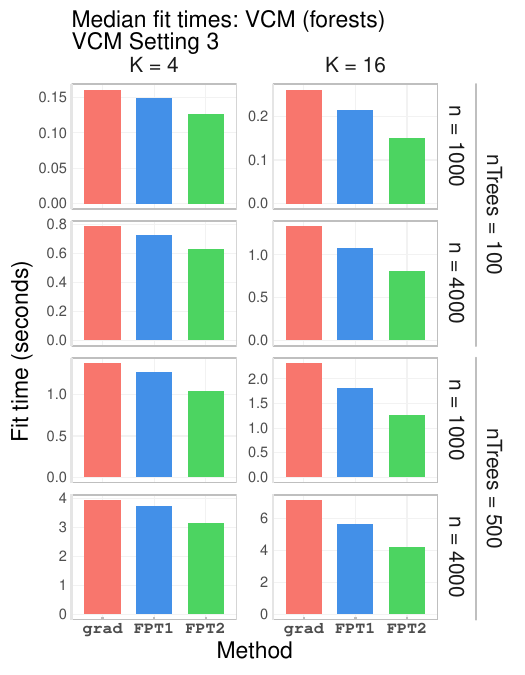}
    \includegraphics[width=0.4\linewidth]{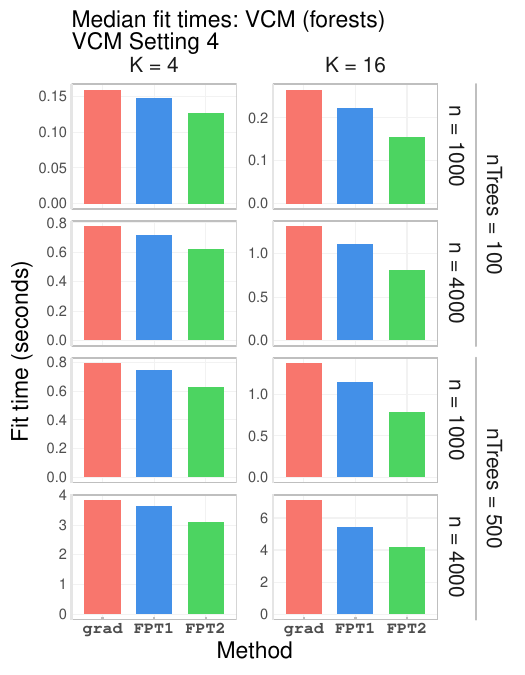}
    \caption{Absolute fit times for VCM timing experiments under the settings in  Table~\ref{tbl:vcm-sim-settings} and small-$n$ settings in Table~\ref{tbl:sim_params}.}\label{fig:timing-vcm-small}
\end{figure}

\begin{figure}
    \centering
    \includegraphics[width=0.9\linewidth]{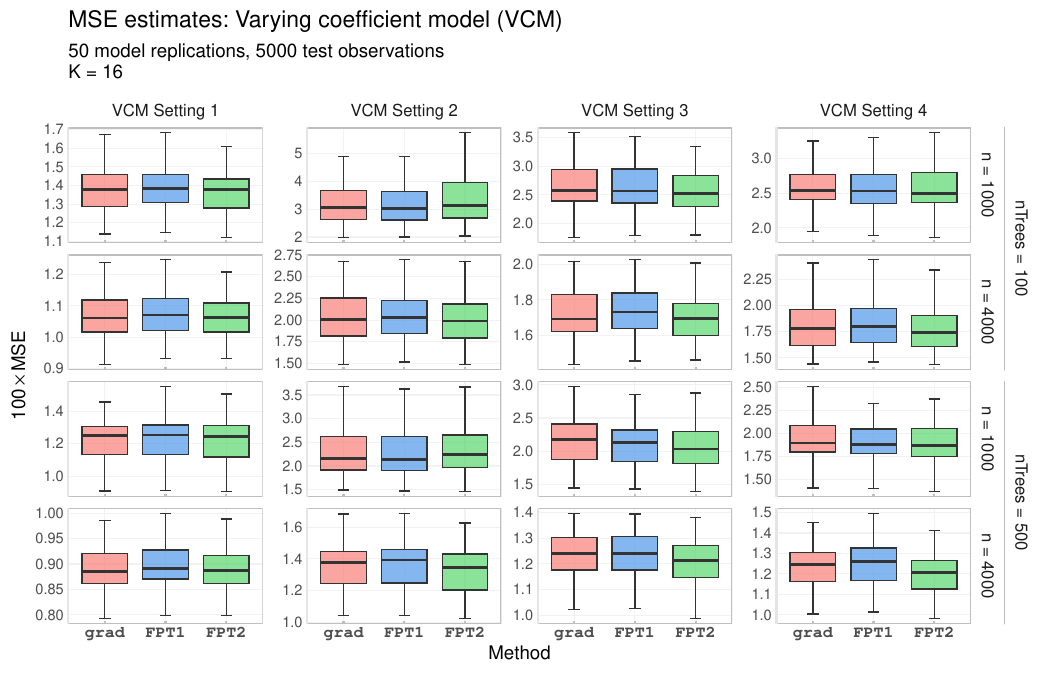}
    \includegraphics[width=0.9\linewidth]{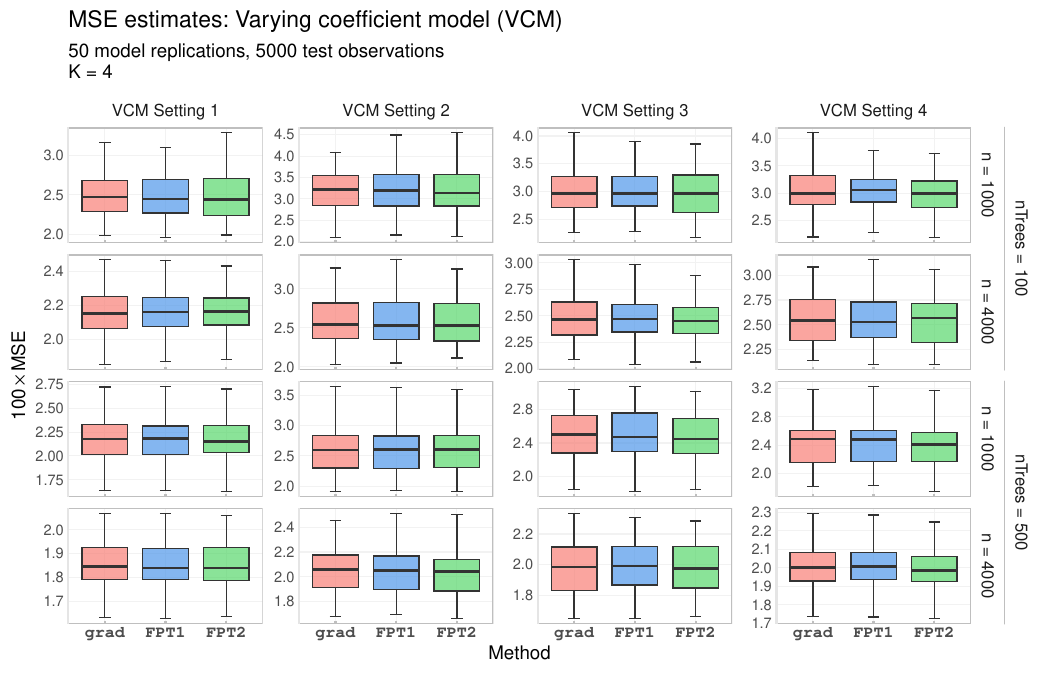}
    \caption{Estimates of MSE $\bbE[\|\theta^*(X) - \hat\theta(X)/K\|^2_2]$ for VCM for $K = 16$ dimensional (top) and $K = 4$ dimensional targets (bottom) under the small-$n$ settings in Table~\ref{tbl:sim_params}.}\label{fig:forest-mse-vcm-small}
\end{figure}

\subsection{HTE experiments}

\paragraph{Large $n$ HTE simulations.}

Figure~\ref{fig:timing-hte} illustrates the absolute fit times for the GRF-$\FPT$ algorithm under the five HTE settings of $\theta^*_k(x)$ and $\pi_k(x)$ described in Table~\ref{tbl:hte-sim-settings} over the large-$n$ settings in Table~\ref{tbl:sim_params}. We find that GRF-$\FPT$ is consistently faster than GRF-$\grad$. The speedup factor of GRF-$\FPT$ relative to GRF-$\grad$ is summarized in Figure~\ref{fig:timing-ratio-hte}, calculated as the ratio of GRF-$\grad$ fit times over GRF-$\FPT$ fit times. As was seen for VCM experiments, the speed advantage of GRF-$\FPT$ scales with the dimensionality $K$ of the target. One sees from both Figures~\ref{fig:timing-ratio-hte} and~\ref{fig:timing-hte} that GRF-$\FPT$'s computational advantage is less dramatic than under the VCM experiments. This can be understood based on the fact that the VCM regressors $W_i$ are continuous while the HTE regressors represent binary indicators. Continuous regressors provide more granularity when fitting the child statistics $\tilde\theta_{C_j}$, and as a result provide a larger set of candidate splits over the covariates. Nevertheless, one sees in Figure~\ref{fig:timing-ratio-hte} that the $\FPT$ splitting mechanism is still up to 1.5$\times$ faster under the largest regressor setting $K = 256$, with a more modest, but persistent savings across all settings. 

The statistical benchmarks for our HTE experiments are shown in Figure~\ref{fig:forest-mse-hte}. Consistent with the VCM experiments, one sees that the computational advantage of GRF-$\FPT$ does not come at the cost of in terms of its statistical accuracy.

\paragraph{Small $n$ HTE simulations.}

Figures~\ref{fig:timing-ratio-hte-small} and~\ref{fig:timing-hte-small} summarize the relative speed advantage and absolute fit times for the GRF-$\FPT$ algorithm under the small-$n$ HTE design. Consistent with the large-$n$ HTE experiments the $\FPTtwo$ mechanism sees a stable computational advantage across all settings, with an increasing effect in increasing $K$, while the $\FPTone$ mechanism displays a persistent advantage for $K = 16$ and comparable computational performance for $K = 4$. The more modest relative advantage for the small-$n$ experiments is itself consistent with the VCM small-$n$ experiments, owing in large part due to the smaller values of $K$. Figure~\ref{fig:forest-mse-hte-small} compares the statistical performance of GRF-$\FPT$ to GRF-$\grad$, with no material difference between either GRF-$\FPTone$, GRF-$\FPTtwo$, or GRF-$\grad$'s estimation accuracy.

\begin{figure}[h!]
    \centering
    \includegraphics[width=0.75\linewidth]{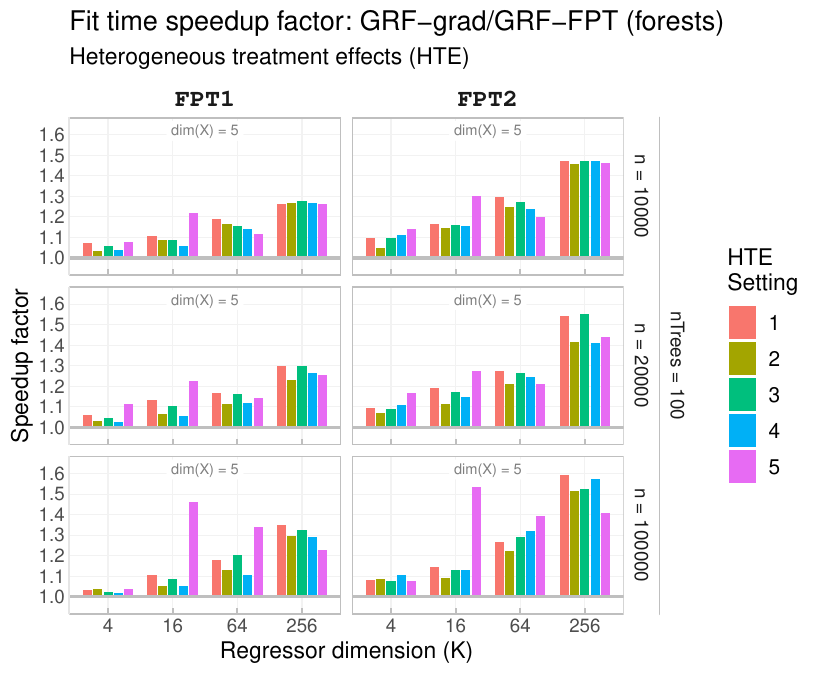}
    \caption{Speedup factor for GRF-$\FPT$ in comparison to GRF-$\grad$ for HTE timing experiments under the large-$n$ setting in Table~\ref{tbl:sim_params}.}\label{fig:timing-ratio-hte}
\end{figure}

\begin{figure}[h!]
    \centering
    \includegraphics[width=0.49\linewidth]{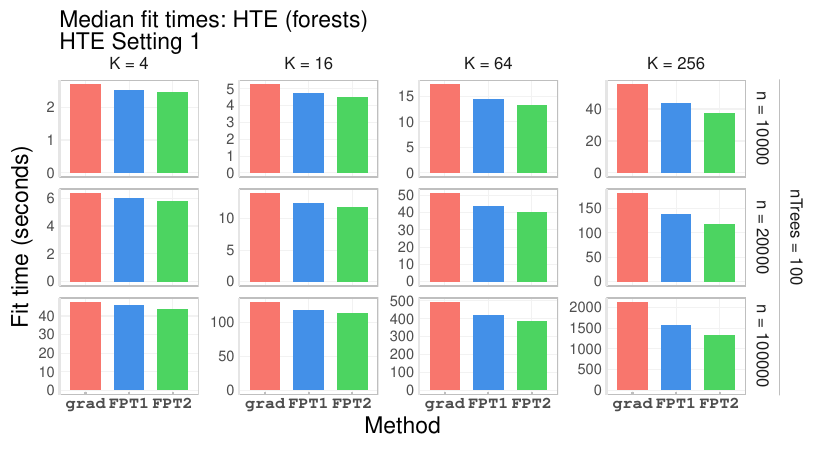}
    \includegraphics[width=0.49\linewidth]{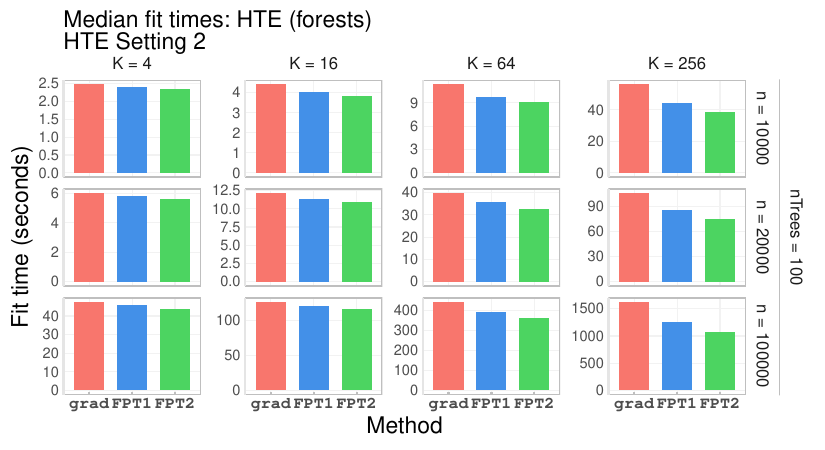}

    \includegraphics[width=0.49\linewidth]{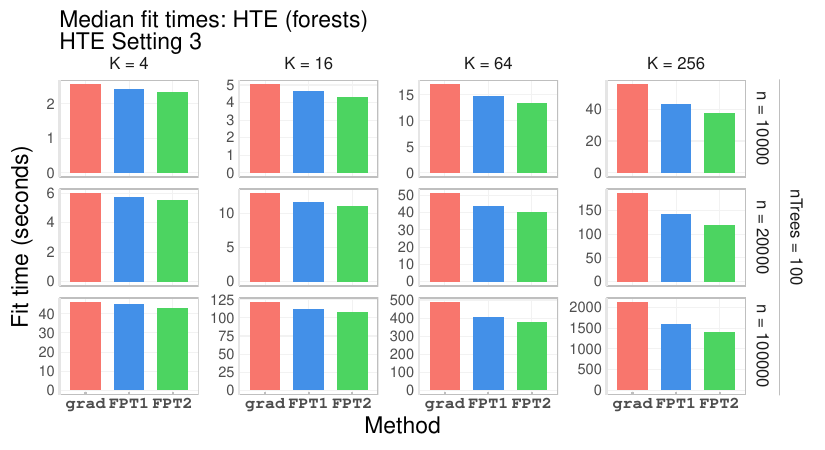}
    \includegraphics[width=0.49\linewidth]{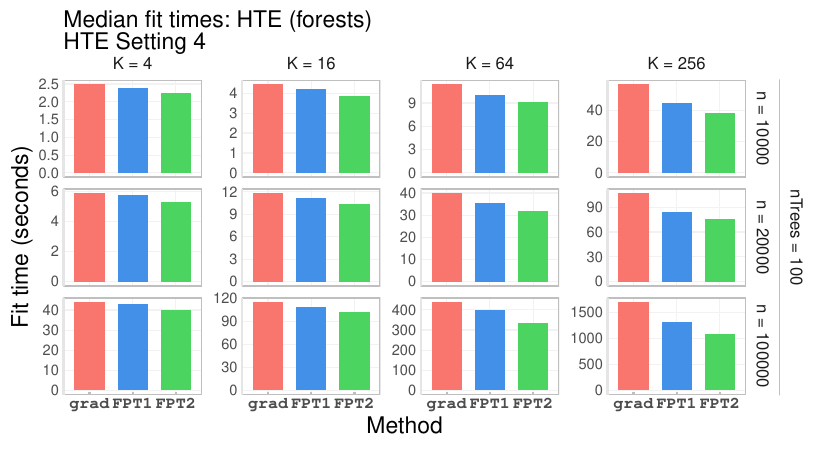}

    \includegraphics[width=0.49\linewidth]{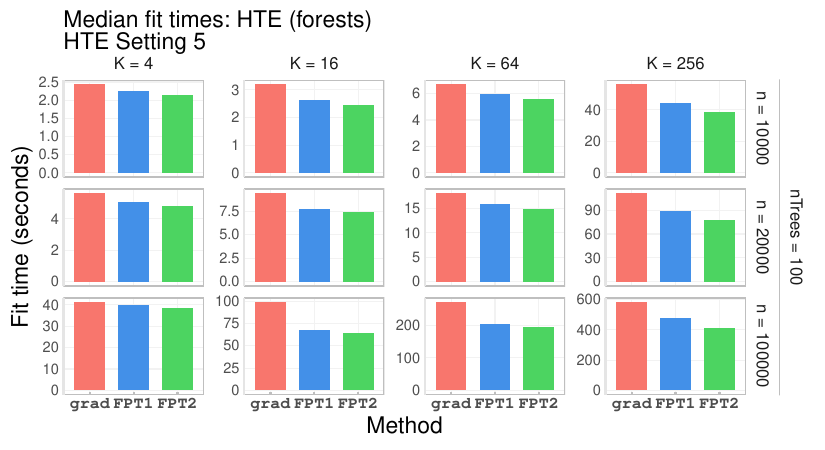}
    \caption{Absolute fit times for HTE timing experiments under the settings in  Table~\ref{tbl:hte-sim-settings} and large-$n$ settings in Table~\ref{tbl:sim_params}.}\label{fig:timing-hte}
\end{figure}

\begin{figure}[h!]
    \centering
    \includegraphics[width=0.9\linewidth]{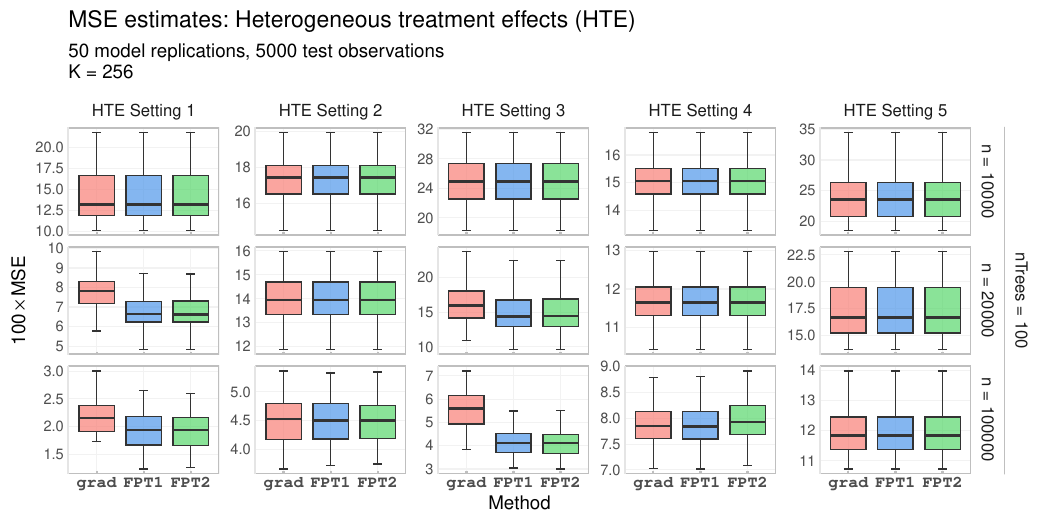}
    \includegraphics[width=0.9\linewidth]{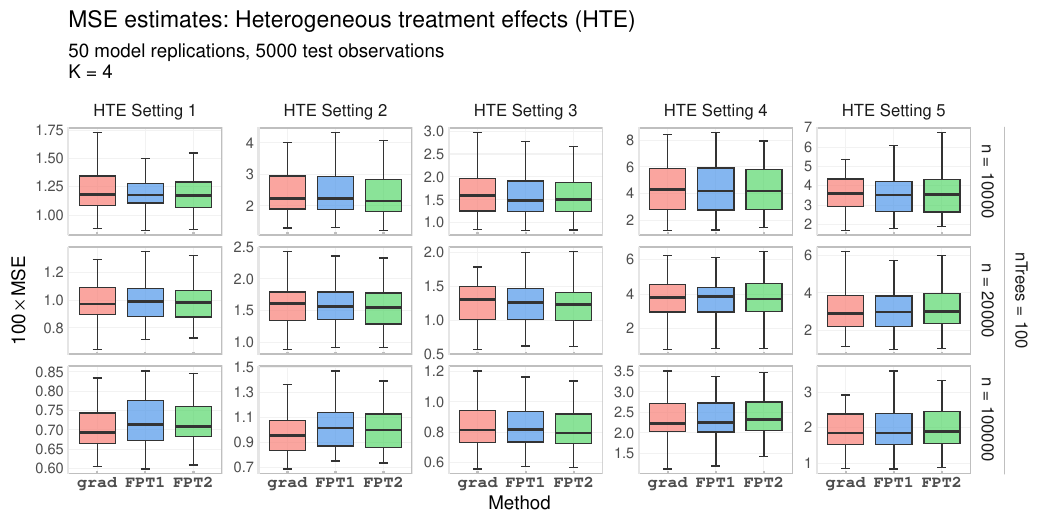}
    \caption{Estimates of MSE $\bbE[\|\theta^*(X) - \hat\theta(X)/K\|^2_2]$ for HTE for $K = 256$ dimensional (top) and $K = 4$ dimensional targets (bottom) under the large-$n$ settings in Table~\ref{tbl:sim_params}.}\label{fig:forest-mse-hte}
\end{figure}

\begin{figure}[h!]
    \centering
    \includegraphics[width=0.75\linewidth]{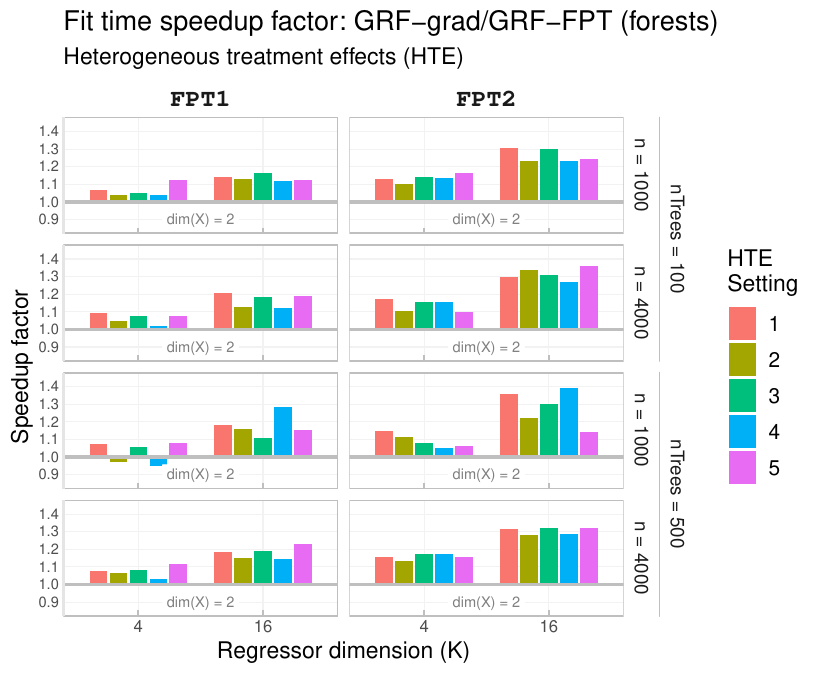}
    \caption{Speedup factor for GRF-$\FPT$ in comparison to GRF-$\grad$ for HTE timing experiments under the small-$n$ settings in Table~\ref{tbl:sim_params}.}\label{fig:timing-ratio-hte-small}
\end{figure}

\begin{figure}[h!]
    \centering
    \includegraphics[width=0.34\linewidth]{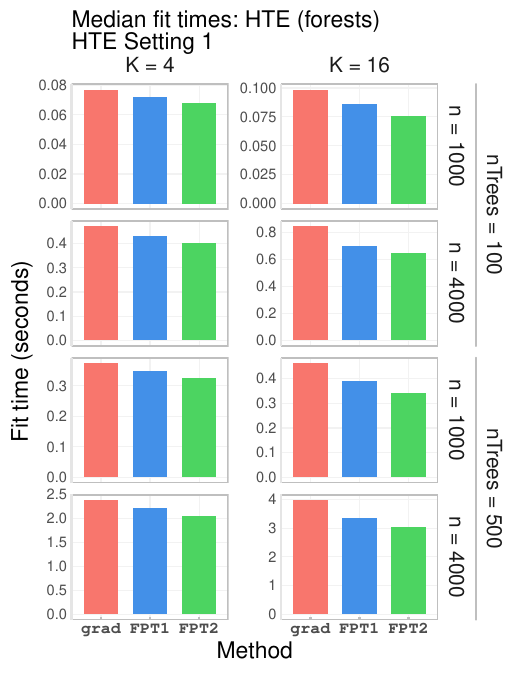}
    \includegraphics[width=0.34\linewidth]{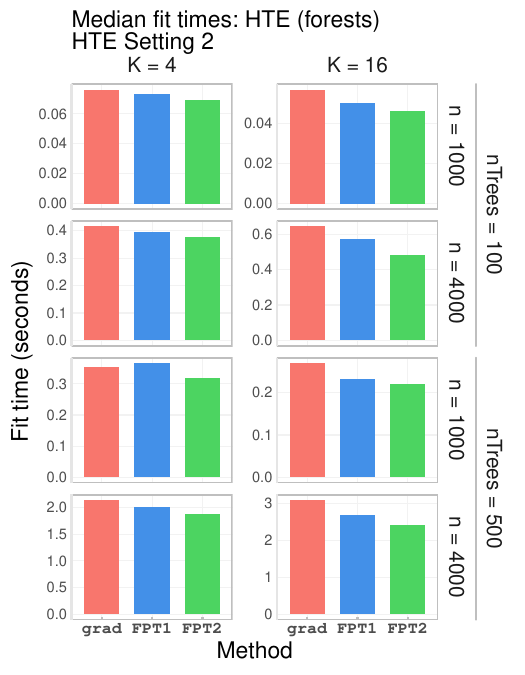}

    \includegraphics[width=0.34\linewidth]{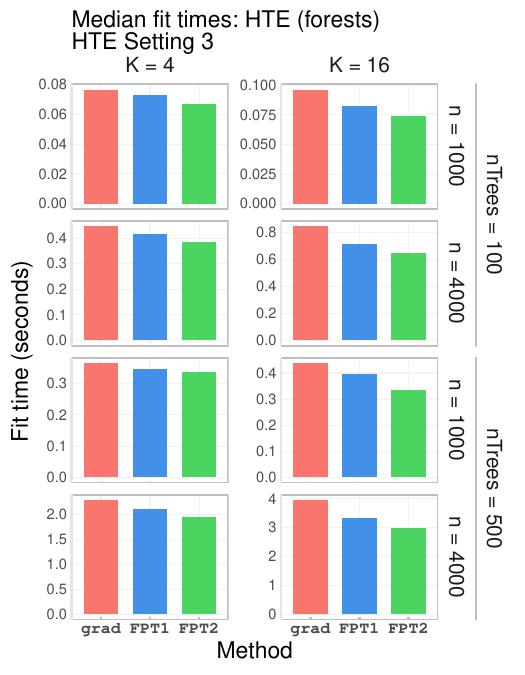}
    \includegraphics[width=0.34\linewidth]{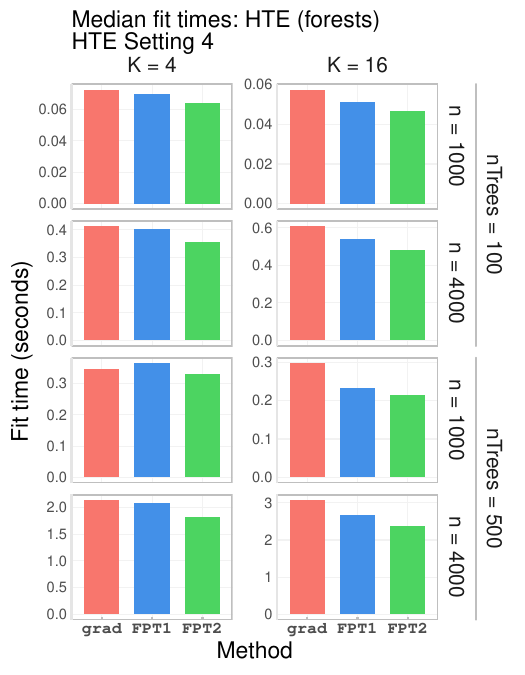}

    \includegraphics[width=0.34\linewidth]{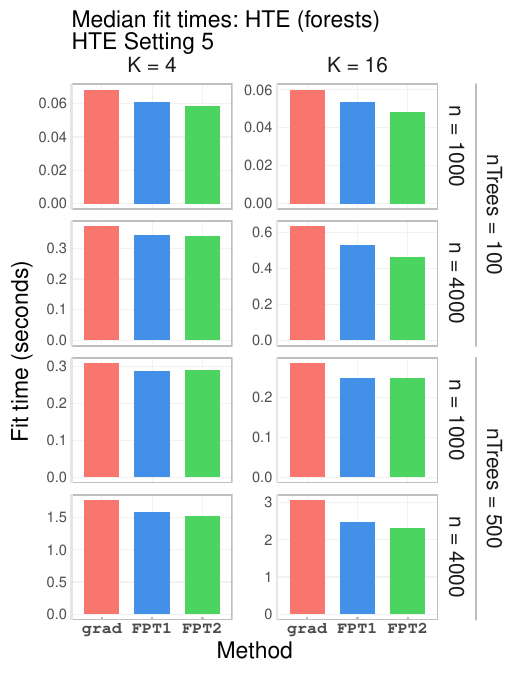}
    \caption{Absolute fit times for HTE timing experiments under the settings in  Table~\ref{tbl:hte-sim-settings} and small-$n$ settings in Table~\ref{tbl:sim_params}.}\label{fig:timing-hte-small}
\end{figure}

\begin{figure}[h!]
    \centering
    \includegraphics[width=0.9\linewidth]{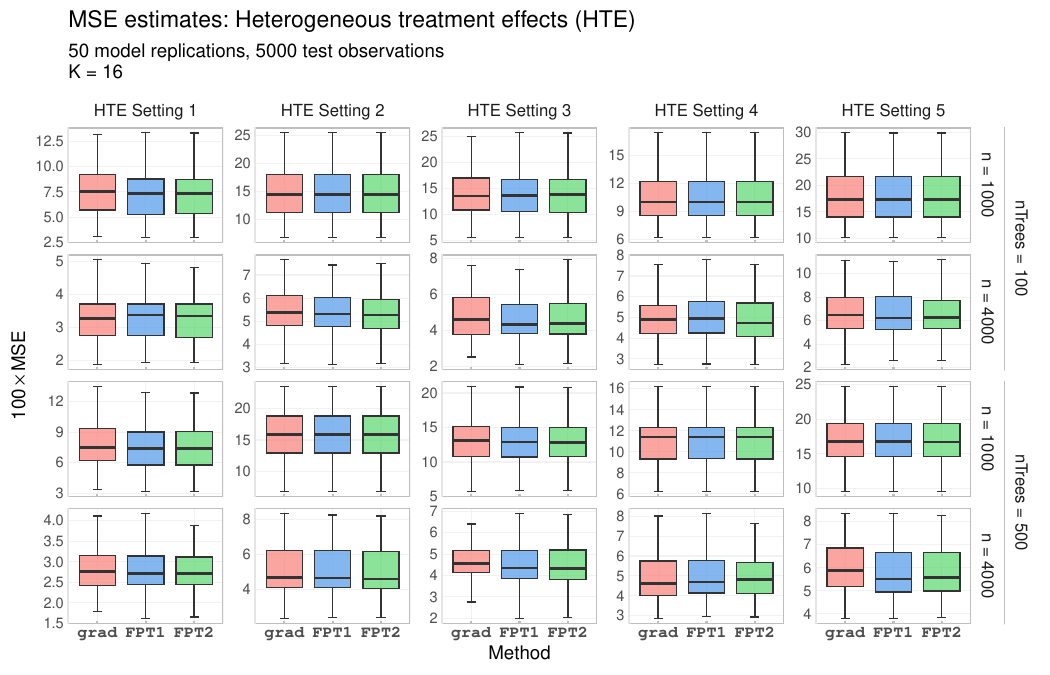}
    \includegraphics[width=0.9\linewidth]{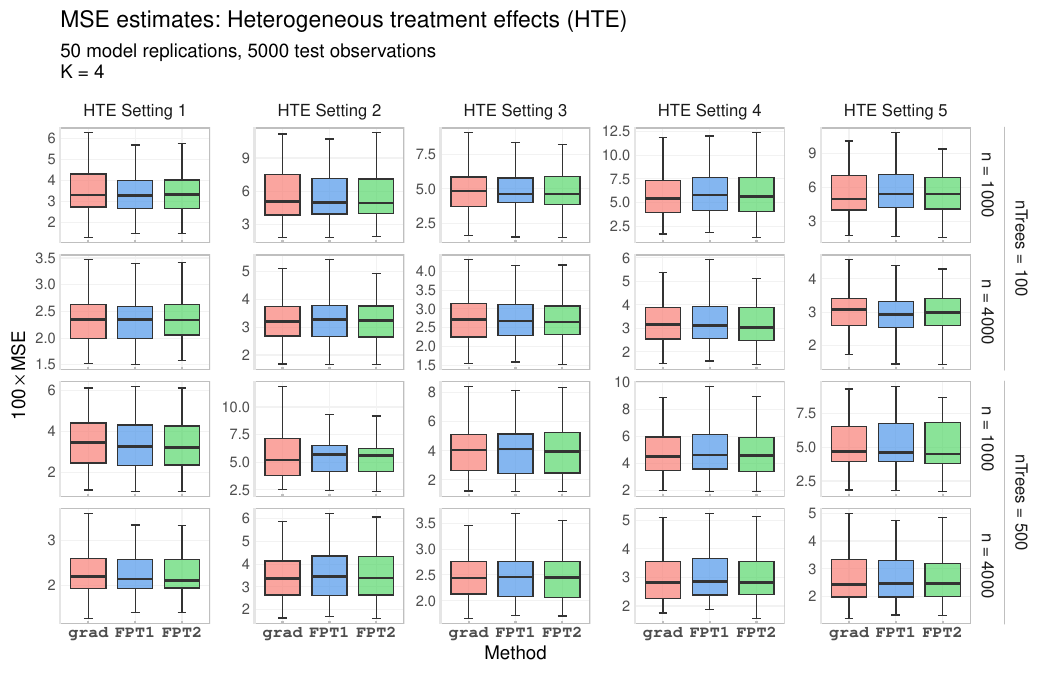}
    \caption{Estimates of MSE $\bbE[\|\theta^*(X) - \hat\theta(X)/K\|^2_2]$ for HTE for $K = 16$ dimensional (top) and $K = 4$ dimensional targets (bottom) under the small-$n$ settings in Table~\ref{tbl:sim_params}.}\label{fig:forest-mse-hte-small}
\end{figure}

\section{Additional Examples}\label{app:additional-examples}

\subsection{Pseudo-outcomes for nonparametric regression}\label{app:pseudo-outcomes-nonparam-regression}

Consider the task of estimating the conditional mean function $\theta^*(x) \coloneqq \bbE[Y|X=x]$. The target $\theta^*(x)$ is identified by a moment condition of the form \eqref{eqn:est-eqn} with scoring function $\psi_{\theta}(Y_i) \coloneqq Y_i - \theta$, the residual associated with using $\theta$ as the local estimate with respect to the $i$-th sample. The local solution $\hat\theta_P$ over $P$ is the mean observed response over the parent,
\begin{equation*}
    \hat\theta_P = \overline Y_P \coloneqq \frac{1}{n_P}\sum_{\{i:X_i\in P\}} Y_i.
\end{equation*}
The fixed-point pseudo-outcomes are simple the (negative) residuals that result from fitting \eqref{eqn:local-est-P} over $P$
\begin{equation*}
    \rho^\FPT_i = -(Y_i - \hat\theta_P) = -(Y_i - \overline Y_P).
\end{equation*}
The gradient of the score function is $\nabla_\theta\psi_{\theta}(y) = -1$, and hence $A_P = -1$. Therefore, up to a constant factor, the gradient-based pseudo-outcomes $\rho_i^\grad$ for conditional mean estimation reduce to their fixed-point counterparts $\rho_i^\FPT$,
\begin{equation*}
    \rho_i^\grad = -A_P^{-1}\psi_{\hat\theta_P}(Y_i) = Y_i - \overline Y_P = -\rho_i^\FPT.
\end{equation*}

In this special case, we recover the conventional splitting algorithm used for univariate responses \citep{breiman1984classification, breiman2001random} or for multivariate responses \citep{de2002multivariate, segal1992tree}. Trees grown using $\rho_i^\FPT$, $\rho_i^\grad$, or $Y_i$ will be identical to one another because CART splits are scale and translation invariant with respect to the response.

More generally, for targets $\theta^*(x)$ beyond the conditional mean, the form of $\rho_i^\grad$ will be equivalent to $\rho_i^\FPT$ whenever the target function $\theta^*:\cX\to\Theta$ is a map from the input space $\cX$ a one-dimensional parameter space $\Theta$.

\section{Real Data Comparison: California Housing}\label{app:california-data}

\paragraph{Data.} The California housing data appeared in \citet{kelley1997sparse} and can be directly obtained from the Carnegie Mellon StatLib repository (\url{https://lib.stat.cmu.edu/datasets/houses.zip}). 
The data includes 20640 observations, where each observation corresponds to measurements over an individual census block group in California taken from the 1990 census. A census block is the smallest geographical area for which the U.S. Census Bureau publishes sample data, typically with a population between 600 and 3000 people per block. Each observation from the California housing data set contains measurements of 9 variables: median housing value (dollars), longitude, latitude, median housing age (years), total rooms (count, aggregated over the census block), total bedrooms (count, aggregated over the census block), population (count), households (count), median income (dollars).

\paragraph{Model.} We consider a varying coefficient model of the form
\begin{equation}\label{eqn:california-housing-model}
    Y_i = \nu^*(X_i) + \theta^*_1(X_i) W_{i,1} + \cdots + \theta^*_6(X_i) W_{i,6} + \epsilon_i
\end{equation}
where we suppose that our effects are local to spatial coordinates $x := (\texttt{latitude}_i,\texttt{longitude}_i)$, $Y_i$ denotes the log median housing value of the census block, and the primary regressors $W_i = (W_{i,1},\ldots,W_{i,6})$ passed to the model were median housing age, log(total rooms), log(total bedrooms), log(population), log(households), and log(median income). Here, each $\theta^*_k(x)$ denotes the geographically-varying effect of the corresponding regressor $W_{i,k}$, for $k = 1,\ldots,6$. The empirical distribution of the transformed regressors passed to each of the GRF models is seen in Figure~\ref{fig:california-housing-regressors}.

\paragraph{Algorithms.} We target GRF estimates $\hat\theta(x) = (\hat\theta_1(x),\ldots,\hat\theta_6(x))^\top$ of $\theta^*(x) = (\theta^*_1(x),\ldots,\theta^*_6(x))^\top$ based on the GRF-$\FPTone$ and GRF-$\FPTtwo$ algorithms described in Section~\ref{sec:simulations}, and compare those to GRF-$\grad$. All forests were fit using the \texttt{grf::lm\_forest} function, which trains the Stage I forest and optionally solves for the Stage II estimates $\hat\theta(x)$ for varying coefficient models \eqref{eqn:california-housing-model}. All versions fit a forest of 2000 trees, the default settings of the original \texttt{R} implementation \citep{grfpackage}, a subsample ratio of $0.5$, and a target minimum node size of 5 observations.

\begin{table}[h!]
\centering
\begin{tabular}{lcc}
    \toprule
    Algorithm & Training time (sec.) & Speedup factor \\
    \hline
    GRF-$\grad$ & 19.1 & \\
    GRF-$\FPTone$ & 15.4 & 1.24\\
    GRF-$\FPTtwo$ & 12.6 & 1.52 \\
    \bottomrule
\end{tabular}
\caption{Fit times to train a forest of 2000 trees on the California housing data.}
\label{tbl:california-housing-times}
\end{table}

\paragraph{Results.} Table~\ref{tbl:california-housing-times} summarizes the computational benefit of GRF-$\FPT$ applied to the California housing data. Figures~\ref{fig:california-housing-fpt2} illustrates the local estimates $\hat\theta(x)$ made by GRF-$\FPTtwo$, while Figure~\ref{fig:california-housing-grad-fpt1} illustrates the fits under GRF-$\FPTone$ and GRF-$\grad$.

\begin{figure}
    \centering
    \makebox[\textwidth][c]{\includegraphics[width=0.65\textwidth]{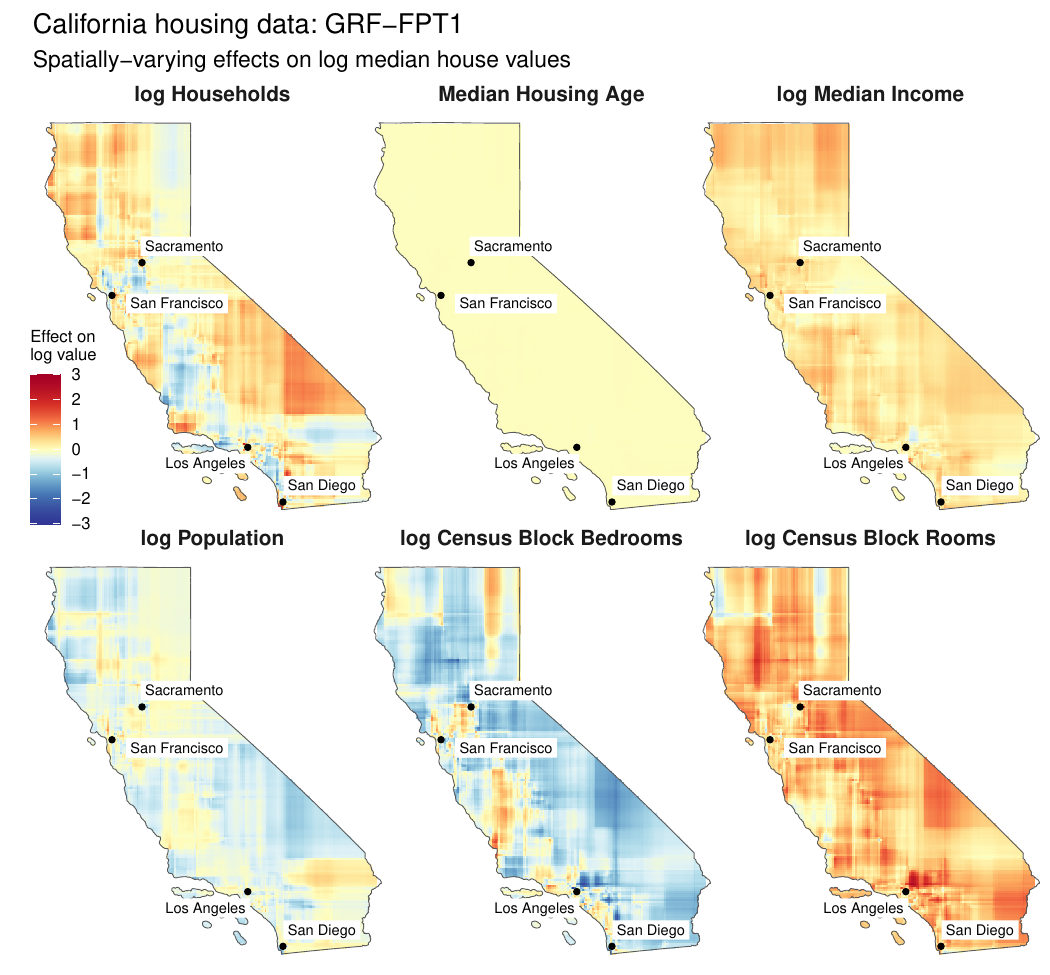}}
    \makebox[\textwidth][c]{\includegraphics[width=0.65\textwidth]{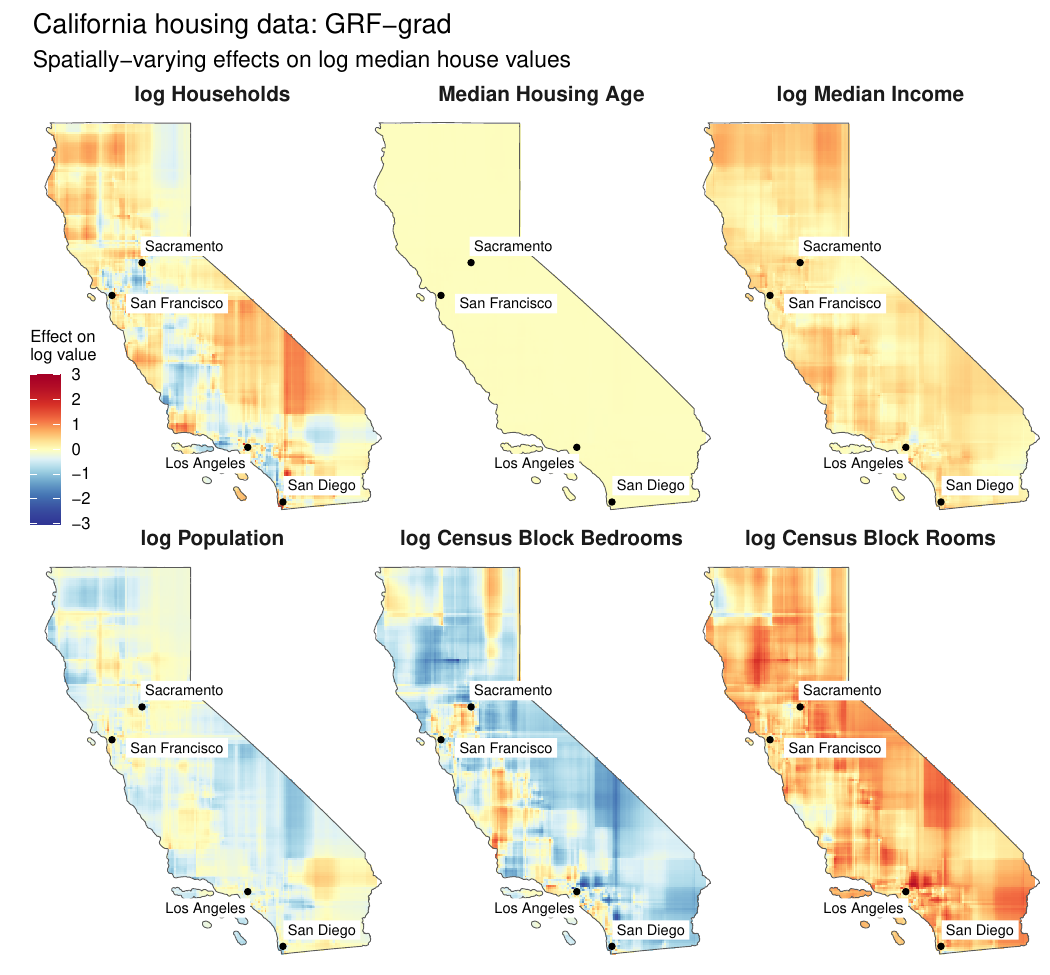}}
    \caption{Geographically-varying local estimates $\hat\theta(x) = (\hat\theta_1(x),\ldots,\hat\theta_6(x))$, fit under GRF-$\FPTone$ (top) and GRF-$\grad$ (bottom). Results for GRF-$\FPTtwo$ are presented in Figure~\ref{fig:california-housing-fpt2} found in Section~\ref{sec:real-data}.}
    \label{fig:california-housing-grad-fpt1}
\end{figure}

\begin{figure}
    \centering
    \makebox[\textwidth][c]{\includegraphics[width=0.9\textwidth]{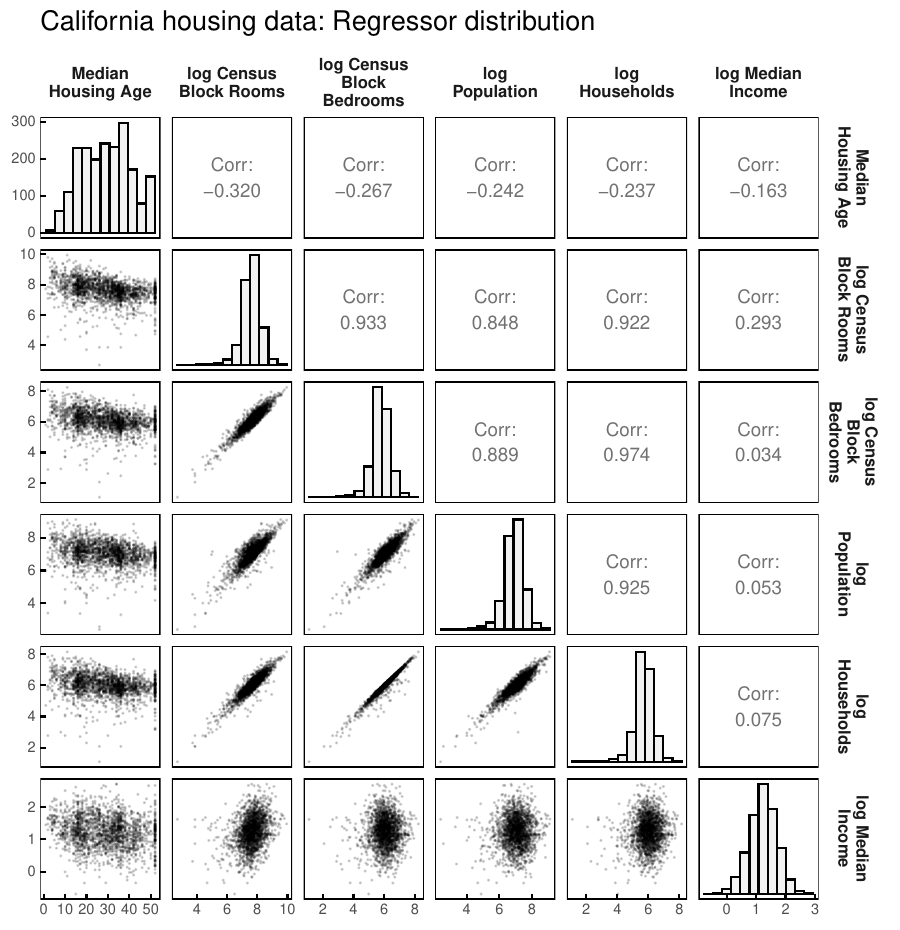}}
    \caption{Empirical distribution of the regressors from the California housing data passed to GRF.}
    \label{fig:california-housing-regressors}
\end{figure}

\end{document}